\documentclass{article}
\usepackage{amssymb}
\usepackage{amsthm}
\usepackage{iclr2026_conference, times}

\usepackage{hyperref}       %
\usepackage{url}            %
\usepackage{amsmath,amsfonts,bm}
\usepackage{cleveref}
\usepackage{thm-restate}
\usepackage{bbm}
\usepackage{mathtools}
\usepackage{graphicx}

\usepackage[old]{old-arrows}
\usepackage{etoc}
\usepackage{setspace}
\usepackage{xspace}

\usepackage{color}

\definecolor{niceRed}{RGB}{190,38,38}
\definecolor{Red2}{RGB}{219, 50, 54}
\definecolor{mgreen}{RGB}{160, 200, 140}
\definecolor{blueGrotto}{RGB}{5,157,192}
\definecolor{limeGreen}{HTML}{81B622}
\definecolor{myellow}{rgb}{0.88,0.61,0.14}
\definecolor{darkGreen}{HTML}{2E8B57}
\definecolor{navyBlueP}{HTML}{03468F}
\definecolor{Sepia}{HTML}{7F462C}
\definecolor{red2}{HTML}{1F462C}
\definecolor{orange2}{HTML}{FF8000}
\definecolor{mgray}{HTML}{ABB3B8}
\definecolor{lgray}{HTML}{E5E8E9}
\definecolor{myPurple}{RGB}{175,0,124}
\definecolor{mypurple2}{rgb}{0.8,0.62,1}
\definecolor{royalBlue}{HTML}{057DCD}
\definecolor{mpink}{HTML}{FC6C85}
\definecolor{lblue}{RGB}{74,144,226}
\definecolor{peagreen}{RGB}{152,193,39}
\definecolor{typ_navy}{HTML}{001f3f}
\definecolor{typ_blue}{HTML}{0074d9}
\definecolor{typ_aqua}{HTML}{7fdbff}
\definecolor{typ_teal}{HTML}{39cccc}
\definecolor{typ_eastern}{HTML}{239dad}
\definecolor{typ_purple}{HTML}{b10dc9}
\definecolor{typ_fuchsia}{HTML}{f012be}
\definecolor{typ_maroon}{HTML}{85144b}
\definecolor{typ_red}{HTML}{ff4136}
\definecolor{typ_orange}{HTML}{ff851b}
\definecolor{typ_yellow}{HTML}{ffdc00}
\definecolor{typ_olive}{HTML}{3d9970}
\definecolor{typ_green}{HTML}{2ecc40}
\definecolor{typ_lime}{HTML}{01ff70}
\definecolor{newgreen}{HTML}{83c702}
\definecolor{newpurp}{RGB}{97,96,121}

\hypersetup{
	colorlinks = true,
	linkcolor = red,
	citecolor = royalBlue,
	linktocpage = true,
	urlcolor = royalBlue
}

\newtheorem{lemma}{Lemma}
\newtheorem{assumption}{Assumption}
\newtheorem{theorem}{Theorem}

\newtheorem{proposition}{Proposition}

\newcommand{\E}{\mathbb{E}}

\newcommand{\Reals}{\mathbb{R}}
\newcommand{\Naturals}{\mathbb{N}}
\newcommand{\Mcal}{\mathcal{M}}
\newcommand{\Xcal}{\mathcal{X}}
\newcommand{\Fcal}{\mathcal{F}}

\newcommand{\Ical}{\mathcal{I}}

\newcommand{\Ecal}{\mathcal{E}}
\newcommand{\cl}{\textup{cl}}

\newcommand{\quotes}[1]{``#1''}

\newcommand{\argmax}{\operatornamewithlimits{argmax}}

\newcommand{\Prob}{\mathbb{P}}

\newcommand{\bmMuHat}{\hat{\bm\mu}}
\newcommand{\bmMuTilde}{\tilde{\bm\mu}}
\newcommand{\muMin}{\mu_{\textup{min}}}
\newcommand{\muMax}{\mu_{\textup{max}}}

\makeatletter
\newcommand{\mapstoto}{\mathpalette\@mapstoto\relax}
\newcommand*{\@mapstoto}[2]{%
    \mathrel{%
      \vcenter{%
         \vbox{%
            \baselineskip\z@skip
            \lineskip\z@
            \ialign{##\cr$#1\mapstochar\varrightarrow$\cr
            $#1\mapstochar\varrightarrow$\cr}%
         }%
      }%
   }%
}
\makeatother

\newcommand{\ie}{\emph{i.e.},\xspace}
\newcommand{\eg}{\emph{e.g.},\xspace}

\crefname{assumption}{Assumption}{Assumptions}

\newcommand{\tas}{\textsc{TaS}\xspace}
\newcommand{\stas}{\textsc{S-TaS}\xspace}

\title{Non-Asymptotic Analysis of (Sticky) Track-and-Stop}

\author{Riccardo Poiani, Martino Bernasconi \& Andrea Celli\\
Bocconi University\\
Milan, Italy\\
\texttt{\{riccardo.poiani,martino.bernasconi,andrea.celli2\}@unibocconi.it} 
}

\iclrfinalcopy

\begin{document}
\maketitle
\begin{abstract}
    In pure exploration problems, a statistician sequentially collects information to answer a question about some stochastic and unknown environment. The probability of returning a wrong answer should not exceed a maximum risk parameter $\delta$ and good algorithms make as few queries to the environment as possible.
    The Track-and-Stop algorithm is a pioneering method to solve these problems. Specifically, it is well-known that it enjoys asymptotic optimality sample complexity guarantees for $\delta\to 0$ whenever the map from the environment to its correct answers is single-valued (\eg best-arm identification with a unique optimal arm).   
    The Sticky Track-and-Stop algorithm extends these results to settings where, for each environment, there might exist multiple correct answers (\eg $\epsilon$-optimal arm identification). 
    Although both methods are optimal in the asymptotic regime, their non-asymptotic guarantees remain unknown.
    In this work, we fill this gap and provide non-asymptotic guarantees for both algorithms. 
\end{abstract}

\section{Introduction}
In pure exploration problems, a statistician interacts with a set of $K \in \Naturals$ probability distributions denoted by $\bm\varphi = \left\{\varphi_i \right\}_{i\in[K]}$, commonly referred to as arms. Their unknown means are denoted by $\bm\mu = \left\{\mu_i \right\}_{i\in[K]}$, and $\bm\mu$ belongs to some set $\Mcal \subseteq \Reals^K$ which encodes some possibly known structure among the different arms, \eg Lipschitzianity \citep{wang2021fast} or unimodality \citep{poiani2024best}. During each step $t \in \Naturals$, the statistician chooses an arm $A_t$, and observes a sample $X_t \sim \varphi_{A_t}$ from the corresponding reward distribution. Given a maximum risk parameter $\delta \in (0,1)$, the statistician aims to answer a question about the unknown means $\bm\mu$ while using as few samples as possible. Specifically, there is a known answer space $\Ical$ and a (set-valued) answer function $i^{\star}(\bm\mu)$ that maps each bandit $\bm\mu$ to a subset of correct answers $i^{\star}(\bm\mu)$ within $\Ical$. The probability of returning an answer that does not belong to $i^{\star}(\bm\mu)$ should not exceed $\delta$. 

The most studied pure exploration problem is Best-Arm Identification \citep[BAI,][]{garivier2016optimal}, where the answer space is $\{1, \dots, K\}$ and the goal is to return the unique index of the arm with the highest mean, \ie $i^{\star}(\bm\mu) = \argmax_{k \in [K]} \mu_k$. The pioneering work by \citet{garivier2016optimal} developed a tight lower bound for the unstructured BAI problem and proposed the Track-and-Stop (\tas) algorithm to solve it. Remarkably, the expected number of samples required by \tas to identify $i^{\star}(\bm\mu)$ with high probability exactly matches the lower bound as $\delta$ approaches $0$. In this sense, \tas is asymptotically optimal for $\delta \to 0$. These results have been shown to hold even outside the BAI setting. Indeed, they extend to the more general structured partition identification problem \citep{kaufmann2021mixture}. Here, $\Mcal$ is partitioned into a finite number $|\Ical|$ of disjoint subsets, \ie $\Mcal = \bigcup_{i \in \Ical} \Mcal_i$, and the statistician aims to find $i$ such that $\bm\mu \in \Mcal_i$. In this sense, \tas turned out to be a powerful tool that can be used to solve a wide variety of problems (\ie all the problems where $i^{\star}(\bm\mu)$ is unique) while enjoying asymptotic optimality. If there exists multiple correct answers for a certain bandit $\bm\mu$, \ie $i^{\star}(\bm\mu)$ is multi-valued, \tas fails to achieve optimality \citep{degenne2019pure}. This is the case, for instance, of the $\epsilon$-best arm identification problem, where multiple arms $j$ might satisfy $\mu_j \ge \argmax_{k \in [k]} \mu_k - \epsilon$. To solve this issue, \citet{degenne2019pure} proposed a modification of the \tas algorithm called Sticky Track-and-Stop (\stas). \stas enjoys asymptotic optimality for any pure exploration problem with multiple correct answers.

Due to their generality and strong theoretical guarantees, both \tas and \stas have become fundamental algorithms. 
However, as noted in several works, the analysis of \tas is asymptotic in nature and does not offer insights into its non-asymptotic behavior \citep{barrier2022non,barrier2023contributions,jourdan2023non,poiani2024best,russo2025pure}. \citet{barrier2023contributions} suggest that this difficulty arises from instability in the sampling rule when data are scarce: the sampling rule employed by \tas can vary significantly early on, when estimated means have not yet concentrated around their expectations. The problem is even more pronounced for \stas. Intuitively, this is due to the fact that the algorithm is more complex, and that it samples the next arm in a \tas fashion. 

In this work, we address the following question:
\addtolength\leftmargini{-1.0cm}
\begin{quote}\centering\itshape
Can we characterize the non-asymptotic guarantees of \tas and \stas?
\end{quote}
We answer this question by providing the first non-asymptotic bounds for both Track-and-Stop and Sticky Track-and-Stop, shedding light on their behavior in the finite-confidence setting. Our results hold for arbitrarily structured problems and correct answer correspondences. 

Importantly, from an empirical side, the performance of \tas has been benchmarked several times \citep[\eg][]{degenne2019non,wang2021fast,jourdan2022top,barrier2022non}. The results have consistently shown that \tas obtains highly competitive sample complexity even in the moderate regime of $\delta$. Thus, the theoretical findings that we present here \emph{complement} these results, showing that \tas also enjoys finite-confidence guarantees. Furthermore, since \stas is, to the best of the authors' knowledge, the only algorithm in the literature that can solve arbitrary multiple answer problems, our work also provides the \emph{first finite-confidence analysis for this general class of problems}.

\subsection{Related Work}
\paragraph{Single-Answer Problems and \tas}
Since the work by \citet{garivier2016optimal}, which studied the unstructured bandit problem, several works have extended \tas to several structured problems \citep[\eg][]{moulos2019optimal,juneja2019sample,kocak2020best,kaufmann2021mixture,poiani2024best,kanarios2024cost}. 
In these works, the optimality analysis of \tas remained asymptotic, as they build upon the approach of \citet{garivier2016optimal}.
Beyond \tas, several other algorithms have been proposed in the literature that achieve asymptotic optimality for best-arm identification and/or single-answer problems \citep{degenne2019non,menard2019gradient,wang2021fast,barrier2022non,jourdan2022top,jourdan2023non}. Moreover, some of these works also establish non-asymptotic upper bounds on the expected number of samples required for their algorithm to stop \citep{degenne2019non,barrier2022non,jourdan2023non,wang2021fast}.
Among these studies, the works most closely related to ours are \citet{degenne2019non} and \citet{barrier2022non}, which propose two distinct approaches that solve the aforementioned instability issue of \tas. Precisely, \citet{degenne2019non} propose an optimistic version of \tas that incorporates confidence intervals within the sampling rule. This comes at the cost of solving a significantly more challenging optimization problem in deciding the next arm to query. In particular, while \tas bases its decision rule on a maximization problem of a convex function over the simplex, its optimistic version requires solving a max-max-min problem which, in general, does not admit efficient oracles.
\citet{barrier2022non}, instead, stabilizes the sampling rule by \quotes{skewing} its behavior toward uniform exploration when the amount of data collected is scarce. This modification, however, can lead to a decrease in the empirical performance w.r.t. the original version. Indeed, \citet{barrier2022non} show that the original version of \tas achieves better results in all instances in which the algorithms were compared. Hence, given these remarks, it remains important to understand whether finite confidence guarantees can be obtained for the original version of \tas.
In this sense, we note that both the analyses of \citet{degenne2019non} and \citet{barrier2022non} rely on specific properties of the algorithms they introduce, which makes them different from the original framework of \citet{garivier2016optimal}. To the best of our knowledge, our work is the first to provide non-asymptotic guarantees for the original \tas algorithm without requiring any substantial modifications to it.

\paragraph{Multiple-Answer Problems and \stas}
When there are multiple correct answers, \stas \citep{degenne2019pure} offers a solution to pure exploration problems while enjoying asymptotic optimality. 
However, to the best of our knowledge, there is no variant of \stas that achieves finite-confidence guarantees in arbitrary multiple-answer problems. More broadly, we are not aware of any algorithm providing finite-confidence guarantees for general multiple-answer problems. Prior work has largely focused on \emph{specific subclasses} of pure exploration problems. Among these, the most studied one is $\epsilon$-best arm identification \citep[e.g.,][]{even2002pac,kalyanakrishnan2012pac,karnin2013almost,kocak2021epsilon,jourdan2023varepsilon,jin2024optimal}. Nonetheless, none of these studies can be applied to the general pure exploration setting of \citet{degenne2019pure}.

\section{Background}\label{sec:background}

We focus on bandit problems $\bm\varphi = \{ \varphi_k\}_{k \in [K]}$ with $K \in \mathbb{N}$ arms, where $\varphi_k$ is a probability distribution with mean $\mu_k$. We denote by $\bm\mu = \{ \mu_k \}_{k \in [K]}$ the vector of the means of the distributions. 
As usual in the literature (see \eg \cite{garivier2016optimal,degenne2019non}), we focus on distributions that belong to a canonical exponential family\footnote{These include Gaussian with known variance and Bernoulli distributions. See \citet{cappe2013kullback}.} 
It is well known that such distributions are fully characterized by their means. For convenience, we will refer to a bandit model $\bm\varphi$ directly by its vector of means $\bm\mu$.
We denote by $\Theta \subseteq \mathbb{R}$ an open set that defines the possible means of the distributions. We consider the general case where $\bm\mu \in \Mcal \subseteq \Theta^K$. This allows to include in our analysis also structured settings such as Lipschitz \citep{wang2021fast} or unimodal bandits \citep{poiani2024best}. Indeed, since $\mathcal{M}$ is any subset of $\Theta^K$, it can directly encode the constraints imposed by these structures.\footnote{For completeness, we show in \Cref{app:structured} how to encode these structures through $\mathcal{M}$.} Moreover, we assume a finite answer space $\Ical$, along with access to a set-valued function $i^{\star}: \Mcal \mapstoto \Ical$, which maps each model $\bm\mu \in \Mcal$ to the set of all the answers that are correct for the bandit instance $\bm\mu$.

At each step $t\in\Naturals$, the learner chooses an action $A_t\in[K]$ and observes a sample $X_t\sim\varphi_{A_t}$. Let $\Fcal_t=\sigma(A_1, X_1, \dots A_t, X_t)$ be the $\sigma$-field generated by the interactions with the bandit model up to time $t$. Then, a pure exploration algorithm receives as input a confidence level $\delta \in (0, 1)$, and implements the following procedures: (i) a $\Fcal_{t-1}$-measurable \emph{sampling rule} which selects the action $A_t\in [K]$ based on the past observations, (ii) a \emph{stopping rule} $\tau_\delta$ which is a stopping time w.r.t.  $(\Fcal_t)_{t\in\Naturals}$ and controls the end of the data acquisition phase, and (iii) a  $\Fcal_{\tau_\delta}$-measurable \emph{recommendation rule} $\hat{\imath}_{\tau_\delta} \in \Ical$ that denotes the guess of the statistician for a correct answer for $\bm\mu$.
A pure exploration algorithm is \emph{$\delta$-correct} on $\mathcal{M}$ if it satisfies $\Prob_{\bm\mu}(\hat \imath_{\tau_{\delta}} \notin i^{\star}(\bm\mu))\le\delta$ for all $\bm\mu\in\Mcal$. The goal is building algorithms which are $\delta$-correct and that minimize the expected stopping time, \ie $\mathbb{E}_{\bm{\mu}} \left[ \tau_\delta \right] = \sum_{k \in [K]} \mathbb{E}_{\bm{\mu}}[N_{k}(\tau_\delta)]$,
where $N_{k}(t)$ is the number of samples collected for arm $k \in [K]$ up to time $t$. In the following, we denote by $\bm{N}(t)$ the vector $\left( N_1(t), \dots, N_K(t) \right)$.

\paragraph{Additional Notation} 
For a given set $\Xcal$, we denote by $\cl(\Xcal)$ its closure. Furthermore, for all $i \in \Ical$, we denote by $\lnot i = \{ \bm\lambda \in \Mcal: i \notin i^{\star}(\bm\lambda) \}$. In words, $\lnot i$ represents the set of bandit models for which $i$ is not a correct answer. Without loss of generality, we assume that for all $\bm\mu \in \Mcal$, there exists $i \in i^{\star}(\bm\mu)$ such that $\bm\mu \notin \cl(\neg i)$.\footnote{It is easy to verify from the lower bounds that, whenever this assumption is not satisfied, one obtains infinite sample complexity. This requirement is usually implicitly satisfied in the literature, \eg $\argmax_{k \in [K]} \mu_k$ is unique over $\Mcal$ in best-arm identification problems \citep{degenne2019pure}, and the different sets $\Mcal_j$ are \emph{open} and disjoint in the partition identification problem \citep{wang2021fast}.}
Furthermore, for distributions with means $p$ and $q$, we write $d(p,q)$ to denote their KL divergence. Moreover, for a distribution with mean $p$, we denote by $\nu_p$ the corresponding natural parameter within the exponential family. For $n \in \Naturals$, $\Delta_n$ denotes the $n$-dimensional simplex. 
Finally, consider two topological spaces $\Xcal$ and $\mathcal{Y}$, and consider a set-valued function $F: \mathcal{X} \mapstoto \mathcal{Y}$ that maps each $x \in \mathcal{X}$ to $F(x) \subseteq \mathcal{Y}$. We say that $F$ is upper hemicontinuous, if for all $x \in \Xcal$ and any open set $\mathcal{V} \subseteq \mathcal{Y}$ such that $F(x) \subseteq \mathcal{V}$, there exists a neighbourhood $\mathcal{U}$ of $x$ such that, for all $x' \in \mathcal{U}$, $F(x')$ is a subset of $\mathcal{Y}$ \citep{aubin1999set}.

\paragraph{Lower Bound for Single-Answer Problems}
Let us focus on the case where $i^{\star}(\bm\mu)$ is unique for all $\bm\mu \in \Mcal$. Lower bounds for these problems can be derived following the arguments presented in \cite{garivier2016optimal}. Specifically, for any $\delta$-correct algorithm, it holds that $\E_{\bm\mu}[\tau_{\delta}] \ge T^{\star}(\bm\mu) \log(1/(2.4\delta))$ (see \Cref{app:lb} for a formal statement), where:\begin{align}
    T^{\star}(\bm\mu)^{-1} & = \sup_{\bm\omega \in \Delta_K} \inf_{\bm\lambda \in \lnot i^{\star}(\bm\mu)} \sum_{k \in [K]} \omega_k d(\mu_k, \lambda_k) \label{eq:t-star} \\ 
    & = \sup_{\bm\omega \in \Delta_K} \max_{i \in \Ical} \inf_{\bm\lambda \in \neg i} \sum_{k \in [K]} \omega_k d(\mu_k, \lambda_k). \label{eq:t-star-v2}
\end{align}
$T^{\star}(\bm\mu)^{-1}$ can be interpreted as a max-min game where the max player plays a sampling strategy $\bm\omega$ to quickly identify the correct answer $i^{\star}(\bm\mu)$, and the min-player chooses a confounding instance $\bm\lambda \in \mathcal{M}$ where the correct answer changes \citep{degenne2019non}. The convex set of weights that attains the argmax in $T^{\star}(\bm\mu)^{-1}$ are denoted by $\bm\omega^{\star}(\bm\mu)$ and takes the name of \emph{oracle weights}. Here, convexity simply follows from the fact $T^{\star}(\bm\mu)^{-1}$ is a supremum over functions that are linear in $\bm\omega$. We note that we provided two expressions for $T^{\star}(\bm\mu)^{-1}$. \Cref{eq:t-star} is the one that most frequently appears in the literature (\eg \cite{garivier2016optimal}). \Cref{eq:t-star-v2} is a rewriting that allows to generalize the expression of $T^{\star}(\bm\mu)^{-1}$ to bandit models $\bm\mu$'s that fall outside $\Mcal$, as $i^{\star}(\bm\mu)$ is formally defined only for $\bm\mu \in \Mcal$. This is important from an algorithmic perspective as it allows us to generalize the definition of oracle weights to models that are not in $\Mcal$.\footnote{Indeed, we observe that an empirical estimate of $\bm\mu$ might not belong to $\Mcal$.} Furthermore, as we shall see, it will also play an important role in our analysis. Finally, since the problem is single-answer, then for all $\bm\mu \in \Mcal$ the $\argmax$ over the different answers is attained only at $i = i^{\star}(\bm\mu)$. 

\paragraph{Lower Bound for Multiple-Answer Problems}
Lower bounds for multiple-answer problems were established by \citet{degenne2019pure}. Specifically, the authors shows that, for any $\delta$-correct algorithm and any $\bm\mu \in \Mcal$, it holds that $\liminf_{\delta \to 0} \frac{\E_{\bm\mu}[\tau_\delta]}{\log(1/\delta)} \ge  T^{\star}(\bm\mu)$,
where $T^{\star}(\bm\mu)^{-1}$ is given by (formal statement in \Cref{app:lb}):
\begin{align}
    T^{\star}(\bm\mu)^{-1} & = \sup_{\bm\omega \in \Delta_K} \max_{i \in i^{\star}(\bm\mu)} \inf_{\bm\lambda \in \neg i} \sum_{k \in [K]} \omega_k d(\mu_k, \lambda_k) \label{eq:t-star-ma} \\
    & = \sup_{\bm\omega \in \Delta_K} \max_{i \in \Ical} \inf_{\bm\lambda \in \neg i} \sum_{k \in [K]} \omega_k d(\mu_k, \lambda_k) \label{eq:t-star-ma-v2}.
\end{align}
We have introduced two expressions for $T^{\star}(\bm\mu)^{-1}$: one that only applies to models within $\Mcal$ (\Cref{eq:t-star-ma}), and another that extends the definition to models outside $\Mcal$ (\Cref{eq:t-star-ma-v2}). While these results closely resemble those of single-answer problems, a few differences need to be highlighted. First, this lower bound only holds in the asymptotic regime of $\delta \to 0$. Second, the $\argmax$ in $T^{\star}(\bm\mu)^{-1}$ over the different answers can be attained at multiple points. Specifically, let $i_F(\bm\mu)$ be the set of answers that attain the $\argmax$, \ie $i_F(\bm\mu)=\argmax_{i \in \Ical} \sup_{\bm\omega \in \Delta_K} \inf_{\bm\lambda \in \neg i} \sum_{k \in [K]} \omega_k d(\mu_k, \lambda_k)$.
Then, while for single answer problems $|i_F(\bm\mu)|=|i^{\star}(\bm\mu)|=1$ for all $\bm\mu \in \Mcal$, in multiple answer problems it can happen that $|i_F(\bm\mu)| > 1$. %
Since it plays a crucial role in our results, we emphasize that the correspondence $\bm\mu \mapstoto i_F(\bm\mu)$ is upper hemicontinuous (Theorem 4 of \cite{degenne2019pure}).
Finally, we mention that the oracle weights $\bm\omega^{\star}(\bm\mu)$ are no longer a convex set when $|i_F(\bm\mu)|>1$. Instead, we have that $\bm\omega^{\star}(\bm\mu) = \bigcup_{i \in i_F(\bm\mu)} \bm\omega^{\star}(\bm\mu, \neg i)$, where each element $\bm\omega^{\star}(\bm\mu, \neg i) \coloneqq \argmax_{\bm\omega \in \Delta_K} \inf_{\bm\lambda \in \neg i} \sum_{k \in [K]} \omega_k d(\mu_k, \lambda_k)$ is a convex set.

\paragraph{Track-and-Stop}
Track-and-Stop \citep[\tas,][]{garivier2016optimal} works as follows. After a first phase where each arm $k \in [K]$ is pulled once,
\tas computes, at each round $t$, the empirical oracle weights $\bm\omega(t) \in \bm\omega^{\star}(\bmMuHat (t))$, where $\hat{\mu}_k(t) = N_k(t)^{-1} \sum_{s=1}^{t} \bm{1}\{ A_s = k \} X_s$ denotes the empirical estimate of $\mu_k$ at time $t$. Then, \tas applies a tracking procedure on $\{ \bm\omega(t) \}_{t}$ to select the next action. Specifically, the \emph{C-Tracking} procedure projects each $\bm\omega(t)$ onto $\Delta_K^{\epsilon_s} = \Delta_K \cap [\epsilon_s, 1]^K$ according to the $\ell_{\infty}$ norm. This projection takes the name of \emph{forced exploration}, as it ensures that $N_k(t) \gtrsim \sqrt{t}$ for all $k \in [K]$ for $\epsilon_t \approx t^{-1/2}$. The next action is selected as $A_{t+1} \in \argmax_{k \in [K]} \sum_{s=K}^{t} \tilde{\omega}_k(s) - N_k(t)$, where each $\tilde{\bm\omega}(s)$ denotes the projection of $\bm\omega(s)$. 
Regarding the stopping and recommendation rules, \tas halts as soon as $\max_{i \in \Ical} \inf_{\bm\lambda \in \lnot i} \sum_{k \in [K]} N_k(t) d(\hat{\mu}_k(t), \lambda_k) \ge \beta_{t,\delta}$, and recommends an index that attains the $\argmax$ in the stopping rule. 
By calibrating the threshold $\beta_{t,\delta}$ (typically, $\beta_{t,\delta} \approx \log(1/\delta) + K\log(t) $; see \eg \cite{kaufmann2021mixture} for a complete expression of $\beta_{t,\delta}$) one can prove that those stopping and recommendation rules yield $\delta$-correctness (both for single and multiple-answer problems) when paired with \emph{any} sampling rule.\footnote{For completeness, we report a formal statement and a proof in \Cref{lemma:correctness}.} \tas enjoys asymptotic optimality guarantees whenever $|i_F(\bm\mu)|=1$, \ie $\limsup_{\delta \to 0} \frac{\E_{\bm\mu}[\tau_\delta]}{\log(1/\delta)} \le T^{\star}(\bm\mu)$. However, this does not hold for $|i_F(\bm\mu)|>1$ \citep{degenne2019pure}. The reason is that its sampling rule ensures that the empirical pull strategy $\bm{N}(t)/t$ converges (on a good event) to the convex hull of the oracle weights, \ie $\inf_{\bm\omega \in \textup{conv}(\bm\omega^{\star}(\bm\mu))} \| \bm{N}(t)/t - \bm\omega \| \to 0$.\footnote{See Lemma 6 and Theorem 7 in \citet{degenne2019non}.} When $|i_F(\bm\mu)|=1$, this convex hull coincides with $\bm\omega^{\star}(\bm\mu)$, and this leads to optimality. However, this is not generally true in the context of multiple-answer problems.

\paragraph{Sticky Track-and-Stop} 
To solve this issue, \citet{degenne2019pure} proposed the Sticky Track-and-Stop (\stas) algorithm. The stopping and recommendation rules are the same used by \tas. As for the sampling rule, \stas defines a confidence region $C_t$ around $\bmMuHat(t)$, \ie $C_t = \{ \bm\lambda \in \Mcal: \sum_{k \in [K]} N_k(t) d(\hat{\mu}_k(t), \lambda_k) \le 8K \log(t) \}$,and computes a set $\Ical_t$ of candidate answers as follows: $\Ical_t = \bigcup_{\bm\lambda \in C_t} i_F(\bm\lambda)$. Then, \stas selects an answer $i_t \in \Ical_t$ according to some pre-specified total order over $\Ical$, and it computes $\bm\omega(t) \in \bm\omega^{\star}(\bmMuHat(t), \neg i_t)$ for the selected answer $i_t$. Finally, it selects the next action $A_t$ by applying the C-Tracking sampling rule over the sequence $\{ \bm\omega(t) \}_t$. The main idea behind \stas is that, due to the upper-hemicontinuity of $\bm\mu \mapstoto i_F(\bm\mu)$, the set $\Ical$ will eventually collapse (under a good event) to $i_F(\bm\mu)$ for sufficiently large $t$. Then, since $i_t$ is chosen according to a pre-specified total order over $\Ical$, $i_t$ will be fixed to some $\imath \in i_F(\bm\mu)$, and the C-Tracking sampling rule will ensure that $\inf_{\bm\omega \in \bm\omega^{\star}(\bm\mu, \neg \imath)} \| \bm{N}(t)/t - \bm\omega \| \to 0$. As shown by \citet{degenne2019pure}, this property leads asymptotic optimality both in single and multiple-answer problems.

\section{Non-Asymptotic Analysis For Track-and-Stop}\label{sec:finite-confidence}

First, we present two assumptions that we will use in our analysis.

\begin{assumption}[Sub-Gaussian Arms]\label{ass:subgauss}
    Arms belongs to a $\sigma^2$-sub-Gaussian exponential family, \ie
    for all $\mu, \mu' \in \Theta$, it holds $d(\mu, \mu') \ge \frac{(\mu - \mu')^2}{2\sigma^2}$.
\end{assumption}

\begin{assumption}[Bounded parameters]\label{ass:bounded}
    There exists $[\muMin, \muMax] \subset \Theta$ such that $\Mcal \subset [\muMin, \muMax]$.
\end{assumption}

Both \Cref{ass:subgauss,ass:bounded} are mild requirements that have been frequently adopted in the literature; see \eg \cite{degenne2019non,degenne2020structure,jourdan2021efficient,poiani2024best}. \Cref{ass:subgauss} is used primarily for concentration arguments. \Cref{ass:bounded} implies that, for any two distributions $p$ and $q$ within $[\muMin, \muMax]$ it holds that $d(p,q) \le L$ and $|\nu_{p} - \nu_{q}| \le D$, for some constants $L$ and $D$. These two properties are the main reason for introducing \Cref{ass:bounded} in our analysis.\footnote{We refer the interested reader to \Cref{app:assumptions} for further details.}

Before introducing our result, we make a minor modification to the \tas algorithm that allows for a simpler analysis. Specifically, instead of computing $\bm\omega(t) \in \bm\omega^{\star}(\bmMuHat(t))$, it computes $\bm\omega(t) \in \bm\omega^{\star}(\bmMuTilde(t))$, where $\bmMuTilde(t)$ denotes the orthogonal projection of $\bmMuHat(t)$ onto $[\muMin, \muMax]^K$.\footnote{Note that, since $\Mcal$ is know by definition, so are $\muMin$ and $\muMax$. In other words, this kind of modification does not require additional knowledge of the problem.} This modification is only required to ensure that $d(\tilde{\mu}_k(t), \cdot)$ ``well-behaves'' whenever $t$ is small. Such projection has already been adopted in sampling rules for algorithms that provide finite-confidence guarantees, see, \eg the regret minimization approach presented in   \cite{degenne2019non}. More formally, its purpose is ensuring that $d(\tilde{\mu}_k(t), \lambda)$ is bounded for all steps and any $\lambda \in [\muMin, \muMax]$. We note that this modification is only needed to handle pathological cases that might arises when dealing with arbitrary canonical exponential families and it is not needed, \eg when the family of distributions is Gaussian. Later in this section, we discuss how to drop the projection step and how this affects the resulting guarantees. We are now ready to state our finite-confidence result for \tas. 
\begin{theorem}[Non-Asymptotic Bound for \tas]\label{theo:proj-tas-finite-confidence}
    Let $i^{\star}(\cdot)$ be single-valued, and suppose that \Cref{ass:subgauss} and \Cref{ass:bounded} hold. Then, the expected stopping time of \tas satisfies $\E_{\bm\mu}[\tau_\delta] \le 2eK + 10 K^4 + T_0(\delta)$,
    where $T_0(\delta)$ is given by
    \begin{align}\label{eq:t0-delta-tas}
        T_0(\delta) = \inf\left\{t \in \Naturals: \beta_{t,\delta} \le t T^{\star}(\bm\mu)^{-1} - g(t)    \right\},
    \end{align}
    where $g(t) = 64\sigma DLK^2\log(K)\sqrt{t \log^2(t)} + 16\sigma D\sqrt{K t^{3/2} \log(t)}$.
\end{theorem}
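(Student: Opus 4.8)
The plan is to control the expected stopping time through the tail-sum identity $\E_{\bm\mu}[\tau_\delta]=\sum_{t\ge 0}\Prob_{\bm\mu}(\tau_\delta>t)$ and to isolate the deterministic quantity $T_0(\delta)$ from a summable ``bad-event'' remainder. Concretely, I would construct high-probability events $\{\mathcal{G}_t\}_t$ on which the stopping statistic $Z(t):=\max_{i\in\Ical}\inf_{\bm\lambda\in\lnot i}\sum_{k\in[K]}N_k(t)\,d(\hat\mu_k(t),\lambda_k)$ admits the \emph{deterministic} lower bound $Z(t)\ge(t-\sqrt t-1)\,T^{\star}(\bm\mu)^{-1}-g(t)$. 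By the definition of $T_0(\delta)$ in \Cref{eq:t0-delta-tas}, this forces $Z(t)\ge\beta_{t,\delta}$, hence $\tau_\delta\le t$, for every $t\ge\max\{T_0(\delta),t_{\mathrm{burn}}\}$ on $\mathcal{G}_t$, where $t_{\mathrm{burn}}=10K^4$ is a burn-in time discussed below. Consequently $\Prob_{\bm\mu}(\tau_\delta>t)\le\Prob_{\bm\mu}(\mathcal{G}_t^c)$ for such $t$, and the tail-sum identity yields $\E_{\bm\mu}[\tau_\delta]\le t_{\mathrm{burn}}+T_0(\delta)+\sum_t\Prob_{\bm\mu}(\mathcal{G}_t^c)$. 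It then remains to (i) prove the deterministic lower bound on $Z(t)$ for $t\ge t_{\mathrm{burn}}$, and (ii) show $\sum_t\Prob_{\bm\mu}(\mathcal{G}_t^c)\le\tfrac{\pi^2}{24}$.

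For step (i) I would retain only the true answer in the outer maximum, so that $Z(t)\ge\inf_{\bm\lambda\in\lnot i^{\star}(\bm\mu)}\sum_k N_k(t)\,d(\hat\mu_k(t),\lambda_k)$, and then transfer the counts to the tracked weights. The C-Tracking guarantee replaces $N_k(t)$ by $\sum_s\tilde\omega_k(s)$ up to a bounded additive error, after which (using $d\le L$ from \Cref{ass:bounded}) I pull the sum over $s$ outside the infimum via $\inf\sum\ge\sum\inf$, reducing $Z(t)$ to $\sum_s F(\tilde{\bm\omega}(s),\bmMuHat(t))$ minus tracking slack, where $F(\bm\omega,\bm\nu):=\inf_{\bm\lambda\in\lnot i^{\star}(\bm\mu)}\sum_k\omega_k\,d(\nu_k,\lambda_k)$. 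Each summand is lower bounded in three moves: replacing $\bmMuHat(t)$ by $\bm\mu$ costs $\mathrm{Lip}(d)\,\|\bmMuHat(t)-\bm\mu\|_\infty$; using $\tilde{\bm\omega}(s)\in\bm\omega^{\star}(\bmMuTilde(s))$ together with the single-answer stability $i^{\star}(\bmMuTilde(s))=i^{\star}(\bm\mu)$ (valid on $\mathcal{G}_t$ past the burn-in) identifies $F(\tilde{\bm\omega}(s),\bmMuTilde(s))$ with $T^{\star}(\bmMuTilde(s))^{-1}$; and continuity of the value yields $T^{\star}(\bmMuTilde(s))^{-1}\ge T^{\star}(\bm\mu)^{-1}-\eta(\|\bmMuTilde(s)-\bm\mu\|_\infty)$. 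Summing the leading constants produces the main term $(t-\sqrt t-1)\,T^{\star}(\bm\mu)^{-1}$ (the $-\sqrt t$ and part of $g$ absorbing the forced-exploration cost $\sum_s K\epsilon_s$), while the error terms aggregate into $g(t)$: the historical deviations $\sum_{s\le t}\|\bmMuTilde(s)-\bm\mu\|_\infty$ and the final-time deviation $t\,\|\bmMuHat(t)-\bm\mu\|_\infty$ are both controlled on $\mathcal{G}_t$ via the forced-exploration floor $N_k(s)\gtrsim\sqrt s$, so that $\|\bmMuTilde(s)-\bm\mu\|_\infty\lesssim s^{-1/4}$ up to logarithmic factors and $\sum_{s\le t}s^{-1/4}\asymp t^{3/4}$, giving the $\widetilde{\mathcal{O}}(\sqrt{D_K t^{3/2}})$ contribution; the tracking and weighted-deviation pieces, after Cauchy--Schwarz over the $K$ arms, give the $\widetilde{\mathcal{O}}(K^2\sqrt{D_K t})$ contribution.

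For step (ii) I would take $\mathcal{G}_t$ to be the time-uniform concentration event that $|\hat\mu_k(s)-\mu_k|$ stays below the sub-Gaussian deviation at confidence level of order $\log t$, simultaneously over all arms and all $s$; \Cref{ass:subgauss} together with a standard peeling/self-normalized maximal inequality and a union over the $K$ arms gives $\Prob_{\bm\mu}(\mathcal{G}_t^c)\le\tfrac{1}{4t^2}$, whence $\sum_t\Prob_{\bm\mu}(\mathcal{G}_t^c)\le\sum_t\tfrac{1}{4t^2}=\tfrac{\pi^2}{24}$. Crucially, letting the confidence level grow only like $\log t$ keeps its effect on $g(t)$ logarithmic, hence hidden inside $\widetilde{\mathcal{O}}(\cdot)$. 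The burn-in $t_{\mathrm{burn}}=10K^4$ is the regime before which the deterministic bound of step (i) is not yet usable: C-Tracking's forced exploration only secures the floor $N_k(t)\gtrsim\sqrt t$ once $t\gtrsim\mathrm{poly}(K)$, and until then the projected means $\bmMuTilde(s)$ may sit anywhere in $[\muMin,\muMax]^K$ so the continuity estimate is vacuous; bounding those initial rounds trivially yields the additive $O(K^4)$ constant.

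The crux is the third move in step (i): upgrading the merely qualitative upper hemicontinuity of $\bm\mu\mapstoto\bm\omega^{\star}(\bm\mu)$ — equivalently of the value $T^{\star}(\cdot)^{-1}$, guaranteed abstractly by Berge's theorem — into an \emph{explicit} modulus $\eta(\cdot)$ with the $s^{-1/4}$ decay needed for the $t^{3/4}$ rate in $g(t)$. I would obtain this by exploiting that, for fixed $\bm\nu$, the map $\bm\omega\mapsto F(\bm\omega,\bm\nu)$ is concave and Lipschitz in the weights while $d(\cdot,\lambda)$ is Lipschitz and bounded on $[\muMin,\muMax]$ under \Cref{ass:subgauss,ass:bounded}, which bounds $|T^{\star}(\bmMuTilde(s))^{-1}-T^{\star}(\bm\mu)^{-1}|\le\mathrm{Lip}(d)\,\|\bmMuTilde(s)-\bm\mu\|_\infty$ uniformly and, by the same linear estimate, shows that the empirically optimal weights $\tilde{\bm\omega}(s)$ perform near-optimally against the true model. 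A secondary, largely bookkeeping, difficulty is establishing the forced-exploration floor and the C-Tracking error uniformly in $t$ and verifying that the burn-in contributes only the stated $O(K^4)$ additive constant rather than corrupting the leading $T^{\star}(\bm\mu)^{-1}$ coefficient.
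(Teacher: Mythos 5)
Your overall skeleton (tail-sum identity, a sequence of good events with summable complements, and a deterministic lower bound on the stopping statistic via C-Tracking and forced exploration) matches the paper's proof structure (\Cref{lemma:expectation}, \Cref{lemma:good-event}, \Cref{lemma:p-tas-learning}), and your error bookkeeping ($\sum_s s^{-1/4}\asymp t^{3/4}$, Cauchy--Schwarz over arms) reproduces the correct orders for $g(t)$. The genuine gap is in the middle move of your step (i). You identify $F(\tilde{\bm\omega}(s),\bmMuTilde(s))$ with $T^{\star}(\bmMuTilde(s))^{-1}$, and this identification requires the \emph{answer stability} $i^{\star}(\bmMuTilde(s))=i^{\star}(\bm\mu)$, which you claim holds on the good event past the burn-in $10K^4$. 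That claim is false: $10K^4$ is a purely $K$-dependent constant (it only guarantees the forced-exploration floor $\sqrt{\sqrt{t}+K^2}-2K>0$), whereas the time at which the empirical means provably enter the region where the maximizing answer is stable is governed by an instance-dependent radius $\epsilon_{\bm\mu}>0$ coming from upper hemicontinuity of $i_F$ --- exactly the quantity the paper is forced to introduce for \stas ($T_{\bm\mu}$ in \Cref{theo:stas-finite-confidence} and \Cref{lemma:good-answers}), and which admits no quantitative control in terms of $K,D,L,\sigma$. (A secondary point: $i^{\star}(\bmMuTilde(s))$ need not even be well defined, since $\bmMuTilde(s)$ can fall outside $\Mcal$; one must work with the extended formulation in \Cref{eq:t-star-v2}.) Your Lipschitz estimate on the \emph{value} $T^{\star}(\cdot)^{-1}$ is correct, but Lipschitzness of the value does not imply stability of the maximizing answer; without stability, $F(\tilde{\bm\omega}(s),\bmMuTilde(s))$ --- the value of $\tilde{\bm\omega}(s)$ against $\neg i^{\star}(\bm\mu)$ --- can be as small as $0$. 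Carried through, your argument therefore yields a bound with an extra additive problem-dependent term of the form $T_{\bm\mu}T^{\star}(\bm\mu)^{-1}$ inside $T_0(\delta)$, i.e., the \stas-type result, not \Cref{theo:proj-tas-finite-confidence} as stated.

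The paper avoids stability entirely with a maneuver your write-up is missing. Writing $i_s$ for the answer attaining the empirical argmax at round $s$, it first passes from $\inf_{\bm\lambda\in\neg i^{\star}(\bm\mu)}\sum_{k}\omega_k(s)d(\mu_k,\lambda_k)$ to $\inf_{\bm\lambda\in\neg i_s}\sum_{k}\omega_k(s)d(\mu_k,\lambda_k)$ \emph{at the true means}: this inequality is trivial, since either $i_s=i^{\star}(\bm\mu)$, or (by single-valuedness) $\bm\mu\in\neg i_s$ and the right-hand side is zero. Only then does it invoke the definition of the sampling rule, namely that $(\bm\omega(s),i_s)$ attains the empirical max-min value, so that the pair $(\bm\omega^{\star},i^{\star}(\bm\mu))$ with any $\bm\omega^{\star}\in\bm\omega^{\star}(\bm\mu)$ is a feasible competitor, and finally transfers $d(\tilde{\mu}_k(s),\cdot)$ back to $d(\mu_k,\cdot)$ using the good event. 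Rounds where $i_s\neq i^{\star}(\bm\mu)$ are never counted individually; their cost is absorbed into the aggregate concentration errors, which remain $\widetilde{\mathcal{O}}\bigl(K^2\sqrt{D_Kt}+\sqrt{D_Kt^{3/2}}\bigr)$ on $\Ecal_t$, so no stability radius ever enters. If you replace your identification step by this comparison, the rest of your outline goes through and matches the paper's terms $h_1,\dots,h_4$.
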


\Cref{theo:proj-tas-finite-confidence} provides the finite-confidence bound on the performance of \tas. First, we note that the upper bound is expressed as a sum of three terms, \ie $2eK$, $10K^4$ and $T_0(\delta)$. The first two $\delta$-independent terms are artifact of the analysis and their origin is detailed in the proof sketch provided below.
The last and more important term, $T_0(\delta)$, is a function of $\delta$, which essentially captures how quickly the quantity $\max_{i \in \Ical} \inf_{\bm\lambda \in \neg i} \sum_{k \in [K]} N_k(t) d(\hat{\mu}_k(t), \lambda_k)$ is approaching the stopping threshold $\beta_{t,\delta}$.
Indeed, $t T^{\star}(\bm\mu)^{-1} - g(t)$ is essentially a lower bound (under a good event) on the aforementioned optimization problem: when this quantity exceeds $\beta_{t,\delta}$, \tas stops.  In other words, $T_0(\delta)$ measures how fast \tas is gathering information to discriminate $i^{\star}(\bm\mu)$ from all the other candidate answers. 
By re-arranging the condition in \Cref{eq:t0-delta-tas}, \ie $\beta_{t,\delta} + g(t) \le t T^{\star}(\bm\mu)^{-1}$, we can see that the r.h.s.~grows linearly in $t$, while the l.h.s.~is growing sub-linearly with a rate of $\mathcal{O}(\log(1/\delta) + t^{3/4})$.\footnote{Here, we plugged in $\beta_{t,\delta} \approx \log(1/\delta) + K\log(t)$.} This ensures that $T_0(\delta)$ is finite and that \Cref{theo:proj-tas-finite-confidence} recovers the asymptotic optimality of \tas whenever $\delta \to 0$. Finally, although $T_0(\delta)$ is defined somehow implicitly, in \Cref{app:explicit-bound} we derive a further upper bound that highlights that the scaling is $T^{\star}(\bm\mu) \log(1/\delta)$ up to polylogarithmic factors and constant terms.

\paragraph{Proof Sketch}
As we discussed, the original analysis of \tas is asymptotic in nature. In contrast, we follow a different path which is inspired by finite-confidence analysis in the literature, \eg \cite{degenne2019non,jourdan2023non}. In particular, we conduct the analysis under a sequence of good events $\{\Ecal_t \}_{t}$. Specifically, we consider $\Ecal_t = \left\{ \forall s \in \left[ \lceil \sqrt{t} \rceil , t \right]: \sum_{k \in [K]} N_k(s) d(\hat{\mu}_k(s), \mu_k) \le 8K \log(s) \right\}$. This sequence of events has two desirable properties. First, one can show that $ \sum_{t=3}^{+\infty}\mathbb{P}_{\bm\mu}( \Ecal_t^c) \le 2eK$ (see \Cref{lemma:good-event}). Second, as we discuss below, there exists a time $\bar{T}$ such that for all $t \ge \bar{T}$, $\Ecal_t$ implies stopping, namely $\Ecal_t \subseteq \{ \tau_\delta \le t \}$. Using these two properties one obtains that $\E_{\bm\mu}[\tau_\delta] \le \bar{T} + 2eK$ (see \Cref{lemma:expectation}). In the remainder of the proof, we will show how $\bar{T} \coloneqq T_0(\delta) +  10K^4 $ satisfies the requirement mentioned earlier.  Before doing that, we introduce some additional notation. Recall that $\bm\omega(s) \in \argmax_{\bm\omega \in \Delta_K} \max_{i \in \Ical} \inf_{\bm\lambda \in \neg i} \sum_{k \in [K]} \omega_k d(\tilde{\mu}_k(s), \lambda_k)$. Then, we denote by $i_s \in \Ical$ any answer that attains the argmax when paired with $\bm\omega(s)$. 

Now, the key idea is analyzing the stopping rule of \tas and, in particular, lower bounding $\max_{i \in \Ical} \inf_{\bm\lambda \in \neg i} \sum_{k \in [K]} N_k(t) d(\hat{\mu}_k(t), \lambda_k)$ to obtain $t T^{\star}(\bm\mu)^{-1} - g(t)$. To this end, as we shall see, the crucial step is approximating (up to a sublinear in $t$ factor) the max-min problem of the stopping rule with what \tas uses in the sampling rule in each round $s \ge \sqrt{t}$, \ie $\inf_{\bm\lambda \in \neg i_s} \sum_{k \in [K]} \omega_k(s) d(\tilde{\mu}_k(s), \lambda_k)$. Further comments on this are provided right after the proof sketch. Now, for any $t \ge  10K^4  $,\footnote{This requirement is due to some technical step that is used at the end of the proof.} if \tas has not stopped at time $t$, then the following holds:
\begin{align*}
    \beta_{t,\delta} & > \inf_{\bm\lambda \in \neg i^{\star}(\bm\mu)} \sum_{k \in [K]} N_k(t) d(\hat{\mu}_k, \lambda_k) \tag{Stopping Rule} \\
    & \gtrsim \sum_{s=1}^t \inf_{\bm\lambda \in \neg i^{\star}(\bm\mu)} \sum_{k \in [K]}  \omega_k(s) d(\mu_k, \lambda_k) - \widetilde{\mathcal{O}}(\sqrt{t}) \tag{$\Ecal_t$ + C-Tracking} \\
    & \ge \sum_{s=1}^t \inf_{\bm\lambda \in \neg i_s} \sum_{k \in [K]}  \omega_k(s) d(\mu_k, \lambda_k) - \widetilde{\mathcal{O}}(\sqrt{t}).
\end{align*}
Here, C-tracking ensures that $\bm{N}(t) \approx \sum_{s=1}^t \bm\omega(s)$, and under the event $\Ecal_t$ we have $d(\hat{\mu}_k(t), \cdot) \approx d(\mu_k, \cdot)$.
In the last step, we have used that if $i_s = i^{\star}(\bm\mu)$ then the claim is trivial, and if $i_s \ne i^{\star}(\bm\mu)$, then, $\bm\mu \in \neg i_s$. We observe that this argument explicitly relies on the fact that $i^{\star}(\bm\mu)$ is single-valued.\footnote{This is important to be noted, otherwise it might seems that \tas achieves asymptotically optimal results even in problems with multiple correct answers.} 
Now, we analyze the information accumulated by \tas by lower bounding $\sum_{s=1}^t \inf_{\bm\lambda \in \neg i_s} \sum_{k \in [K]}  \omega_k(s) d(\mu_k, \lambda_k)$. Using the definition of $\Ecal_t$ and the fact that $\tilde{\mu}_k(s) \in [\muMin, \muMax]$,\footnote{This is needed to upper bound $d(\tilde{\mu}_k(s), \lambda_k) - d(\mu_k, \lambda_k)$. Indeed, $d(\tilde{\mu}_k(s), \lambda_k) - d(\mu_k, \lambda_k) \le  (\nu_{\tilde{\mu}_k(s)}- \nu_{\lambda_k}) |\tilde{\mu}_k(s) - \mu_k| \le D |\tilde{\mu}_k(s) - \mu_k|$ since $\bmMuTilde(s), \bm\lambda \in [\muMin, \muMax]$.} we have that 
\begin{align*}
    \sum_{s=1}^t \inf_{\bm\lambda \in \neg i_s} \sum_{k \in [K]}  \omega_k(s) d(\mu_k, \lambda_k) & \gtrsim \sum_{s\ge\sqrt{t}} \inf_{\bm\lambda \in \neg i_s} \sum_{k \in [K]}  \omega_k(s) d(\tilde{\mu}_k(s), \lambda_k) - \widetilde{\mathcal{O}}(\sqrt{t}). 
\end{align*}
We have reached our goal of lower bounding the stopping rule of \tas with its sampling rule. Using the definition of $i_s$ and $\bm\omega(s)$, this allows for the following inequalities:
\begin{align*}
    \sum_{s\ge\sqrt{t}} \inf_{\bm\lambda \in \neg i_s} \sum_{k \in [K]}  \omega_k(s) d(\tilde{\mu}_k(s), \lambda_k) & = \sum_{s\ge\sqrt{t}} \sup_{\bm\omega \in \Delta_K} \max_{i \in \Ical} \inf_{\bm\lambda \in \neg i} \sum_{k \in [K]}  \omega_k(s) d(\tilde{\mu}_k(s), \lambda_k) \\
    & \ge \sum_{s \ge \sqrt{t}} \inf_{\bm\lambda \in \neg i^{\star}(\bm\mu)} \sum_{k \in [K]}  \omega_k^{\star} d(\tilde{\mu}_k(s),\lambda_k) \tag{for $\bm\omega^{\star} \in \bm\omega^{\star}(\bm\mu)$} \\
    & = (t-\sqrt{t} - 1)T^{\star}(\bm\mu)^{-1} - \widetilde{\mathcal{O}}(\sqrt{t^{3/2}}) \tag{By $\Ecal_t$},
\end{align*}
where the second step holds for any $\bm\omega^{\star} \in \bm\omega^{\star}(\bm\mu)$ and the last one requires an algebraic step that requires $t \ge  10K^4 $. Intuitively, however, this last step is still using the fact that $d(\tilde{\mu}_k(s), \cdot) \approx d(\mu_k, \cdot)$ under the good event. 
Chaining together all the terms within the $\widetilde{O}(\cdot)$ yields the desired result.

\paragraph{The proof idea}
As anticipated above, the main idea is approximating up to a sub-linear in $t$ factor, the condition used in the stopping rule with the quantity $\sum_{s\ge\sqrt{t}} \inf_{\bm\lambda \in \neg i_s} \sum_{k \in [K]}  \omega_k(s) d(\tilde{\mu}_k(s), \lambda_k)$, which is what \tas uses in its sampling rule. Once this is done, we can use the definition of $\bm\omega(s)$ and $i_s$ to introduce the optimal weights $\bm\omega^{\star}$ for the underlying unknown problem and the infimum over $\neg i^{\star}(\bm\mu)$. Importantly, we observe that the generalization of $\bm\omega^{\star}$ that we provided in \Cref{eq:t-star-v2} played a crucial role. Finally, by upper-bounding the difference between $d(\tilde{\mu}_k(s), \lambda_k)$ and $d({\mu}_k, \lambda_k)$, we introduce $T^{\star}(\bm\mu)^{-1}$, which is the desired quantity as it allows to recover the asymptotic optimality.

\paragraph{Removing the projection step}
We discuss how to obtain finite-confidence guarantees for a version of \tas that does not use projection in the sampling rule, \ie exactly the version of \tas by \citet{garivier2016optimal}.
Before that, we make a remark on \Cref{ass:bounded}. Let $\bm\mu \in \Mcal$ and let $F_k= \min\{ |\mu_k - \muMin|, |\mu_k-\muMax| \}$ and $F = \min_{k \in [K]} F_k$.
Then, since $\Theta$ is an open interval and since $[\muMin, \muMax]$ is closed, it follows that $F_k > 0~\forall k\in[K]$, and thus $F > 0$. 
That being said, the simplest way to analyze \tas without projection follows by noticing that there exists a time $T_{\Mcal} \in \Naturals$ such that, for all $t \ge T_{\Mcal}$, on $\Ecal_t$, it holds that $\bmMuHat(s) \in [\muMin, \muMax]$ for all $s \ge \sqrt{t}$ (see \Cref{lemma:mu-hat-good-region} in \Cref{app:finite-confidence-proj-tas}). $T_{\Mcal}$ depends only $\Mcal$ and its distance $F$ from the interval $[\muMin, \muMax]$; precisely:
\begin{align}\label{eq:t-mcal}
    T_{\Mcal} = \max \left\{  10K^4, \inf \left\{ n \in \Naturals: \sqrt{\frac{64\sigma^2K \log(n)}{\sqrt{\sqrt{n}+K^2}-2K}} \le F \right\} \right\}.
\end{align}
This allows us analyze the stopping time under a good event in the same way that we did above. Indeed, it is sufficient that $d(\hat{\mu}_k(s), \cdot)$ well-behaves only at steps $s \ge \sqrt{t}$.
Thus, the only difference with respect to \Cref{theo:proj-tas-finite-confidence} would be the additional term $T_{\Mcal}$ in the upper bound of $\E_{\bm\mu}[\tau_\delta]$.

\section{Non-Asymptotic Analysis For Sticky Track-and-Stop}\label{sec:stas}

In this section, we present the finite-confidence analysis of \stas. As for \tas, we will rely on \Cref{ass:subgauss,ass:bounded}. %
Furthermore, for reasons similar to those discussed above, we consider a slightly modified version of \stas that incorporates a projection into its sampling rule.
Specifically, the algorithm computes $\bm\omega(s) \in \bm\omega^{\star}(\tilde{\bm\mu}(s), \neg i_s)$.\footnote{Using the same argument that we discussed in \Cref{sec:finite-confidence}, it is possible to analyze the version of \stas that does not use projection. The sample complexity results differ only by the additional term $T_{\mathcal{M}}$.} The following theorem summarizes our result.

\begin{theorem}[Non-Asymptotic Bound for Sticky-\tas]\label{theo:stas-finite-confidence}
Suppose that \Cref{ass:subgauss} and \Cref{ass:bounded} hold. Let $\epsilon_{\bm\mu} > 0$ be any number such that, for all $\bm\mu': \|\bm\mu - \bm\mu'\|_{\infty} \le \epsilon_{\bm\mu}$, it holds that $i_F(\bm\mu') \subseteq i_F(\bm\mu) \cup (\Ical \setminus i^{\star}(\bm\mu))$, and let $T_{\bm\mu} \in \Naturals$ be defined as follows:
\begin{align*}
    T_{\bm\mu} = \max \left\{  10K^4, \inf \left\{ n \in \Naturals:  \sqrt{\frac{64K\sigma^2 \log(n)}{\sqrt{\sqrt{n}+K^2}-2K}} \right\} \le \epsilon_{\bm\mu} \right\}.
\end{align*}
Then, it holds that $\E_{\bm\mu}[\tau_\delta] \le  2eK + 10K^4  + T_0(\delta) $, where $T_0(\delta)$ is given by
\begin{align*}
    T_0(\delta) = \inf \left\{ t \in \Naturals: \beta_{t,\delta} \le \left( t  -T_{\bm\mu} \right) T^{\star}(\bm\mu)^{-1} - g(t)  \right\},
\end{align*}
where $g(t) = 80\sigma DLK^2\log(K)\sqrt{t \log^2(t)} + 32\sigma D\sqrt{K t^{3/2} \log(t)}$.
\end{theorem}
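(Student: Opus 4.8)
The plan is to follow the same blueprint as the proof of \Cref{theo:proj-tas-finite-confidence}, adapting each step to the sticky sampling rule. I would again work under the good events $\Ecal_t = \{\forall s\in[\lceil\sqrt t\rceil,t]:\ \sum_{k\in[K]} N_k(s)\,d(\hat\mu_k(s),\mu_k)\le D_K\log(s)\}$, for which $\sum_{t}\Prob_{\bm\mu}(\Ecal_t^c)\le\pi^2/24$ by \Cref{lemma:good-event}. As in \Cref{lemma:expectation}, it then suffices to exhibit a deterministic time $\bar T$ with $\Ecal_t\subseteq\{\tau_\delta\le t\}$ for all $t\ge\bar T$, since this gives $\E_{\bm\mu}[\tau_\delta]\le\bar T+\pi^2/24$; I would show $\bar T := 10K^4+T_0(\delta)$ works.

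The genuinely new ingredient — and the step I expect to be the main obstacle — is controlling the answer $i_s$ selected by the sticky rule on the window $s\in[\max\{\lceil\sqrt t\rceil,T_{\bm\mu}\},t]$. Two inclusions must be combined. First, on $\Ecal_t$ the definition of the event gives $\bm\mu\in C_s$ for every $s\in[\lceil\sqrt t\rceil,t]$, hence $i_F(\bm\mu)\subseteq\Ical_s$. Second, the role of $T_{\bm\mu}$ is quantitative: its defining inequality bounds the $\ell_\infty$-radius of $C_s$ (using \Cref{ass:subgauss} and the forced-exploration bound $N_k(s)\gtrsim\sqrt s$) so that, for $s\ge T_{\bm\mu}$, every $\bm\lambda\in C_s$ satisfies $\|\bm\lambda-\bm\mu\|_\infty\le\epsilon_{\bm\mu}$; the hypothesis on $\epsilon_{\bm\mu}$ then forces $\Ical_s=\bigcup_{\bm\lambda\in C_s} i_F(\bm\lambda)\subseteq i_F(\bm\mu)\cup(\Ical\setminus i^\star(\bm\mu))$. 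Sandwiching $\Ical_s$ between $i_F(\bm\mu)$ and $i_F(\bm\mu)\cup(\Ical\setminus i^\star(\bm\mu))$, and using that the selection is made through a fixed total order, I would argue that $i_s$ is frozen at a single $\imath\in i_F(\bm\mu)$ throughout the window. The hard part is precisely this freezing: unlike the single-answer case of \Cref{theo:proj-tas-finite-confidence} — where the selected answer could be replaced by $i^\star(\bm\mu)$ for free because $\bm\mu\in\neg i_s$ makes the discarded terms nonnegative — here the weights $\bm\omega(s)\in\bm\omega^\star(\bmMuTilde(s),\neg i_s)$ are tailored to one answer, so we cannot tolerate $i_s$ varying. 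One must rule out that a spurious answer from $\Ical\setminus i^\star(\bm\mu)$ is selected, or that the choice oscillates among several elements of $i_F(\bm\mu)$, on $[T_{\bm\mu},t]$; this is exactly where the upper hemicontinuity of $\bm\mu\mapstoto i_F(\bm\mu)$ and the precise form of the $\epsilon_{\bm\mu}$-condition enter.

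With $\imath\in i_F(\bm\mu)$ fixed on the window, I would reproduce the chain of \Cref{theo:proj-tas-finite-confidence} keeping this single answer throughout. If \stas has not stopped at a time $t\ge 10K^4$, the stopping rule gives $\beta_{t,\delta}>\inf_{\bm\lambda\in\neg\imath}\sum_{k} N_k(t)\,d(\hat\mu_k(t),\lambda_k)$, since $\imath$ is one of the answers in the outer maximum. C-Tracking ($\bm N(t)\approx\sum_s\bm\omega(s)$) together with $\Ecal_t$ lower-bounds this by $\sum_{s=1}^t\inf_{\bm\lambda\in\neg\imath}\sum_k\omega_k(s)\,d(\mu_k,\lambda_k)$ up to $\widetilde{\mathcal{O}}(K^2\sqrt{D_Kt})$. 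Discarding the nonnegative terms with $s<\max\{\sqrt t,T_{\bm\mu}\}$ and passing to $d(\tilde\mu_k(s),\cdot)$ via $\Ecal_t$ and $\bmMuTilde(s)\in[\muMin,\muMax]$ leaves $\sum_{s\ge\max\{\sqrt t,T_{\bm\mu}\}}\inf_{\bm\lambda\in\neg\imath}\sum_k\omega_k(s)\,d(\tilde\mu_k(s),\lambda_k)$. For these $s$ we have $i_s=\imath$, so $\bm\omega(s)\in\bm\omega^\star(\bmMuTilde(s),\neg\imath)$ and each summand equals $T^\star(\bmMuTilde(s),\neg\imath)^{-1}$; since $\imath\in i_F(\bm\mu)$ attains the outer maximum, $T^\star(\bm\mu,\neg\imath)^{-1}=T^\star(\bm\mu)^{-1}$, and the error from replacing $\bmMuTilde(s)$ by $\bm\mu$ is absorbed into $\widetilde{\mathcal{O}}(\sqrt{D_Kt^{3/2}})$ exactly as in the projected-\tas proof. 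This yields $\beta_{t,\delta}>(t-\sqrt t-1-T_{\bm\mu})T^\star(\bm\mu)^{-1}-g(t)$ with $g(t)=\widetilde{\mathcal{O}}(K^2\sqrt{D_Kt}+\sqrt{D_Kt^{3/2}})$. Taking the contrapositive, whenever $t\ge 10K^4$ and $\beta_{t,\delta}\le(t-\sqrt t-1-T_{\bm\mu})T^\star(\bm\mu)^{-1}-g(t)$ (i.e. $t\ge T_0(\delta)$), $\Ecal_t$ forces \stas to stop. Hence $\bar T=10K^4+T_0(\delta)$ satisfies $\Ecal_t\subseteq\{\tau_\delta\le t\}$, and combining with the first paragraph gives $\E_{\bm\mu}[\tau_\delta]\le 10K^4+T_0(\delta)+\pi^2/24$.
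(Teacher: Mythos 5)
There is a genuine gap, and it sits exactly where you predicted: the ``freezing'' of $i_s$. Your plan requires that for all $s$ in the window $[\widetilde{T},t]$ the sticky rule selects the \emph{same} answer $\imath\in i_F(\bm\mu)$, so that $\bm\omega(s)\in\bm\omega^{\star}(\bmMuTilde(s),\neg\imath)$ and each summand becomes the value $\max_{\bm\omega}\inf_{\bm\lambda\in\neg\imath}\sum_k\omega_k d(\tilde{\mu}_k(s),\lambda_k)$. But the tools you invoke cannot deliver this. The $\epsilon_{\bm\mu}$-condition (equivalently, upper hemicontinuity) only gives the inclusion $\Ical_s\subseteq i_F(\bm\mu)\cup(\Ical\setminus i^{\star}(\bm\mu))$: it explicitly \emph{allows} answers that are wrong for $\bm\mu$ to belong to $\Ical_s$, because nearby models $\bm\mu'$ can have $i_F(\bm\mu')$ containing such answers. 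If one of those spurious answers precedes every element of $i_F(\bm\mu)$ in the total order, the rule selects it, and $i_s\ne\imath$. Ruling this out is not a refinement of the hemicontinuity argument; it would require a separate quantitative argument (roughly: if $j\notin i^{\star}(\bm\mu)$ then the value of answer $j$ at any $\bm\lambda$ near $\bm\mu$ is at most $\max_k d(\lambda_k,\mu_k)$, while the value of $\imath$ is at least $T^{\star}(\bm\mu)^{-1}-D\|\bm\lambda-\bm\mu\|_{\infty}$), and the resulting threshold would depend on $T^{\star}(\bm\mu)^{-1}$, not on the $\epsilon_{\bm\mu}$ appearing in the theorem statement --- so the theorem as stated would not follow. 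Moreover, when $i_s$ is spurious, your per-step evaluation collapses: since $\bmMuTilde(s)\approx\bm\mu\in\neg i_s$, the quantity $\max_{\bm\omega}\inf_{\bm\lambda\in\neg i_s}\sum_k\omega_k d(\tilde{\mu}_k(s),\lambda_k)$ can be near zero rather than near $T^{\star}(\bm\mu)^{-1}$.

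The paper's proof (\Cref{lemma:good-answers} and \Cref{lemma:eq-learning-stas}) deliberately avoids freezing --- its footnote even stresses that it never uses $\Ical_s=i_F(\bm\mu)$. It splits into two cases: if $i_s\in i_F(\bm\mu)$, then on $\Ecal_t$ we have $\bm\mu\in C_s$, so the total order forces $i_s=\imath$; if $i_s\notin i_F(\bm\mu)$, then $i_s\notin i^{\star}(\bm\mu)$, hence $\bm\mu\in\neg i_s$ and $\inf_{\bm\lambda\in\neg i_s}\sum_k\omega_k(s)d(\mu_k,\lambda_k)=0$, so replacing $\neg\imath$ by $\neg i_s$ costs nothing. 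The second, and decisive, missing ingredient in your proposal is the pivot model: since $i_s$ attains the argmax over answers only when paired with some $\bm\mu'(s)\in C_s$ with $i_s\in i_F(\bm\mu'(s))$, the paper passes from $d(\tilde{\mu}_k(s),\cdot)$ to $d(\mu'_k(s),\cdot)$ (the $h_4$ term), rewrites the summand as the \emph{full} max over answers at $\bm\mu'(s)$, lower bounds it by the feasible pair $(\bm\omega^{\star},\imath)$, and finally uses $\bm\mu'(s)\in C_s$ to return to $\bm\mu$ (the $h_5$ term). This intermediate step is what lets the argument tolerate a time-varying, possibly spurious $i_s$, and it is precisely what your proposal lacks.
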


\Cref{theo:stas-finite-confidence} provides a finite-confidence bound for \stas in multiple-answer problems. As one can notice, the result is similar in nature to what we presented for \tas in \Cref{theo:proj-tas-finite-confidence}. In particular, the expression of $T_0(\delta)$ is similar to that of \tas, and, for the same reasons outlined in \Cref{sec:finite-confidence}, this allows us to  recover the asymptotic optimality guarantees of \cite{degenne2019pure} whenever $\delta \to 0$.\footnote{As we did for \tas, in \Cref{app:explicit-bound} we provide an explicit upper bound on $T_0(\delta)$.}
The main difference between \Cref{theo:proj-tas-finite-confidence} and \Cref{theo:stas-finite-confidence} is the presence of an additional problem-dependent constant $T_{\bm\mu} T^{\star}(\bm\mu)^{-1}$ within the expression of $T_0(\delta)$. As our proof will reveal, $T_{\bm\mu}$ is the time that is needed by \stas (under the good event) to distinguish $i_F(\bm\mu) \cup \left( \Ical \setminus i^{\star}(\bm\mu) \right)$ from $i^{\star}(\bm\mu) \setminus i_F(\bm\mu)$. In other words, from that point on, under $\Ecal_t$, all the candidate models $\bm\mu'$ within the confidence region $C_s$ satisfy $i_F(\bm\mu') \subseteq i_F(\bm\mu) \cup \left( \Ical \setminus i^{\star}(\bm\mu) \right)$ for all $s \ge \sqrt{t}$. Indeed, whenever $t \ge T_{\bm\mu}$ it will be possible to link the stopping rule to $T^{\star}(\bm\mu)^{-1}$ (with some sub-linear terms) as we did for \tas.\footnote{It is interesting to note that in our proof we are not using $\Ical_s = i_F(\bm\mu)$, \ie that \stas has actually \quotes{sticked} to an answer in $i_F(\bm\mu)$. Instead, it is sufficient that $\Ical_s$ excludes answers in $i^{\star}(\bm\mu)$ that are not within $i_F(\bm\mu)$.} 

Finally, we comment on the nature of $T_{\bm\mu}$. In particular, the existence of $\epsilon_{\bm\mu} > 0$ is guaranteed by the upper hemicontinuity of the set-valued function $i^{\star}(\bm\mu)$. Our claim holds for any $\epsilon_{\bm\mu}$ that satisfies $\forall\bm\mu': \|\bm\mu - \bm\mu'\|_{\infty} \le \epsilon_{\bm\mu} \implies i_F(\bm\mu') \subseteq i_F(\bm\mu) \cup (\Ical \setminus i^{\star}(\bm\mu))$, and, consequently, the tightest bound is obtained for the largest possible $\epsilon_{\bm\mu}$. To conclude, we observe that the definition of $\epsilon_{\bm\mu}$ is intrinsic to the definition of the task at hand, \ie $i^\star(\bm\mu)$. Once the task is fixed, it might be possible to obtain a more explicit characterization of this quantity. Consider, for instance, the relevant case of an $\epsilon$-best arm identification problem. Here, it holds that $i_F(\bm\mu) = \argmax_{i \in [K]} \mu_i$ \citep[see, \eg][]{garivier2021nonasymptoticsequentialtestsoverlapping,jourdan2023varepsilon}. Let $\Delta_{\bm\mu} = \min_{i \notin i_F(\bm\mu)} \mu_{\star} - \mu_i$ where $\mu_\star$ is the value of any optimal arm. Then, whenever $\epsilon_{\bm\mu} < \Delta_{\bm\mu}$, we have that $i_F(\bm\mu') \subseteq i_F(\bm\mu)$. Hence, \Cref{theo:stas-finite-confidence} holds, for instance, with $\epsilon_{\bm\mu}= \Delta_{\bm\mu}/2$.

Now, we present a proof sketch of the result.

\paragraph{Proof Sketch}
As we did for \tas, the analysis is carried out under a sequence of good events  $\{ \Ecal_t \}_t$ which are exactly the ones that we considered when proving \Cref{theo:proj-tas-finite-confidence}. As above, we will show that for $\bar{T}= 10K^4  + T_0(\delta)$ and $t \ge \bar{T}$, we have that $\Ecal_t \subseteq \{ \tau_\delta \le t \}$. As a consequence, $\E_{\bm\mu}[\tau_\delta] \le 2eK + 10K^4 + T_0(\delta)$.
To do this, the main idea is lower bounding the condition used in the stopping rule with what \stas uses in the sampling rule. First, we state an intermediate result, which is a consequence of (i) the forced exploration of \stas, (ii) the definition of the region $C_t$ of candidate models, and (iii) the upper hemicontinuity of the set-valued function $i_F(\bm\mu)$. Specifically, in \Cref{lemma:good-answers} we prove that, $\forall t \ge T_{\bm\mu}$, on $\Ecal_t$, it holds that:
\begin{align}\label{eq:good-answer}
    i_F(\bm\mu') \subseteq i_F(\bm\mu) \cup \left( \Ical \setminus i^{\star}(\bm\mu) \right) \quad \forall s \ge \sqrt{t} \text{ and } \bm\mu' \in C_s.
\end{align}
Indeed, by upper hemicontinuity, models $\bm\mu'$ similar to $\bm\mu$ have answers in $i_F(\bm\mu')$ which are \quotes{close} to the ones in $i_F(\bm\mu)$, and models in $C_t$ shrink toward $\bm\mu$ due to the forced exploration of the algorithm.

We now analyze the amount of information that is gathered by \stas under the good event $\Ecal_t$. Let $t \ge  10K^4  $ and let $\widetilde{T} = \max \{ \lceil \sqrt{t} \rceil, T_{\bm\mu} \}$. Denote by $\imath \in \Ical$, the answer  that is selected from $i_F(\bm\mu)$ by the pre-specified total order over $\Ical$. Then, for $t \ge  10K^4  $, if \stas has not stopped at $t$, we have that:
\begin{align}
    \beta_{t,\delta} & \gtrsim \sum_{s=1}^t \inf_{\bm\lambda \in \neg \imath} \sum_{k \in [K]}  \omega_k(s) d(\mu_k, \lambda_k) - \widetilde{\mathcal{O}}(\sqrt{t}) \label{eq:stats-proof-1} \\
    & \ge \sum_{s = \widetilde{T}}^t \inf_{\bm\lambda \in \neg i_s} \sum_{k \in [K]} \omega_k(s) d(\mu_k, \lambda_k) - \widetilde{\mathcal{O}}( \sqrt{t}), \label{eq:stats-proof-2}
\end{align}
where the first step is due to concentration and C-Tracking, and the second one uses \Cref{eq:good-answer}. Indeed, for $s \ge \widetilde{T}$, either $i_s = \imath$ (and in this case the claim is trivial), or $i_s \ne \imath$. In this second case, from \Cref{eq:good-answer} we have that $i_s \notin i^{\star}(\bm\mu)$ and, therefore, $\bm\mu \in \neg i_s$. Now, by concentration arguments (\ie $d(\tilde{\mu}_k(s), \cdot) \approx d(\mu_k, \cdot)$), and using the definition of $\bm\omega(s)$, we have that:
\begin{align*}
    \sum_{s = \widetilde{T}}^t \inf_{\bm\lambda \in \neg i_s} \sum_{k \in [K]} \omega_k(s) d(\mu_k, \lambda_k) & \gtrsim \sum_{s = \widetilde{T}}^t \inf_{\bm\lambda \in \neg i_s} \sum_{k \in [K]} \omega_k(s) d(\tilde{\mu}_k(s), \lambda_k) - \widetilde{\mathcal{O}}( \sqrt{t}) \\
    & = \sum_{s = \widetilde{T}}^t \max_{\bm\omega \in \Delta_K} \inf_{\bm\lambda \in \neg i_s} \sum_{k \in [K]} \omega_k d(\tilde{\mu}_k(s), \lambda_k) - \widetilde{\mathcal{O}}(\sqrt{t}).
\end{align*}
The next step is crucial for relating the amount of gathered information to $T^{\star}(\bm\mu)^{-1}$. Let $\bm\mu'(s) \in C_s$ be such that $i_s \in i_F(\bm\mu'(s))$. From concentration arguments and the definition of $\bm\mu'(s)$, we have that: 
\begin{align*}
    \sum_{s = \widetilde{T}}^t \max_{\bm\omega \in \Delta_K} \inf_{\bm\lambda \in \neg i_s} \sum_{k \in [K]} \omega_k d(\tilde{\mu}_k(s), \lambda_k)  & \gtrsim \sum_{s = \widetilde{T}}^t \max_{\bm\omega \in \Delta_K} \inf_{\bm\lambda \in \neg i_s} \sum_{k \in [K]} \omega_k d({\mu}'_k(s), \lambda_k) - \widetilde{\mathcal{O}}\left(  \sqrt{t^{3/2}} \right) \\
    &\hspace{-.6cm} = \sum_{s = \widetilde{T}}^t \max_{\bm\omega \in \Delta_K} \max_{i \in \Ical} \inf_{\bm\lambda \in \neg i} \sum_{k \in [K]} \omega_k d({\mu}'_k(s), \lambda_k) - \widetilde{\mathcal{O}}\left( \sqrt{t^{3/2}} \right) \\
    & \hspace{-.6cm}\ge \sum_{s = \widetilde{T}}^t \inf_{\bm\lambda \in \neg \imath} \sum_{k \in [K]} \omega_k^{\star} d({\mu}'_k(s), \lambda_k) - \widetilde{\mathcal{O}}\left(  \sqrt{t^{3/2}} \right),
\end{align*}
for any $\bm\omega^{\star} \in \bm\omega^{\star}(\bm\mu, \neg \imath)$. The first step follows by observing that $d({\tilde{\mu}_k(s), \cdot})$ can be upper-bounded by $d({\hat{\mu}_k(s), \cdot})$, and $d({\hat{\mu}_k(s), \cdot}) \approx d(\mu'(s), \cdot)$ since $\bm\mu'(s) \in C_s$ by definition.
Then, the proof is simply concluded by noticing that:
\begin{align*}
    \sum_{s = \widetilde{T}}^t \inf_{\bm\lambda \in \neg \imath} \sum_{k \in [K]} \omega_k^{\star} d({\mu}'_k(s), \lambda_k) & \gtrsim \sum_{s = \widetilde{T}}^t \inf_{\bm\lambda \in \neg \imath} \sum_{k \in [K]} \omega_k^{\star} d({\mu}_k, \lambda_k)  - \widetilde{\mathcal{O}}( \sqrt{t^{3/2}}) \\
    & = (T-\widetilde{T}) T^{\star}(\bm\mu)^{-1} - \widetilde{\mathcal{O}}( \sqrt{t^{3/2}}),
\end{align*}
where the first step follows from concentration arguments and the fact that $\bm\mu'(s) \in C_s$. Rearrenging all the terms yields the desired result.

\paragraph{The proof idea}
As for \tas, the general idea is approximating with sub-linear terms the stopping rule with $ \sum_{s = \widetilde{T}}^t \inf_{\bm\lambda \in \neg i_s} \sum_{k \in [K]} \omega_k(s) d(\tilde{\mu}_k(s), \lambda_k)$, that is what \stas uses in its sampling rule. Now, there are two key differences with respect to \Cref{theo:proj-tas-finite-confidence}. First, to reach such objective we need to consider sufficiently large timesteps, \ie $s \ge \widetilde{T}$. The issue is that when $s$ is small, the \stas sampling rule has no control over the selected answers $i_s$ (apart from a generic total order over $\Ical$). This does not allow to easily switch from $\neg \imath$ to $\neg i_s$, \ie the step from \Cref{eq:stats-proof-1} to \Cref{{eq:stats-proof-2}}. Second, once we have obtained $\sum_{s = \widetilde{T}}^t \inf_{\bm\lambda \in \neg i_s} \sum_{k \in [K]} \omega_k(s) d(\tilde{\mu}_k(s), \lambda_k)$, this does not allow us to directly introduce $\bm\omega^{\star}(\bm\mu, \neg \imath)$ and $\neg \imath$ as we did for \tas. An intermediate step is necessary. This requires studying the difference between $d(\tilde{\mu}_k(s), \cdot)$ and $d(\mu'_k(s), \cdot)$. The reason is that $i_s$ is an answer that attains the $\argmax$ only when paired with a model $\bm\mu'(s) \in C_s$ such that $i_s \in i_F(\bm\mu'(s))$.

\paragraph{\stas in single-answer problems}
Whenever $i^{\star}(\bm\mu)$ is single-valued, the dependency on $T_{\bm\mu}$ can be removed as the step from \Cref{eq:stats-proof-1} and \Cref{eq:stats-proof-2} follows directly from the fact that $|i^{\star}(\bm\mu)|=1$ (as we did for \tas). Thus, one would obtain a result identical to \Cref{theo:proj-tas-finite-confidence} (\ie the same bound up to constant multiplicative terms). Nonetheless, we actually note that the two proofs are still different, and the reason is the different sampling rules adopted by the two algorithms. Specifically, in \stas $i_s$ is an answer in $i_F(\bm\mu'(s))$ for some $\bm\mu'(s) \in C_s$ and $\bm\omega(s) \in \bm\omega^{\star}(\tilde{\bm\mu}(s), \neg i_s)$. On the other hand, \tas directly uses $\bm\omega(s) \in \bm\omega^{\star}(\tilde{\bm\mu}(s), \neg i_s)$ for $i_s \in i_F(\tilde{\bm\mu}(s))$. 

\paragraph{On the behavior of \stas}
It is interesting to observe that our proof differs significantly from the one of asymptotic optimality by \citet{degenne2019pure}. Beyond the obvious distinction (\ie our analysis is non-asymptotic), we also note that the proof of \Cref{theo:stas-finite-confidence} does not rely on what $\bm{N}(t)/t$ converges to, nor does it exploit the convexity of the set $\bm\omega(\bm\mu, \neg \imath)$, which instead were key components in the analysis of \cite{degenne2019pure}. Instead, we only reason in terms of \quotes{information} collected by \stas by analyzing \emph{values} of functions of the form $\sum_{k \in [K]} \omega_k d(\cdot, \cdot)$. 

\section{Conclusion}\label{sec:conclusions}
This work provided the first finite-confidence characterization of the performance of Track-and-Stop and Sticky Track-and-Stop, two general algorithms that are able to solve optimally a large spectrum of pure exploration settings. Overall, we solve two open problems in the literature. First, \Cref{theo:proj-tas-finite-confidence} sheds light on the finite-confidence guarantees of \tas, thus providing theoretical support on why the algorithm usually enjoys good performance in practice. Secondly, \Cref{theo:stas-finite-confidence} provides the first finite-confidence guarantees for the general multiple-answer setting.   
To conclude, we note that our results (\Cref{theo:proj-tas-finite-confidence} and \Cref{theo:stas-finite-confidence}) have simple and natural proofs, and they both recover the asymptotic optimality whenever $\delta$ goes to $0$.

Several questions remain open. For instance, is it possible to improve the finite-confidence analysis of Sticky Track-and-Stop by removing the presence of the problem constant $T_{\bm\mu}$? We conjecture that this would require to slightly modify the sampling rule. Indeed, by selecting answers $i_t$ more strategically than using any total order over $\Ical$ might lead to stronger finite-confidence results and, eventually, more competitive performance. Furthermore, there remains a gap between lower and upper bounds in the finite-confidence regime \citep{degenne2019non,wang2021fast,barrier2022non,jourdan2023non,jourdan2023varepsilon}. Future work should focus on analyzing in greater details the finite-confidence regime of general pure exploration problems. Here, one could draw inspiration from the works of \cite{simchowitz2017simulator,al2022complexity,poiani2024best}, where the authors shed some light on the challenges in developing lower bounds and algorithms that enjoy tight dependencies not only in $\log(1/\delta)$, but also in other parameters of the instance (\eg $K$). Finally, \citet{poiani2025pure} recently extended \tas and \stas to problems where the set of correct answers is possibly infinite. However, their algorithm only attains asymptotic optimality, and our analysis cannot be straightforwardly extended to this setting. Future works, should focus on closing this gap. Here, it is relevant to mention the recent work of \citet{osogami2025optimal}, where the authors proposed an algorithm that enjoys finite confidence guarantees in a specific infinite answer problem, \ie that of estimating, up to an accuracy $\epsilon$, the value of the optimal arm.

\newpage

\subsubsection*{Acknowledgments}

Funded by the European Union. Views and opinions expressed are however those of the author(s) only and do not necessarily reflect those of the European Union or the European Research Council Executive Agency. Neither the European Union nor the granting authority can be held responsible for them.

Work supported by the Cariplo CRYPTONOMEX grant and by an ERC grant (Project 101165466 — PLA-STEER).

\subsubsection*{Reproducibility Statement}
The nature of this work is theoretical. We precisely stated and discussed the assumptions that are required to derive our result in the main text (see \Cref{ass:subgauss,ass:bounded}), and we further discussed the assumptions in \Cref{app:assumptions}. In the main text, we provided proof sketches for both \Cref{theo:proj-tas-finite-confidence} and \Cref{theo:stas-finite-confidence}, and we included complete proofs in \Cref{app:finite-confidence} and \Cref{app:stas}.

\bibliography{bibliography}
\bibliographystyle{iclr2026_conference}

\newpage

\appendix
\etocdepthtag.toc{mtappendix}
\etocsettagdepth{mtchapter}{none}
\etocsettagdepth{mtappendix}{subsection}

\begin{spacing}{1}
\tableofcontents
\end{spacing}

\newpage

\section{Structured Bandits}\label{app:structured}

For completeness, we now show that how to encode known structure such as Lipschitzianity and unimodality through $\mathcal{M}$.

We first focus on the the best-arm identification problem in Lipshitz bandit with finite arms, \ie the same setting studied in \citep{wang2021fast}. This structure can be formalized as follows: $$\mathcal{M} = \{ \bm\mu \in [\mu_{\min}, \mu_{\max}]^K: \exists i ~s.t.~ \mu_i > \mu_k, \land ~\forall k,k', |\mu_k-\mu_{k'}| \le \|  a_k - a_{k'} \|_{\infty}    \},$$ where $l > 0$ is a known constant, $a_k \in \mathbb{R}^d$ are the known feature vectors for the arms, and $[\mu_{\min}, \mu_{\max}]$ are the boundary parameters that we introduced in \Cref{ass:bounded}.

Next, we consider the unimodal setting of \citet{poiani2024best}. 
In this case, we have that:
$$
\mathcal{M} = \{ \bm\mu \in  [\mu_{\min}, \mu_{\max}]^K: \exists i \in [K]: \mu_{i} > \mu_{i+1} \ge \dots \ge \mu_K \land \mu_{i} > \mu_{i -1} \ge \dots \ge \mu_{1} \}.
$$
In other words, each bandit is characterized by an unknown index $i$ such that, the arms' mean will consistently decrease both after and before $i$. 

This reasoning can also be extended to other structures such as the dueling bandit formulation of \citet{wang2021fast}.

\section{Lower Bound}\label{app:lb}

\subsection{Single-Answer Problems}

In this section, we provide a formal statement of the lower bound for single-answer problems.

The following result follows the same arguments of Theorem 1 in \cite{garivier2016optimal}. Since we provide two different expressions for $T^{\star}(\bm\mu)$, we also report a proof for completeness. 

\begin{proposition}[Lower Bound for Single-Answer Problems \cite{garivier2016optimal}]\label{prop:lb-single-answer}
Suppose that $|i^{\star}(\bm\mu)|=1$ for all $\bm\mu \in \Mcal$.
Let $\delta < 0.15$. For any $\bm\mu \in \mathcal{M}$ and any $\delta$-correct algorithm, it holds that $\E_{\bm\mu}[\tau_\delta] \ge T^{\star}(\bm\mu) \log(1/ (2.4 \delta))$.
\end{proposition}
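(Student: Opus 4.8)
The plan is to follow the classical change-of-measure argument of \citet{garivier2016optimal}, with the only additional bookkeeping being the reconciliation of the two expressions \eqref{eq:t-star} and \eqref{eq:t-star-v2} for $T^{\star}(\bm\mu)^{-1}$. Fix $\bm\mu \in \Mcal$ and a $\delta$-correct algorithm, and assume $\E_{\bm\mu}[\tau_\delta] < \infty$ (otherwise the bound is trivial). The starting point is the transportation inequality (Lemma 1 of \citet{garivier2016optimal}): for any alternative model $\bm\lambda \in \Mcal$ and any event $\mathcal{E} \in \Fcal_{\tau_\delta}$,
\[
\sum_{k \in [K]} \E_{\bm\mu}[N_k(\tau_\delta)]\, d(\mu_k, \lambda_k) \ge \mathrm{kl}\big(\Prob_{\bm\mu}(\mathcal{E}), \Prob_{\bm\lambda}(\mathcal{E})\big),
\]
where $\mathrm{kl}(x,y)$ is the binary relative entropy. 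This follows from Wald's identity applied to the log-likelihood ratio of $\bm\mu$ against $\bm\lambda$, combined with the data-processing inequality, and it requires only that $\E_{\bm\mu}[\tau_\delta]$ be finite.

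Next I would instantiate this with a confusing instance. Pick any $\bm\lambda \in \neg i^{\star}(\bm\mu)$ and take $\mathcal{E} = \{\hat\imath_{\tau_\delta} = i^{\star}(\bm\mu)\}$. Since the problem is single-answer, $\delta$-correctness on $\bm\mu$ gives $\Prob_{\bm\mu}(\mathcal{E}) \ge 1 - \delta$; on the other hand $i^{\star}(\bm\mu) \notin i^{\star}(\bm\lambda)$ by definition of $\neg i^{\star}(\bm\mu)$, so $\delta$-correctness on $\bm\lambda$ gives $\Prob_{\bm\lambda}(\mathcal{E}) \le \delta$. Using monotonicity of $\mathrm{kl}$ in its arguments and the standard estimate $\mathrm{kl}(1-\delta,\delta) \ge \log(1/(2.4\delta))$ (valid for $\delta < 0.15$), this yields
\[
\sum_{k \in [K]} \E_{\bm\mu}[N_k(\tau_\delta)]\, d(\mu_k, \lambda_k) \ge \log\!\big(1/(2.4\delta)\big) \qquad \forall \bm\lambda \in \neg i^{\star}(\bm\mu).
\]
I would then normalize: setting $\omega_k \coloneqq \E_{\bm\mu}[N_k(\tau_\delta)]/\E_{\bm\mu}[\tau_\delta]$, the vector $\bm\omega$ lies in $\Delta_K$ because $\sum_k N_k(\tau_\delta) = \tau_\delta$. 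Dividing by $\E_{\bm\mu}[\tau_\delta]$, taking the infimum over $\bm\lambda \in \neg i^{\star}(\bm\mu)$, and bounding the specific weights $\bm\omega$ by the supremum over all of $\Delta_K$ gives
\[
\E_{\bm\mu}[\tau_\delta]\; \sup_{\bm\omega \in \Delta_K}\inf_{\bm\lambda \in \neg i^{\star}(\bm\mu)} \sum_{k \in [K]} \omega_k\, d(\mu_k, \lambda_k) \ge \log\!\big(1/(2.4\delta)\big),
\]
which is exactly $\E_{\bm\mu}[\tau_\delta] \ge T^{\star}(\bm\mu)\log(1/(2.4\delta))$ with $T^{\star}(\bm\mu)^{-1}$ as in \eqref{eq:t-star}.

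To identify \eqref{eq:t-star} with the alternative expression \eqref{eq:t-star-v2}, I would argue the equality holds \emph{pointwise} in $\bm\omega$, so that no exchange of $\sup_{\bm\omega}$ and $\max_i$ is needed. Fix $\bm\omega \in \Delta_K$. For any $i \ne i^{\star}(\bm\mu)$ we have $i \notin i^{\star}(\bm\mu)$, hence $\bm\mu \in \neg i$; since each summand $\omega_k d(\mu_k,\lambda_k)$ is nonnegative and vanishes at $\bm\lambda = \bm\mu$, it follows that $\inf_{\bm\lambda \in \neg i}\sum_k \omega_k d(\mu_k,\lambda_k) = 0$. The term for $i = i^{\star}(\bm\mu)$ is itself nonnegative, so $\max_{i \in \Ical}\inf_{\bm\lambda \in \neg i}\sum_k \omega_k d(\mu_k,\lambda_k) = \inf_{\bm\lambda \in \neg i^{\star}(\bm\mu)}\sum_k \omega_k d(\mu_k,\lambda_k)$, and taking $\sup_{\bm\omega}$ on both sides gives \eqref{eq:t-star} $=$ \eqref{eq:t-star-v2}.

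I expect no genuine conceptual obstacle here, since the transportation inequality and the $\mathrm{kl}(1-\delta,\delta)$ estimate are classical; the only points requiring care are the edge cases (handling $\E_{\bm\mu}[\tau_\delta] = \infty$ separately, and using the WLOG assumption $\bm\mu \notin \cl(\neg i^{\star}(\bm\mu))$ to guarantee $T^{\star}(\bm\mu) < \infty$) and the pointwise equivalence of the two $T^{\star}$ formulas above, which is what makes the definition usable for models outside $\Mcal$ in the algorithmic analysis.
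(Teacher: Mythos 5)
Your proposal is correct and follows essentially the same route as the paper: the change-of-measure/transportation inequality of Kaufmann et al.\ (which the paper invokes directly as their Lemma 1) combined with $\delta$-correctness to get $\sum_k \E_{\bm\mu}[N_k(\tau_\delta)]\,d(\mu_k,\lambda_k) \ge \log(1/(2.4\delta))$ for every $\bm\lambda \in \neg i^{\star}(\bm\mu)$, then normalization by $\E_{\bm\mu}[\tau_\delta]$, the sup over $\Delta_K$, and the observation that $\inf_{\bm\lambda \in \neg i}\sum_k \omega_k d(\mu_k,\lambda_k)=0$ for every $i \ne i^{\star}(\bm\mu)$ to reconcile \eqref{eq:t-star} with \eqref{eq:t-star-v2}. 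Your version merely unpacks the cited lemma (explicit event, $\mathrm{kl}$ monotonicity, the $\delta<0.15$ estimate) and handles the edge cases slightly more explicitly, which is consistent with, not different from, the paper's argument.
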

\begin{proof}
    Let $\bm\mu \in \Mcal$ and $\bm\lambda \in \lnot i^{\star}(\bm\mu)$. Then, from change of distribution arguments (\ie Lemma 1 in \cite{kaufmann2016complexity}) and the $\delta$-correctness of the algorithm, we have that:
    \begin{align*}
        \sum_{k \in [K]} \E_{\bm\mu}[N_k(\tau_\delta)] d(\mu_k, \lambda_k) \ge \log(1/(2.4\delta)).
    \end{align*}
    Applying this result for all $\bm\lambda \in \lnot i^{\star}(\bm\mu)$ and since $\bm\mu \notin \cl(\neg i^{\star}(\bm\mu))$, we have that:
    \begin{align*}
        \log(1/(2.4\delta)) & \le \inf_{\bm\lambda \in \neg i^{\star}(\bm\mu)} \sum_{k \in [K]} \E_{\bm\mu}[N_k(\tau_\delta)] d(\mu_k, \lambda_k) \\ 
        & = \E_{\bm\mu}[\tau_\delta] \inf_{\bm\lambda \in \lnot i^{\star}(\bm\mu)} \sum_{k \in [K]} \frac{\E_{\bm\mu}[N_k(\tau_\delta)]}{\E_{\bm\mu}[\tau_\delta]} d(\mu_k, \lambda_k) \\ 
        & \le \E_{\bm\mu}[\tau_\delta] \sup_{\bm\omega \in \Delta_K}\inf_{\bm\lambda \in \lnot i^{\star}(\bm\mu)} \sum_{k \in [K]} \omega_k d(\mu_k, \lambda_k) \\
        & = \E_{\bm\mu}[\tau_\delta] \sup_{\bm\omega \in \Delta_K} \max_{i \in \Ical} \inf_{\bm\lambda \in \lnot i} \sum_{k \in [K]} \omega_k d(\mu_k, \lambda_k),
    \end{align*}
    where, the last step follows from the fact that, for all $i \ne i^{\star}(\bm\mu)$, $\bm\mu \in \neg i$, and, hence $\inf_{\bm\lambda \in \lnot i} \sum_{k \in [K]} \omega_k d(\mu_k, \lambda_k) = 0$.
    The proof then follows by the definition of $T^{\star}(\bm\mu)$ together with the fact that $\sup_{\bm\omega \in \Delta_K}\inf_{\bm\lambda \in \lnot i^{\star}(\bm\mu)} \sum_{k \in [K]} \omega_k d(\mu_k, \lambda_k) > 0$ since $\bm\mu \notin \cl(\bm\mu)$.
\end{proof}

\subsection{Multiple-Answer Problems}

In this section, we formally state the lower bound for multiple answer problems.

\begin{proposition}[Lower Bound for Multiple-Answer Problems \cite{degenne2019pure}]\label{prop:lb-multiple-answers}
    Let $\Ical$ be a finite set and let $\bm\mu \in \Mcal$. Then, it holds that:
    \begin{align}
        \liminf_{\delta \to 0} \frac{\E_{\bm\mu}[\tau_\delta]}{\log(1/\delta)} \ge T^{\star}(\bm\mu),
    \end{align}
    where $T^{\star}(\bm\mu)^{-1}$ is given by:
    \begin{align}
        T^{\star}(\bm\mu)^{-1} & = \sup_{\bm\omega \in \Delta_K} \max_{i \in i^{\star}(\bm\mu)} \inf_{\bm\lambda \in \neg i} \sum_{k \in [K]} \omega_k d(\mu_k, \lambda_k) \label{eq:prop-lb-1} \\
        & = \sup_{\bm\omega \in \Delta_K} \max_{i \in \Ical} \inf_{\bm\lambda \in \neg i} \sum_{k \in [K]} \omega_k d(\mu_k, \lambda_k).\label{eq:prop-lb-2}
    \end{align}
\end{proposition}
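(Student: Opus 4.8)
The plan is to follow the same change-of-measure template used for the single-answer bound in \Cref{prop:lb-single-answer}, while accounting for the fact that $i^{\star}(\bm\mu)$ is now multi-valued. Two preliminary observations make the statement cleaner. First, the equivalence of \Cref{eq:prop-lb-1} and \Cref{eq:prop-lb-2} is immediate: for any $i \notin i^{\star}(\bm\mu)$ we have $\bm\mu \in \neg i$, so $\inf_{\bm\lambda \in \neg i} \sum_k \omega_k d(\mu_k, \lambda_k) \le \sum_k \omega_k d(\mu_k, \mu_k) = 0$, and appending these answers to the inner $\max$ cannot raise it; hence $\max_{i \in \Ical}$ and $\max_{i \in i^{\star}(\bm\mu)}$ coincide. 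Second, writing $T^{\star}(\bm\mu, \neg i)^{-1} \coloneqq \sup_{\bm\omega \in \Delta_K} \inf_{\bm\lambda \in \neg i} \sum_k \omega_k d(\mu_k, \lambda_k)$ and interchanging the two (commuting) maximizations gives $T^{\star}(\bm\mu)^{-1} = \max_{i \in i^{\star}(\bm\mu)} T^{\star}(\bm\mu, \neg i)^{-1}$, so the target is $\E_{\bm\mu}[\tau_\delta] \gtrsim \big(\min_{i \in i^{\star}(\bm\mu)} T^{\star}(\bm\mu, \neg i)\big) \log(1/\delta)$.

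The naive route is to fix one answer and apply Lemma 1 of \cite{kaufmann2016complexity}: for every $i \in i^{\star}(\bm\mu)$ and $\bm\lambda \in \neg i$, $\delta$-correctness forces $\Prob_{\bm\lambda}(\hat{\imath}_{\tau_\delta} = i) \le \delta$, so $\sum_k \E_{\bm\mu}[N_k(\tau_\delta)] d(\mu_k, \lambda_k) \ge \mathrm{kl}(p_i, \delta)$ with $p_i \coloneqq \Prob_{\bm\mu}(\hat{\imath}_{\tau_\delta} = i)$. Taking the infimum over $\bm\lambda \in \neg i$ and dividing by $\E_{\bm\mu}[\tau_\delta]$, with the empirical allocation $\omega_k = \E_{\bm\mu}[N_k(\tau_\delta)]/\E_{\bm\mu}[\tau_\delta]$, yields $\E_{\bm\mu}[\tau_\delta]\, T^{\star}(\bm\mu, \neg i)^{-1} \ge \mathrm{kl}(p_i, \delta)$. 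The difficulty specific to multiple answers is that, unlike the single-answer case where the unique correct answer has $\Prob_{\bm\mu}(\hat{\imath}_{\tau_\delta} = i^{\star}(\bm\mu)) \ge 1-\delta$, here the mass $\sum_{i \in i^{\star}(\bm\mu)} p_i \ge 1-\delta$ may be split across several answers, so $\max_i p_i$ can be as small as $1/|i^{\star}(\bm\mu)|$. Plugging this in loses a \emph{constant} factor $1/|i^{\star}(\bm\mu)|$ that does not vanish as $\delta \to 0$, and therefore fails to recover the claimed $T^{\star}(\bm\mu)$; this gap is genuine, since at a symmetric instance with several equally hard answers an algorithm may really return each with probability $\approx 1/|i^{\star}(\bm\mu)|$.

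To avoid this loss I would \emph{decompose the stopping time over the returned answer} rather than fix a single one. Writing $\E_{\bm\mu}[\tau_\delta] = \sum_{i \in i^{\star}(\bm\mu)} \E_{\bm\mu}[\tau_\delta \bm{1}\{\hat{\imath}_{\tau_\delta} = i\}] + \E_{\bm\mu}[\tau_\delta \bm{1}\{\hat{\imath}_{\tau_\delta} \notin i^{\star}(\bm\mu)\}]$, the goal is a \emph{per-answer} (conditional) bound $\E_{\bm\mu}[\tau_\delta \bm{1}\{\hat{\imath}_{\tau_\delta} = i\}] \gtrsim p_i\, T^{\star}(\bm\mu, \neg i) \log(1/\delta)$. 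Granting this, summing over $i \in i^{\star}(\bm\mu)$ and bounding each $T^{\star}(\bm\mu, \neg i) \ge \min_j T^{\star}(\bm\mu, \neg j) = T^{\star}(\bm\mu)$ gives $\E_{\bm\mu}[\tau_\delta] \gtrsim \big(\sum_{i \in i^{\star}(\bm\mu)} p_i\big) T^{\star}(\bm\mu) \log(1/\delta) \ge (1-\delta)\, T^{\star}(\bm\mu)\log(1/\delta)$, and dividing by $\log(1/\delta)$ and sending $\delta \to 0$ closes the argument. Because the weights $p_i$ now sum to $\ge 1-\delta$ instead of entering through a single $\max_i p_i$, the spurious factor disappears.

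The main obstacle is exactly this per-answer bound, i.e.\ a \emph{conditional} transportation inequality. The clean unconditional identity $\sum_k \E_{\bm\mu}[N_k(\tau_\delta)] d(\mu_k, \lambda_k) = \E_{\bm\mu}\big[\log \tfrac{dP_{\bm\mu}}{dP_{\bm\lambda}}\big]$ rests on Wald's identity for the log-likelihood-ratio martingale, and this martingale structure is delicate once we restrict to the terminal event $\{\hat{\imath}_{\tau_\delta} = i\} \in \Fcal_{\tau_\delta}$. I would control it as follows: a Jensen/change-of-measure step gives $\E_{\bm\mu}\big[\log\tfrac{dP_{\bm\mu}}{dP_{\bm\lambda}} \,\big|\, \hat{\imath}_{\tau_\delta} = i\big] \ge \log(p_i/\Prob_{\bm\lambda}(\hat{\imath}_{\tau_\delta}=i)) \ge \log(p_i/\delta)$ for $\bm\lambda \in \neg i$, after which one relates the conditional log-likelihood ratio to the conditional sample counts $\E_{\bm\mu}[N_k(\tau_\delta)\bm{1}\{\hat{\imath}_{\tau_\delta}=i\}]$ and optimizes over $\bm\lambda \in \neg i$ to introduce $T^{\star}(\bm\mu,\neg i)^{-1}$. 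Making this last relation rigorous (the conditioning breaks the exact Wald identity) is the technical heart of the proof, and is precisely where the multiple-answer lower bound of \cite{degenne2019pure} departs from the elementary single-answer argument.
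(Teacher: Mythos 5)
Your opening observation---that \Cref{eq:prop-lb-1} and \Cref{eq:prop-lb-2} coincide because every $i \notin i^{\star}(\bm\mu)$ contributes value $0$ to the inner maximum---is correct, and it is in fact the \emph{only} thing the paper proves for this proposition: for the asymptotic inequality itself the paper defers entirely to Theorem~1 of \cite{degenne2019pure}. Had you done the same (cite the known result, prove the reformulation), your proof would match the paper's.

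The remainder of your proposal contains a genuine gap, which you partially acknowledge: the per-answer bound $\E_{\bm\mu}[\tau_\delta \bm{1}\{\hat{\imath}_{\tau_\delta}=i\}] \gtrsim p_i\, T^{\star}(\bm\mu,\neg i)\log(1/\delta)$ is asserted but never proved, and it is not a deferrable technicality---it \emph{is} the cited theorem. Worse, the route you sketch faces a sign obstruction, not just a loss of exactness. Writing $L_\tau$ for the log-likelihood ratio against $\bm\lambda \in \neg i$ and $M_\tau \coloneqq L_\tau - \sum_k N_k(\tau_\delta)\, d(\mu_k,\lambda_k)$ for its martingale part, your argument needs $\E_{\bm\mu}[L_\tau \bm{1}\{\hat{\imath}_{\tau_\delta}=i\}] \le \sum_k \E_{\bm\mu}[N_k(\tau_\delta)\bm{1}\{\hat{\imath}_{\tau_\delta}=i\}]\, d(\mu_k,\lambda_k)$, i.e.\ $\E_{\bm\mu}[M_\tau \bm{1}\{\hat{\imath}_{\tau_\delta}=i\}] \le 0$. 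But the correlation typically goes the other way: the algorithm returns $i$ precisely on those sample paths where the evidence against $\neg i$ is unusually strong, i.e.\ where $M_\tau$ is large. (Concretely: one Gaussian arm with $\mu = 0$ and answers ``$\mu > -1$'', ``$\mu < 1$''; conditioning on returning the first answer selects paths with above-average $\sum_s X_s$, which is exactly $M_\tau$ for $\lambda = -1$.) So the conditional Wald relation you hope to ``make rigorous'' is false as an inequality in the direction you need, and the argument must be restructured: either bound the error term via $\E_{\bm\mu}[|M_\tau|] \le \sqrt{c\,\E_{\bm\mu}[\tau_\delta]}$ (optional stopping on the quadratic variation), which is $o(\log(1/\delta))$ only after an additional contradiction/truncation step assuming $\E_{\bm\mu}[\tau_\delta] = \mathcal{O}(\log(1/\delta))$; or argue pathwise, using $\Prob_{\bm\lambda}(\hat{\imath}_{\tau_\delta}=i) = \E_{\bm\mu}[\bm{1}\{\hat{\imath}_{\tau_\delta}=i\}e^{-L_\tau}] \le \delta$ to show that on most of $\{\hat{\imath}_{\tau_\delta}=i\}$ the realized statistic $\inf_{\bm\lambda \in \neg i}\sum_k N_k(\tau_\delta) d(\mu_k,\lambda_k)$ exceeds $(1-o(1))\log(1/\delta)$, with uniform control over $\bm\lambda \in \neg i$. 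Either repair is the substance of the theorem in \cite{degenne2019pure}. Your decomposition over the returned answer is the right idea for avoiding the $1/|i^{\star}(\bm\mu)|$ loss that you correctly diagnose in the naive argument, but as written the proposal proves only that lossy bound, not the claimed one.
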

\begin{proof}
    The proof is exactly as in Theorem 1 in \cite{degenne2019pure}. Specifically, in that paper, the result was stated with the expression of $T^{\star}(\bm\mu)^{-1}$ given in \Cref{eq:prop-lb-1}. \Cref{eq:prop-lb-2} follows by noticing that, for all $i \notin i^{\star}(\bm\mu)$, then $\bm\mu \in \neg i$, and, hence, $\inf_{\bm\lambda \in \neg i} \sum_{k \in [K]} \omega_k d(\mu_k, \lambda_k) = 0$.
\end{proof}

\section{On the Assumptions}\label{app:assumptions}

In this section, we further discuss our assumptions. As mentioned in the main text, \Cref{ass:subgauss} is a mild requirement that is only needed for concentration arguments. Thus, in the following, we focus on \Cref{ass:bounded}. As our proofs show, \Cref{ass:bounded} is only needed to bound differences of infimum of optimization problems which involve KL divergences. Specifically, it is employed only to control differences in KL for functions of the form:
\begin{align}\label{eq:app-ass}
    \sum_{k}\omega_k ( d(\mu_k, \lambda_{\mu,k}) - d(\mu'_k, \lambda_{\mu',k})),
\end{align}
where $\bm{\lambda}_\mu'$ is the minimizer of $\inf_{\bm\lambda \in \neg i} \omega_k d(\mu'_k, \lambda_k)$ for some $\neg i$. 
Therefore, our results holds for any family $\mathcal{M}$ of bandits for which it is possible to upper bound (in a Lipschitz fashion w.r.t. $\bm\mu$) functions of the form of \Cref{eq:app-ass}. 

At this point, we remark on the following aspects.
\begin{itemize}
    \item \citet{degenne2019non} originally provided the aforementioned intuitive relaxation of \Cref{ass:bounded} and we invite the interested reader to check their Appendix F for further details.
    \item \citet{degenne2019non} also shows that for Gaussian setting on unbounded domains, \Cref{eq:app-ass} can actually be bounded in a Lipschitz fashion. Hence, when dealing with Gaussian distributions, we can actually operate on unbounded domains (\ie we can remove \Cref{ass:bounded}).
\end{itemize} 

Finally, we conclude by noticing that there exist works that have provided finite-confidence guarantees outside of \Cref{ass:bounded}. In particular:
\begin{itemize}
    \item \cite{jourdan2023non} derived finite-confidence results for an optimistic variant of the Top-Two Algorithm without using \Cref{ass:bounded}. Nonetheless, the authors are restricting their analysis to Gaussian distributions, and, as we discussed above, our analysis can easily be generalized to cover this scenario.
    
    \item \cite{barrier2022non} also provides finite-confidence analysis outside of \Cref{ass:bounded}. Nonetheless, their non-asymptotic bounds feature an extra factor$$
\frac{1}{\omega_{\min}(\mu)^2}\,\exp\bigl(-\omega_{\min}(\mu)\bigr),$$
    where $\omega_{min}(\mu) = \min_{k \in [K]} \omega^{\star}_k(\mu)$. We note that $\omega_{\min}$ can be lower-bounded using the minimum gap for Gaussian distributions (see the comment below Theorem 5 in \citet{barrier2022non}), and thus it is not an issue for Gaussian best-arm identification problems, as it can become arbitrary large only for instances which for which the sample complexity lower bound as well tends to $\infty$. Nonetheless, this is not the case for Bernoulli bandits. Indeed, consider a best-arm identification problem in a Bernoulli bandit scenario over $3$ arms. Suppose that $\mu = (x, 0.8, 0.9)$. It is easy to see that $\omega_{1} \to 0$ as $x \to 0$. Therefore, without \cref{ass:bounded}, that finite-confidence guarantees can become vacuous outside of the Gaussian setting.
    
    \item Finally, \citet{wang2021fast} also provided finite-confidence guarantees outside of \Cref{ass:bounded}; nonetheless, additional assumptions are needed in order to obtain the results. We refer the interested reader to Assumption 1-3 in \citet{wang2021fast} for the technical requirements. Here, we only note that their finite-confidence analysis depend on assumptions that involve the gradients of the lower bound as a function of $\bm\omega$. Importantly, the main purpose of their assumptions is the same as ours, i.e., bounding functions of the form of \eqref{eq:app-ass}. This is evident from Lemma 14 in \cite{wang2021fast}. 
\end{itemize}

\section{Non-Asymptotic Bound for Track-and-Stop}\label{app:finite-confidence}

In this section, we analyze the version of \tas that makes use of projection within the sampling rule. Specifically, $\bm\omega(s) \in \bm\omega^{\star}(\tilde{\bm\mu}(t))$, where $\bmMuTilde(t)$ denotes the orthogonal projection of $\bmMuHat(t)$ onto $[\muMin, \muMax]^K$. 
Before delving into the analysis, we note that it holds, due to the convexity of $d(\cdot, \cdot)$ (see, \eg \cite{cappe2013kullback}), we have that:
\begin{align}
    & d(\hat{\mu}_k(t), \lambda) \ge d(\tilde{\mu}_k(t), \lambda), \quad \forall k \in [K], \lambda \in [\muMin, \muMax] \label{eq:proj} \\
    & d(\lambda, \hat{\mu}_k(t)) \ge d(\lambda, \tilde{\mu}_k(t)) \quad \forall k \in [K], \lambda \in [\muMin, \muMax] \label{eq:proj-2}
\end{align}

Now, we start by upper bounding the expectation of $\tau_\delta$ using an arbitrary good-event which implies stopping. The following result is standard in pure exploration works (see, \eg, \cite{degenne2019non}) and the proof is reported for completeness.

\begin{lemma}[Expectation Upper Bound]\label{lemma:expectation}
Consider a sequence of events $\{ \mathcal{E}_t \}_{t\ge 3}$ such that, there exists $T_0(\delta)$ and for $t \ge T_0(\delta)$ it holds that $\mathcal{E}_t \subseteq \{ \tau_\delta \le t \}$. Then, it holds that:
\begin{align*}
    \E_{\bm\mu}[\tau_\delta] \le T_0(\delta) + \sum_{t=3}^{+\infty} \mathbb{P}_{\bm\mu}(\mathcal{E}_t^c).
\end{align*}
\end{lemma}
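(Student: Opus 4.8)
The plan is to bound the expectation $\E_{\bm\mu}[\tau_\delta]$ by splitting the nonnegative integer-valued random variable $\tau_\delta$ according to whether it exceeds the deterministic threshold $T_0(\delta)$ or not. Since $\tau_\delta$ is a stopping time taking values in $\Naturals$, I would write $\E_{\bm\mu}[\tau_\delta] = \sum_{t=0}^{+\infty} \mathbb{P}_{\bm\mu}(\tau_\delta > t)$, or equivalently work directly with the tail sum $\sum_{t \ge 1}\mathbb{P}_{\bm\mu}(\tau_\delta \ge t)$. The key structural input is the hypothesis that $\mathcal{E}_t \subseteq \{\tau_\delta \le t\}$ for all $t \ge T_0(\delta)$, which upon taking complements reads $\{\tau_\delta > t\} \subseteq \mathcal{E}_t^c$ for such $t$. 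This lets me control the tail of $\tau_\delta$ beyond $T_0(\delta)$ purely in terms of the failure probabilities of the good events.

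Concretely, I would split the tail sum at $T_0(\delta)$:
\begin{align*}
    \E_{\bm\mu}[\tau_\delta] = \sum_{t=0}^{+\infty} \mathbb{P}_{\bm\mu}(\tau_\delta > t) = \sum_{t=0}^{\lfloor T_0(\delta) \rfloor - 1} \mathbb{P}_{\bm\mu}(\tau_\delta > t) + \sum_{t \ge \lfloor T_0(\delta)\rfloor} \mathbb{P}_{\bm\mu}(\tau_\delta > t).
\end{align*}
For the first (finite) block I bound each probability trivially by $1$, so that this block contributes at most $T_0(\delta)$ terms, yielding a contribution of at most $T_0(\delta)$. For the second block, I invoke the inclusion $\{\tau_\delta > t\} \subseteq \mathcal{E}_t^c$ valid for $t \ge T_0(\delta)$, giving $\mathbb{P}_{\bm\mu}(\tau_\delta > t) \le \mathbb{P}_{\bm\mu}(\mathcal{E}_t^c)$, and hence bound this tail by $\sum_{t \ge T_0(\delta)} \mathbb{P}_{\bm\mu}(\mathcal{E}_t^c) \le \sum_{t=1}^{+\infty} \mathbb{P}_{\bm\mu}(\mathcal{E}_t^c)$. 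Combining the two blocks yields the claimed inequality $\E_{\bm\mu}[\tau_\delta] \le T_0(\delta) + \sum_{t=1}^{+\infty} \mathbb{P}_{\bm\mu}(\mathcal{E}_t^c)$.

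There is no genuine obstacle here — this is a standard layer-cake argument. The only points demanding mild care are bookkeeping ones: matching the index ranges correctly (whether one writes $\tau_\delta > t$ or $\tau_\delta \ge t$, and where the floor of $T_0(\delta)$ lands), and ensuring the events are only assumed defined for $t \ge 3$ as in the statement, which is harmless since the missing initial terms only contribute to the part bounded by $T_0(\delta)$. I would present the computation compactly in a single displayed chain of inequalities, using the tail-sum identity for the expectation of a nonnegative integer-valued random variable together with the containment $\{\tau_\delta > t\} \subseteq \mathcal{E}_t^c$ as the sole nontrivial step.
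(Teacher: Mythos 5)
Your proposal is correct and follows essentially the same argument as the paper: the tail-sum identity $\E_{\bm\mu}[\tau_\delta] = \sum_{t \ge 0} \mathbb{P}_{\bm\mu}(\tau_\delta > t)$, splitting the sum at $T_0(\delta)$, bounding the initial block trivially by $T_0(\delta)$, and using the inclusion $\{\tau_\delta > t\} \subseteq \mathcal{E}_t^c$ on the tail. If anything, your write-up is more careful about index bookkeeping than the paper's one-line version.
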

\begin{proof}
    It holds that:
    \begin{align*}
        \E_{\bm\mu}[\tau_\delta] = \sum_{t=0}^{+\infty} \mathbb{P}_{\bm\mu}(\tau_\delta > t) \le 10K^4 + T_0(\delta) + \sum_{t=3+T_0(\delta)} \mathbb{P}_{\bm\mu}(\tau_\delta > t) \le T_0(\delta) +\sum_{t=1}^{+\infty} \mathbb{P}_{\bm\mu}(\mathcal{E}^c_t).
    \end{align*}
\end{proof}

In our analysis, we will make use of the following good event:
\begin{align*}
    \Ecal_t = \left\{ \forall s \in \left[\lceil \sqrt{t} \rceil, t\right], \sum_{k \in[K]} N_k(s) d(\hat{\mu}_k(s), \mu_k) \le {8K}\log(s) \right\}
\end{align*}
Indeed, it can be shown with probabilistic arguments that $\sum_{t=3}^{+\infty}\mathbb{P}_{\bm\mu}(\Ecal_t^c) \le 2eK$ (\Cref{lemma:good-event}).
In the following, we compact the notation and we define $f(t) \coloneqq 8K \log(t)$. The function $f(t)$ can be understood as an exploration function.

Next, the following lemma is the key result behind our analysis. 

\begin{lemma}[Learning the Equilibrium (\tas)]\label{lemma:p-tas-learning}
    Let $t \ge 10K^4$. If \tas has not stopped at $t$, on $\Ecal_t$, it holds that:
    \begin{align*}
        \frac{\beta_{t,\delta}}{t} \ge \frac{t - \sqrt{t} - 1}{t} T^{\star}(\bm\mu)^{-1} - \sum_{i=1}^4 h_i(t)
    \end{align*}
    where:
    \begin{align*}
        & h_1(t) \le \frac{D\sqrt{2\sigma^2 Kf(t)t}}{{t}}\\
        & h_2(t) \le \frac{LK^2\ln(K) \sqrt{t+K^2}}{t}\\
        & {h}_3(t) \le \frac{D\sqrt{2\sigma^2f(t)}}{t}(K\ln K + 4\sqrt{Kt} + K^2\sqrt{t+K^2})  \\
        & {h}_4(t) \le  \frac{D\sqrt{2\sigma^2f(t)}}{t} \sqrt{8{t}^{3/2} +8Kt\ln(t)}.
    \end{align*}
\end{lemma}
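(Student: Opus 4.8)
The plan is to begin from the negation of the stopping condition and to successively lower bound the stopping statistic, peeling off one error term at each step until $(t-\sqrt t-1)T^{\star}(\bm\mu)^{-1}$ appears. Since \tas has not stopped at time $t$, the stopping rule gives
\[
\beta_{t,\delta}\;>\;\max_{i\in\Ical}\inf_{\bm\lambda\in\neg i}\sum_{k\in[K]}N_k(t)\,d(\hat\mu_k(t),\lambda_k)\;\ge\;\inf_{\bm\lambda\in\neg i^{\star}(\bm\mu)}\sum_{k\in[K]}N_k(t)\,d(\hat\mu_k(t),\lambda_k),
\]
where the second inequality simply selects the (single) correct answer $i^{\star}(\bm\mu)$ from the maximum over $\Ical$.

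First I would trade the empirical counts $N_k(t)$ for the cumulative sampling weights $\sum_s\omega_k(s)$ and the empirical divergences $d(\hat\mu_k(t),\cdot)$ for the true ones $d(\mu_k,\cdot)$. On $\Ecal_t$ the bound $\sum_k N_k(t)\,d(\hat\mu_k(t),\mu_k)\le f(t)$, combined with sub-Gaussianity (\Cref{ass:subgauss}), the Lipschitz estimate $|d(p,\lambda)-d(q,\lambda)|\le D|p-q|$ coming from the bounded natural parameters (\Cref{ass:bounded}), and a Cauchy--Schwarz step over the $K$ arms, converts $d(\hat\mu_k(t),\cdot)$ into $d(\mu_k,\cdot)$ and yields $h_1$; the C-Tracking guarantee (deviation $O(K\log K)$) together with the forced-exploration projection slack ($O(K^2\sqrt{t+K^2})$, weighted by the uniform bound $d\le L$) replaces $N_k(t)$ by $\sum_s\omega_k(s)$ and yields $h_2$. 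Pulling the sum outside the infimum by superadditivity and dividing by $t$, I obtain
\[
\frac{\beta_{t,\delta}}{t}\;\ge\;\frac1t\sum_{s=1}^t\inf_{\bm\lambda\in\neg i^{\star}(\bm\mu)}\sum_k\omega_k(s)\,d(\mu_k,\lambda_k)\;-\;h_1(t)-h_2(t).
\]

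Next I would replace the fixed constraint set $\neg i^{\star}(\bm\mu)$ by the round-dependent $\neg i_s$: this is where single-valuedness of $i^{\star}$ is essential, because if $i_s\ne i^{\star}(\bm\mu)$ then $\bm\mu\in\neg i_s$, forcing $\inf_{\bm\lambda\in\neg i_s}\sum_k\omega_k(s)d(\mu_k,\lambda_k)\le 0$, so the replacement can only decrease each summand. I would then discard the rounds $s<\lceil\sqrt t\rceil$ (whose summands are nonnegative) and convert to the projected divergences $d(\tilde\mu_k(s),\cdot)$ via \Cref{eq:proj} and a second Lipschitz estimate (using $|\tilde\mu_k(s)-\mu_k|\le|\hat\mu_k(s)-\mu_k|$ by non-expansiveness of the projection, since $\mu_k\in[\muMin,\muMax]$); the resulting per-round, $\omega_k(s)$-weighted concentration error summed over $[\sqrt t,t]$ is $h_3$. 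The summand is now exactly the saddle value $\sup_{\bm\omega}\max_i\inf_{\bm\lambda\in\neg i}\sum_k\omega_k\,d(\tilde\mu_k(s),\lambda_k)$ optimized by the sampling rule, so I may lower bound it by inserting any fixed $\bm\omega^{\star}\in\bm\omega^{\star}(\bm\mu)$ together with the answer $i^{\star}(\bm\mu)$ --- this is precisely where the generalized expression \Cref{eq:t-star-v2} is needed, since $\tilde\mu(s)$ may fall outside $\Mcal$. A final conversion of $d(\tilde\mu_k(s),\cdot)$ back to $d(\mu_k,\cdot)$ on $\Ecal_t$, summed over the $(t-\sqrt t-1)$ admissible rounds, reveals $\inf_{\bm\lambda\in\neg i^{\star}(\bm\mu)}\sum_k\omega_k^{\star}d(\mu_k,\lambda_k)=T^{\star}(\bm\mu)^{-1}$ and produces $h_4$.

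The main obstacle is the error bookkeeping: each of the three sources --- the KL-to-distance conversion on $\Ecal_t$, the C-Tracking deviation, and the forced-exploration projection --- must be tracked explicitly and summed over $s$ with the right powers of $t$ and $K$ to reproduce exactly the stated $h_1,\dots,h_4$. In particular the $t^{3/2}$ scaling inside $h_4$ comes from summing per-round errors of order $\sqrt{f(s)/N_k(s)}$ with $N_k(s)\gtrsim\sqrt s$ over the $\sim t$ rounds in $[\sqrt t,t]$, which gives $\widetilde{\mathcal{O}}(\sqrt{f(t)}\,t^{3/4})=\widetilde{\mathcal{O}}(\sqrt{D_K t^{3/2}})$. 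The role of the hypothesis $t\ge 10K^4$ is exactly to guarantee a strictly positive forced-exploration lower bound $N_k(s)\ge\sqrt{\sqrt t+K^2}-2K>0$ for every $s\ge\sqrt t$ (note $\sqrt{\sqrt t+K^2}>2K\iff t>9K^4$), which keeps these per-round concentration terms finite and the whole chain well-defined.
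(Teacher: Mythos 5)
Your proposal matches the paper's proof essentially step for step: the same decomposition into $h_1$ (good-event concentration at time $t$), $h_2$ (C-Tracking deviation), the switch from $\neg i^{\star}(\bm\mu)$ to $\neg i_s$ via single-valuedness, restriction to rounds $s \ge \sqrt{t}$, $h_3$, the saddle-point lower bound through \Cref{eq:t-star-v2} with a fixed $\bm\omega^{\star} \in \bm\omega^{\star}(\bm\mu)$, and $h_4$, including the correct accounting of the $\sqrt{D_K t^{3/2}}$ term and the role of $t \ge 10K^4$. It is correct and takes essentially the same approach as the paper.
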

\begin{proof}
    Let us define ${h}_1(t)$ as follows.
    \begin{align*}
    {h}_1(t) \coloneqq \frac{1}{t} \left( \inf_{\bm\lambda \in \lnot i^{\star}(\bm\mu)} \sum_{k \in [K]} N_k(t) d(\mu_k, \lambda_k) - \inf_{\bm\lambda \in \lnot i^{\star}(\bm\mu)} \sum_{k \in [K]} N_k(t) d(\hat{\mu}_k(t), \lambda_k) \right) 
    \end{align*}
    If \tas has not stopped at $t \in \mathbb{N}$, then we have that:
    \begin{align*}
        \frac{\beta_{t,\delta}}{t} & \ge \frac{1}{t} \max_{i \in \Ical} \inf_{\bm\lambda \in \lnot i} \sum_{k \in [K]} N_k(t) d(\hat{\mu}_k(t), \lambda_k) \tag{Stopping Rule}\\ &
        \ge \frac{1}{t} \inf_{\bm\lambda \in \lnot i^{\star}(\bm\mu)} \sum_{k \in [K]} N_k(t) d(\hat{\mu}_k(t), \lambda_k) \\ 
        & \ge \frac{1}{t} \inf_{\bm\lambda \in \lnot i^{\star}(\bm\mu)} \sum_{k \in [K]} N_k(t) d({\mu}_k, \lambda_k) - h_1(t) \tag{Definition of $h_1(t)$}.
     \end{align*}
     Next, we upper bound $h_1(t)$ on the good event $\Ecal_t$.
     \begin{align*}
         h_1(t) & = \frac{1}{t} \left(\inf_{\bm\lambda \in \lnot i^{\star}(\bm\mu)} \sum_{k \in [K]} N_k(t) d(\mu_k, \lambda_k) - \inf_{\bm\lambda \in \lnot i^{\star}(\bm\mu)} \sum_{k \in [K]} N_k(t) d(\hat{\mu}_k(t), \lambda_k) \right) \\
         & \le \frac{1}{t} \sum_{k \in [K]} N_k(t) \sup_{\lambda \in \Mcal} \left( d(\mu_k, \lambda) - d(\hat{\mu}_k(t), \lambda) \right) \\ 
         & \le \frac{1}{t} \sum_{k \in [K]} N_k(t) \sup_{\lambda \in \Mcal} (\nu_{\mu_k} -\nu_{\lambda}) |\mu_k - \hat{\mu}_k(t)| \tag{\Cref{lemma:canonical-exp-family-diff}}  \\
         & \le \frac{D}{t} \sum_{k \in [K]} N_k(t)   |\mu_k - \hat{\mu}_k(t)| \tag{\Cref{ass:bounded}} \\ 
         & \le \frac{D}{t} \sum_{k \in [K]} N_k(t) \sqrt{2 \sigma^2 d(\hat{\mu}_k(t), \mu_k)} \tag{\Cref{ass:subgauss}} \\
         & \le \frac{D}{t} \sum_{k \in [K]} N_k(t) \sqrt{2 \sigma^2 \frac{f(t)}{N_k(t)}} \tag{On $\Ecal_t$, \Cref{lemma:good-event}} \\
         & \le \frac{D\sqrt{2\sigma^2K f(t){t}}}{{t}} \tag{By concavity of $\sqrt{\cdot}$}.
      \end{align*}

    We continue with a lower bound on $\frac{1}{t} \inf_{\bm\lambda \in \lnot i^{\star}(\bm\mu)} \sum_{k \in [K]} N_k(t) d({\mu}_k, \lambda_k)$. Let $\{ \bm\omega(s) \}_{s=1}^t$ be the sequence of empirical oracle weights selected by \tas, \ie $\bm\omega(s) \in \bm\omega^{\star}(\bmMuHat(t))$. Then, we have that: 
    \begin{align*}
          \frac{1}{t} \inf_{\bm\lambda \in \lnot i^{\star}(\bm\mu)} \sum_{k \in [K]} N_k(t) d({\mu}_k, \lambda_k) & \ge  \frac{1}{t} \inf_{\bm\lambda \in \neg i^{\star}(\bm\mu)} \sum_{k \in [K]} \sum_{s=1}^t \omega_k(s) d(\mu_k, \lambda_k) - h_2(t),
    \end{align*}
    where $h_2(t)$ is given by:
    \begin{align*}
          h_2(t) & \coloneqq \frac{1}{t} \inf_{\bm\lambda \in \lnot i^{\star}(\bm\mu)}\sum_{k \in [K]} \left(\sum_{s=1}^t \omega_k(s) -N_k(t) \right) d(\mu_k, \lambda_k) \\ 
          & \le \frac{1}{t} K\ln(K) \sqrt{t+K^2} \inf_{\bm\lambda \in \neg i^{\star}(\bm\mu)} \sum_{k \in [K]} d(\mu_k, \lambda_k) \tag{\Cref{lemma:tracking}} \\
          & \coloneqq \frac{L K^2 \ln(K) \sqrt{t+K^2} }{t} \tag{\Cref{ass:bounded}}.
    \end{align*}

    Then, we lower bound $\frac{1}{t} \inf_{\bm\lambda \in \neg i^{\star}(\bm\mu)} \sum_{k \in [K]} \sum_{s=1}^t \omega_k(s) d(\mu_k, \lambda_k)$. To this end, we recall that, by definition:
    \begin{align*}
        \bm\omega(s) \in \argmax_{\bm\omega \in \Delta_K} \max_{i \in \Ical} \inf_{\bm\lambda \in \neg i} \sum_{k \in [K]} \omega_k d(\tilde{\mu}_k(s), \lambda_k).
    \end{align*}

    Let us denote by $i_s$ an answer that attains the argmax when paired with $\bm\omega(s)$. Then, we have that:
    \begin{align*}
        \frac{1}{t} \inf_{\bm\lambda \in \neg i^{\star}(\bm\mu)} \sum_{k \in [K]} \sum_{s=1}^t \omega_k(s) d(\mu_k, \lambda_k) & \ge \frac{1}{t} \sum_{s=1}^t   \inf_{\bm\lambda \in \neg i^{\star} (\bm\mu)} \sum_{k \in [K]}\omega_k(s) d(\mu_k, \lambda_k)  \\
        & \ge \frac{1}{t} \sum_{s=1}^t   \inf_{\bm\lambda \in \neg i_s} \sum_{k \in [K]}\omega_k(s) d(\mu_k, \lambda_k) \\
        & \ge \frac{1}{t} \sum_{s \ge \sqrt{t}}   \inf_{\bm\lambda \in \neg i_s} \sum_{k \in [K]}\omega_k(s) d(\mu_k, \lambda_k)  \\
        & \ge \frac{1}{t} \sum_{s \ge \sqrt{t}}   \inf_{\bm\lambda \in \neg i_s} \sum_{k \in [K]}\omega_k(s) d(\tilde{\mu}_k(s), \lambda_k) - {h}_3(t),
    \end{align*}
    where the second inequality follows from the fact that (i) if $i_s = i^{\star}(\bm\mu)$ then the claim is trivial, and (ii) if $i_s \ne i^{\star}(\bm\mu)$, then, $\bm\mu \in \neg i_s$ (since $i^{\star}(\bm\mu)$ is single-valued) and $\inf_{\bm\lambda \in \neg i_s} \sum_{k \in [K]}\omega_k(s) d(\mu_k, \lambda_k) = 0$. Finally, the last step follows from the definition of ${h}_3(t)$, that is:
    \begin{align*}
        {h}_3(t) & \coloneqq  \frac{1}{t} \sum_{s \ge \sqrt{t}}   \inf_{\bm\lambda \in \neg i_s} \sum_{k \in [K]}\omega_k(s) d(\tilde{\mu}_k(s), \lambda_k) - \frac{1}{t} \sum_{s \ge \sqrt{t}}   \inf_{\bm\lambda \in \neg i_s} \sum_{k \in [K]}\omega_k(s) d({\mu}_k, \lambda_k) \\ 
        & \le \frac{1}{t} \sum_{s \ge \sqrt{t}} \sum_{k \in [K]} \omega_k(s) \sup_{\bm\lambda \in \Mcal} \left( d(\tilde{\mu}_k(s), \lambda_k) - d(\mu_k, \lambda_k) \right) \\
        & \le \frac{1}{t} \sum_{s \ge \sqrt{t}} \sum_{k \in [K]} \omega_k(s) \sup_{\bm\lambda \in \Mcal} (\nu_{\tilde{\mu}_k(s)} - \nu_{\lambda_k}) |\tilde{\mu}_k(s) - \mu_k|  \tag{\Cref{lemma:canonical-exp-family-diff}} \\
        & \le \frac{D}{t} \sum_{s \ge \sqrt{t}} \sum_{k \in [K]} \omega_k(s) |\tilde{\mu}_k(s) - \mu_k| \tag{\Cref{ass:bounded} and $\bmMuTilde(s) \in [\muMin, \muMax]$} \\
        & \le \frac{D}{t} \sum_{s \ge \sqrt{t}} \sum_{k \in [K]} \omega_k(s) \sqrt{2\sigma^2 d(\tilde{\mu}_k(s), \mu_k)} \tag{\Cref{ass:subgauss}} \\
        & \le \frac{D}{t} \sum_{s \ge \sqrt{t}} \sum_{k \in [K]} \omega_k(s) \sqrt{2\sigma^2 d(\hat{\mu}_k(s), \mu_k)} \tag{\Cref{eq:proj}} \\
        & \le \frac{D \sqrt{2\sigma^2f(t)}}{t} \sum_{s \ge \sqrt{t}} \sum_{k \in [K]} \omega_k(s) \sqrt{\frac{1}{N_k(s)}} \tag{\Cref{lemma:good-event}} \\
        & \le \frac{D \sqrt{2\sigma^2f(t)}}{t} \left( K\ln(K) + 4 \sqrt{Kt} + K^2\sqrt{t+K^2} \right) \tag{\Cref{lemma:tracking}}.
    \end{align*}
    We now have to analyze $\frac{1}{t} \sum_{s \ge \sqrt{t}}   \inf_{\bm\lambda \in \neg i_s} \sum_{k \in [K]}\omega_k(s) d(\tilde{\mu}_k(s), \lambda_k)$.

    Specifically, we have that:
    \begin{align*}
        \frac{1}{t} \sum_{s \ge \sqrt{t}}   \inf_{\bm\lambda \in \neg i_s} \sum_{k \in [K]}\omega_k(s) d(\tilde{\mu}_k(s), \lambda_k) & = \frac{1}{t} \sum_{s \ge \sqrt{t}} \sup_{\bm\omega \in \Delta_K} \max_{j \in [M]} \inf_{\bm\lambda \in \neg j} \sum_{k \in [K]} \omega_k(s) d(\tilde{\mu}_k(s), \lambda_k) \\
        & \ge \frac{1}{t} \sum_{s \ge \sqrt{t}}   \inf_{\bm\lambda \in \neg i^{\star}(\bm\mu)} \sum_{k \in [K]}\omega_k^{\star} d(\tilde{\mu}_k(s), \lambda_k) \\
        & \ge \frac{1}{t} \sum_{s \ge \sqrt{t}}   \inf_{\bm\lambda \in \neg i^{\star}(\bm\mu)} \sum_{k \in [K]}\omega_k^{\star} d({\mu}_k, \lambda_k) - {h}_4(t) \\
        & = \frac{t - \sqrt{t} - 1}{t} T^{\star}(\bm\mu)^{-1} - {h}_4(t) \tag{Def. of $T^{\star}(\bm\mu)^{-1}$},
    \end{align*}
    where the first step follows from the definition of the sampling rule of \tas, in the second one we have chosen any $\bm\omega^{\star} \in \bm\omega^{\star}(\bm\mu)$, and the third one by definition of $h_4(t)$, that is:
    \begin{align*}
        {h}_4(t) & \coloneqq \frac{1}{t} \sum_{s \ge \sqrt{t}}   \inf_{\bm\lambda \in \neg i^{\star}(\bm\mu)} \sum_{k \in [K]}\omega_k^{\star} d({\mu}_k, \lambda_k) - \frac{1}{t} \sum_{s \ge \sqrt{t}}   \inf_{\bm\lambda \in \neg i^{\star}(\bm\mu)} \sum_{k \in [K]}\omega_k^{\star} d(\tilde{\mu}_k(s), \lambda_k)
    \end{align*}
    Now, we conclude the proof by giving an upper bound on ${h}_4(t)$.
    \begin{align*}
        {h}_4(t) & \le \frac{1}{t} \sum_{s \ge \sqrt{t}} \sum_{k \in K} \omega_k^{\star} \sup_{\bm\lambda \in \Mcal} \left( d(\mu_k, \lambda_k) - d(\tilde{\mu}_k(s), \lambda_k) \right) \\ 
        & \le \frac{1}{t} \sum_{s \ge \sqrt{t}} \sum_{k \in K} \omega_k^{\star} \sup_{\bm\lambda \in \Mcal} (\nu_{\mu_k} - \nu_{\lambda_k}) |\mu_k - \tilde{\mu}_k(s)| \tag{\Cref{lemma:canonical-exp-family-diff}} \\
        & \le \frac{D}{t} \sum_{s \ge \sqrt{t}} \sum_{k \in K} \omega_k^{\star} |\mu_k - \tilde{\mu}_k(s)| \tag{\Cref{ass:bounded}} \\
        & \le \frac{D}{t} \sum_{s \ge \sqrt{t}} \| \bm\mu -\bmMuTilde(s) \|_{\infty} \\ 
        & \le \frac{D}{t} \sum_{s \ge \sqrt{t}} \max_{k \in [K]} \sqrt{2\sigma^2 d(\tilde{\mu}_k(s), \mu_k) } \tag{\Cref{ass:subgauss}} \\
        & \le \frac{D \sqrt{2\sigma^2 f(t)}}{t} \sum_{s \ge \sqrt{t}} \max_{k \in [K]} \sqrt{\frac{1}{N_k(s)}} \tag{\Cref{lemma:good-event}} \\
        & \le \frac{D \sqrt{2\sigma^2 f(t)}}{t} \sum_{s \ge \sqrt{t}} \sqrt{\frac{1}{\sqrt{s+K^2}-2K}} \tag{\Cref{lemma:tracking} and $t \ge 10K^4$} \\
        & \le \frac{D \sqrt{2\sigma^2 f(t)}}{t}  \sqrt{ t \sum_{s \ge \sqrt{t}}  {\frac{1}{\sqrt{s+K^2}-2K}}}  \tag{Concavity of $\sqrt{\cdot}$ and $t \ge 10K^4$} \\
        & \le \frac{D \sqrt{2\sigma^2 f(t)}}{t} \sqrt{8{t}^{3/2} +8Kt\ln(t)} \tag{Integral test and algebraic manipulations}
    \end{align*}
    which concludes the proof.\footnote{The requirement of $t \ge 10K^4$ is essentially needed to guarantee that the denominators in those steps are always positive.} 
\end{proof}

\begin{proof}[Proof of \Cref{theo:proj-tas-finite-confidence}]
    Let $t \ge 10K^4$. Then, for $t \ge 10K^4 + T_0(\delta)$, by \Cref{lemma:p-tas-learning}, we have that $\Ecal_t \subseteq \{ \tau_\delta \le t \}$. \Cref{lemma:expectation} and \Cref{lemma:good-event}, then conclude the proof. To this end, it is sufficient to note that $T^{\star}(\bm\mu)\sum_{i=1}^4 h(t) t + \sqrt{t} + 1\le g(t)$.\footnote{In the relevant regime where $D, L, \sigma, T^{\star}(\bm\mu) \ge 1$.} Indeed, using $t \ge 10^4$ and simple algebraic arguments, we have that:
    \begin{align*}
        & t h_1(t) \le 4\sigma D \sqrt{Kt\log(t)} \le 4\sigma D L K^2 \log(K) \sqrt{t \log^2(t)} \\
        & t h_2(t) \le LK^2 \log(K) \sqrt{2t} \le DL K^2 \log(K) \sqrt{2t\log^2(t)} \\
        & t h_3(t) \le 4 \sigma D \sqrt{\log^2(t)} \left( K \log K + 4 \sqrt{Kt} + K^2 \sqrt{2t} \right) \le 4 \sigma D L \sqrt{\log^2(t)} \left( 10 K^2 \log(K) \sqrt{t}  \right) \\
        & t h_4(t) \le 16\sigma D \sqrt{Kt^{3/2} \log(t)} + 4\sigma D L K^2 \log(K) \sqrt{8t \log^2(t)}.
    \end{align*}
    Combining these inequalities, we obtain:
    \begin{align*}
        \sum_{i=1}^t th_i(t) \le 62 \sigma DLK^2 \log(K) \sqrt{t \log^2(t)} + 16 \sigma D\sqrt{Kt^{3/2}\log(t)}.
    \end{align*}
    Thus, since from \Cref{lemma:p-tas-learning} we know that, for $t$ such that:
    \begin{align*}
        T^{\star}(\bm\mu) \beta_{t,\delta} + \sqrt{t} + 1 + T^{\star}(\bm\mu) \sum_{i=1}^{4} th_i(t) \le t,
    \end{align*}
    implies stopping on the good event, we also have that, \tas is guaranteed to stop whenever:
    \begin{align*}
        T^{\star}(\bm\mu) \beta_{t,\delta} + T^{\star}(\bm\mu)g(t) \le t.
    \end{align*}
    Rearranging the terms give the desired expression of $T_0(\delta)$.
\end{proof}

\section{A Simple Fix without Using Projections}\label{app:finite-confidence-proj-tas}

In this section, we discuss what happens when \tas is not using projections in the sampling rule. The key idea is that there exists a time $T_{\Mcal}$ such that, for subsequent steps $t$, then the empirical mean always lies within the interval $[\muMin, \muMax]$. Before that, we make a remark on \Cref{ass:bounded}. Specifically, fix any $\bm\mu \in \Mcal$ and let $F_k \coloneqq \min\{ |\mu_k - \muMin|, |\mu_k - \muMax| \}$ and $F = \min_{k \in [K]} F_k$. As we discussed in \Cref{sec:finite-confidence}, $F > 0$ holds to the fact that $\Theta$ is an open interval and $[\muMin, \muMax]$ is closed and contained in $\Theta$.

The following lemma shows the existence of such a $T_{\Mcal}$.

\begin{lemma}[Empirical Means Lies in a Good Region]\label{lemma:mu-hat-good-region}
    Under \Cref{ass:subgauss} and \Cref{ass:bounded}, there exists a time $T_{\Mcal} \in \Naturals$ such that, for all $t \ge T_{\Mcal}$, on $\Ecal_t$, it holds that $\bmMuHat(s) \in [\muMin, \muMax]$ for all $s \ge \sqrt{t}$. Specifically,
    \begin{align*}
        T_{\Mcal} = \max \left\{  10K^4 , \inf \left\{n \in \Naturals: \sqrt{\frac{64K \sigma^2 \log(n)}{\sqrt{\sqrt{n}+K^2}-2K}} \le F  \right\}  \right\}
    \end{align*}
\end{lemma}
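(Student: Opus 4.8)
The plan is to control the fluctuation $|\hat\mu_k(s)-\mu_k|$ \emph{uniformly} over the range of $s$ that $\Ecal_t$ governs, and then to convert a sufficiently small fluctuation into membership in $[\muMin,\muMax]$ using the geometric margin $F$. Fix $t\ge 10K^4$ and argue on $\Ecal_t$, where by definition $\sum_{k\in[K]} N_k(s)\,d(\hat\mu_k(s),\mu_k)\le D_K\log(s)$ for every $s\in[\lceil\sqrt t\rceil,t]$ (this is the precise meaning of the range $s\ge\sqrt t$ in the statement). Since each summand is nonnegative, this yields the per-coordinate bound $N_k(s)\,d(\hat\mu_k(s),\mu_k)\le D_K\log(s)$ for each $k$. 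Applying the sub-Gaussian inequality of \Cref{ass:subgauss}, i.e.\ $d(\hat\mu_k(s),\mu_k)\ge (\hat\mu_k(s)-\mu_k)^2/(2\sigma^2)$, I would then obtain $(\hat\mu_k(s)-\mu_k)^2\le 2\sigma^2 D_K\log(s)/N_k(s)$.

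Next I would lower bound $N_k(s)$ uniformly. The forced-exploration guarantee of C-Tracking (\Cref{lemma:tracking}) gives $N_k(s)\ge\sqrt{s+K^2}-2K$, and the floor $t\ge 10K^4$ ensures this quantity is strictly positive for every $s\ge\lceil\sqrt t\rceil$ (indeed $\sqrt t\ge\sqrt{10}\,K^2>3K^2$), so that the denominators below are well defined. Using the crude but valid monotone bounds $\log(s)\le\log(t)$ (attained at $s=t$) and $N_k(s)\ge\sqrt{\sqrt t+K^2}-2K$ (from $s\ge\sqrt t$) to decouple numerator and denominator, I arrive at the estimate
\[
|\hat\mu_k(s)-\mu_k|\le\sqrt{\frac{2\sigma^2 D_K\log(t)}{\sqrt{\sqrt t+K^2}-2K}}\qquad\text{for all }s\in[\lceil\sqrt t\rceil,t],\ k\in[K].
\]
This is exactly the quantity controlled in the definition of $T_{\Mcal}$; note that the concentration step only produces the constant two inside the square root, so the factor four appearing in $T_{\Mcal}$ is deliberately conservative.

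To close, I would use the margin $F$. Recall $F=\min_k F_k$ with $F_k=\min\{|\mu_k-\muMin|,|\mu_k-\muMax|\}$, and $\bm\mu\in[\muMin,\muMax]^K$ by \Cref{ass:bounded}. If the right-hand side above is at most $F$, then $|\hat\mu_k(s)-\mu_k|\le F\le F_k$, which by the very definition of $F_k$ forces $\hat\mu_k(s)\in[\mu_k-F_k,\mu_k+F_k]\subseteq[\muMin,\muMax]$. Since $T_{\Mcal}$ is defined as the first $n$ at which $\sqrt{4\sigma^2 D_K\log(n)/(\sqrt{\sqrt n+K^2}-2K)}\le F$, the displayed bound is at most $F$ whenever $t\ge T_{\Mcal}$, and the conclusion $\hat\mu_k(s)\in[\muMin,\muMax]$ for all $s\ge\sqrt t$ follows.

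The main obstacle I anticipate is not any individual inequality but rather upgrading the estimate from ``at $t=T_{\Mcal}$'' to ``for all $t\ge T_{\Mcal}$'': one must verify that $n\mapsto \log(n)/(\sqrt{\sqrt n+K^2}-2K)$ is eventually decreasing, so that once the threshold condition defining $T_{\Mcal}$ is met it persists for every larger $t$. This is precisely where the floor $10K^4$ and the factor-four slack do their work — the floor keeps the denominator positive and places $n$ beyond the peak of this function (the numerator grows like $\log n$ while the denominator grows like $n^{1/4}$), while the extra slack absorbs the looseness of the bounds $\log(s)\le\log(t)$ and $N_k(s)\ge\sqrt{\sqrt t+K^2}-2K$ that were used to decouple the two.
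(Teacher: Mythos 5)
Your proof is correct and follows essentially the same route as the paper's own proof: per-coordinate concentration under $\Ecal_t$ via \Cref{ass:subgauss}, the C-Tracking lower bound $N_k(s) \ge \sqrt{s+K^2}-2K$ from \Cref{lemma:tracking}, monotonicity in $s$ over $[\lceil\sqrt{t}\rceil, t]$, and the margin $F$ argument, with the factor $4$ in $T_{\Mcal}$ absorbing your sharper factor $2$. The monotonicity point you flag (that the threshold condition, once met at $T_{\Mcal}$, persists for all larger $t$) is likewise left implicit in the paper's proof, and it does hold for $n \ge 10K^4$ since there the denominator grows like $n^{1/4}$ against the $\log n$ numerator.
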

\begin{proof}
    Let $\bar{T}$ be such that, for all $t \ge \bar{T}$, $\sqrt{\sqrt{t}+K^2}-2K$, \ie $\bar{T} \ge 10K^4$. Then, let $t \ge \bar{T}$.

    Let $F_k \coloneqq \min\{ |\mu_k - \muMin|, |\mu_k - \muMax|   \}$ and $F = \min_{k \in [K]} F_k$. Then, we have that if $\|\bmMuHat(t) - \bm\mu \|_{\infty} \le F$, it holds that $\bmMuHat(t) \in [\muMin, \muMax]$. As discussed above, from \Cref{ass:bounded}, $F > 0$. 
    
    Now, on $\Ecal_t$, for any $s \ge \sqrt{t}$, we have that:
    \begin{align*}
        \| \bm\mu - \bmMuHat(s) \|_{\infty} & \le \max_{k \in [K]} \sqrt{2\sigma^2 d(\hat{\mu}_k(s), \mu_k)}  \tag{\Cref{ass:subgauss}}\\ 
        & \le   \max_{k \in [K]}  \sqrt{\frac{2\sigma^2f(s)}{N_k(s)}} \tag{\Cref{lemma:good-event}} \\ 
        & \le \sqrt{\frac{2\sigma^2 f(s)}{\sqrt{s+K^2}-2K}} \tag{\Cref{lemma:tracking}} \\
        & \le \sqrt{\frac{4\sigma^2 f(t)}{\sqrt{\sqrt{t}+K^2}-2K}} \tag{$s \ge \sqrt{t}$ and $t \ge \bar{T}$}.
    \end{align*}

    Then, letting $T_{\Mcal} = \max \left\{  \bar{T}, \inf \left\{n \in \Naturals: \sqrt{\frac{4\sigma^2 f(n)}{\sqrt{\sqrt{n}+K^2}-2K}} \le F  \right\}  \right\}$ concludes the proof.
\end{proof}

Then, one can exploit \Cref{lemma:mu-hat-good-region} to obtain a result that is analogous to one of \Cref{theo:proj-tas-finite-confidence}, just adding $T_{\Mcal}$ to the finite-confidence upper bound on $\E_{\bm\mu}[\tau_\delta]$. Indeed, \Cref{lemma:p-tas-learning} holds as-is by analyzing any $t$ such that $t \ge T_{\Mcal}$.

\section{Non-Asymptotic Bound For Sticky Track-and-Stop}\label{app:stas}

In this section, we derive finite-confidence bounds for Sticky Track-and-Stop. We start with the following result, which shows the existence of a finite time after which (under the good event) the answer $i_t$ chosen by \stas follows within a \quotes{good set}, \ie $i_F(\bm\mu) \cup (\Ical \setminus i^{\star}(\bm\mu))$.

\begin{lemma}[Good Answers on the Good Event]\label{lemma:good-answers}
    Let $T_{\bm\mu}$ be defined as follows
    \begin{align*}
    T = \max \left\{  10K^4  , \inf \left\{ n \in \Naturals, \sqrt{\frac{64K \sigma^2 \log(n)}{\sqrt{\sqrt{n}+K^2}-2K}} \le \epsilon_{\bm\mu} \right\} \right\}, 
    \end{align*}
    where $\epsilon_{\bm\mu} > 0$ is a problem dependent constant. 
    Then, for all $t \ge {T}$, on $\Ecal_t$, it holds that $i_s \in i_F(\bm\mu) \cup (\Ical \setminus i^{\star}(\bm\mu))$ for all $s \ge \sqrt{t}$.
\end{lemma}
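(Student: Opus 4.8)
The plan is to reduce the statement about the selected answers $i_s$ to a purely geometric statement about the confidence region $C_s$, and then to invoke the defining property of $\epsilon_{\bm\mu}$. Recall that \stas chooses $i_s \in \Ical_s = \bigcup_{\bm\lambda \in C_s} i_F(\bm\lambda)$ according to a fixed total order, so it suffices to prove the stronger inclusion $\Ical_s \subseteq i_F(\bm\mu) \cup (\Ical \setminus i^{\star}(\bm\mu))$. This in turn reduces to showing that every candidate model in the region is $\epsilon_{\bm\mu}$-close to $\bm\mu$: if $\|\bm\mu - \bm\lambda\|_{\infty} \le \epsilon_{\bm\mu}$ for all $\bm\lambda \in C_s$, then by the choice of $\epsilon_{\bm\mu}$ each such $\bm\lambda$ satisfies $i_F(\bm\lambda) \subseteq i_F(\bm\mu) \cup (\Ical \setminus i^{\star}(\bm\mu))$, and taking the union over $C_s$ gives the inclusion for $\Ical_s$.

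The core of the argument is therefore a bound on the $\ell_\infty$-diameter of $C_s$ around $\bm\mu$. First I would fix $s \in [\lceil \sqrt{t}\rceil, t]$ and $\bm\lambda \in C_s$. The definition of $C_s$ yields $N_k(s)\, d(\hat{\mu}_k(s), \lambda_k) \le D_K \log(s)$ for every arm $k$, while on $\Ecal_t$ the good event provides the companion bound $N_k(s)\, d(\hat{\mu}_k(s), \mu_k) \le D_K \log(s)$. Converting both KL terms into Euclidean distances through \Cref{ass:subgauss} (i.e.\ $|a-b| \le \sqrt{2\sigma^2 d(a,b)}$) and combining them by the triangle inequality through the common point $\hat{\mu}_k(s)$ gives $|\mu_k - \lambda_k| \le 2\sqrt{2\sigma^2 D_K \log(s)/N_k(s)}$. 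Lower bounding the forced-exploration counts by $N_k(s) \ge \sqrt{s+K^2} - 2K$ via \Cref{lemma:tracking} (which is legitimate since $t \ge 10K^4$ keeps the denominator positive), I obtain the uniform estimate $\|\bm\mu - \bm\lambda\|_{\infty} \le \sqrt{8\sigma^2 D_K \log(s)/(\sqrt{s+K^2}-2K)}$.

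It remains to verify that this radius stays below $\epsilon_{\bm\mu}$ for all relevant $s$. Since on $\Ecal_t$ we only need $s \in [\sqrt{t}, t]$, I would monotonize the bound by using $\log(s) \le \log(t)$ in the numerator and $\sqrt{s+K^2}-2K \ge \sqrt{\sqrt{t}+K^2} - 2K$ in the denominator. The resulting worst-case value $\sqrt{8\sigma^2 D_K \log(t)/(\sqrt{\sqrt{t}+K^2}-2K)}$ is exactly the quantity that the definition of $T_{\bm\mu}$ constrains to be at most $\epsilon_{\bm\mu}$ whenever $t \ge T_{\bm\mu}$. Chaining these inequalities yields $\|\bm\mu - \bm\lambda\|_{\infty} \le \epsilon_{\bm\mu}$ for every $\bm\lambda \in C_s$ and every $s \ge \sqrt{t}$, which by the previous paragraph closes the argument. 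As a consistency check one notes that $\bm\mu \in C_s$ on $\Ecal_t$, so $C_s$ and hence $\Ical_s$ are nonempty and $i_s$ is well defined.

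The hard part will be conceptual rather than computational: recognizing that the correct object to control is the $\ell_\infty$-radius of the confidence region (so as to match the neighbourhood in which the upper-hemicontinuity-derived $\epsilon_{\bm\mu}$ is guaranteed to exist), and routing the estimate through the empirical mean $\hat{\mu}_k(s)$ rather than attempting to compare $\bm\lambda$ and $\bm\mu$ directly. The only delicate quantitative point is aligning the crude relaxations $\log(s)\le\log(t)$ and $\sqrt{s+K^2}-2K \ge \sqrt{\sqrt{t}+K^2}-2K$ with the precise expression defining $T_{\bm\mu}$; the factor-of-$8$ inside the square root is precisely what the triangle inequality through $\hat{\mu}_k(s)$ produces, so no further slack is needed.
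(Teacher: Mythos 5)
Your proposal is correct and follows essentially the same route as the paper: both arguments reduce the claim to an $\ell_\infty$-radius bound on $C_s$ around $\bm\mu$ (combining the good event, the definition of $C_s$, the forced-exploration lower bound $N_k(s)\ge\sqrt{s+K^2}-2K$ from \Cref{lemma:tracking}, and \Cref{ass:subgauss}), and then invoke the defining property of $\epsilon_{\bm\mu}$ together with $\Ical_s=\bigcup_{\bm\lambda\in C_s} i_F(\bm\lambda)$. The only difference is technical: the paper routes the distance bound through the auxiliary quantity $\textup{ch}(\bm\mu,\bm\mu')=\inf_{\bm\lambda}\sum_{k}\left(d(\lambda_k,\mu_k)+d(\lambda_k,\mu'_k)\right)$, obtaining radius $\sqrt{16D_K\sigma^2\log(s)/(\sqrt{s+K^2}-2K)}$ and recovering the factor $8$ in $T_{\bm\mu}$ only via $\log(\sqrt{t})=\log(t)/2$ at the worst case $s=\sqrt{t}$, whereas your per-arm triangle inequality through $\hat{\mu}_k(s)$ gives the constant $8$ directly, so your crude monotonization $\log(s)\le\log(t)$, $\sqrt{s+K^2}\ge\sqrt{\sqrt{t}+K^2}$ aligns exactly with the expression defining $T_{\bm\mu}$.
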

\begin{proof}
    We recall that $\bm\mu \mapstoto i_F(\bm\mu)$ is upper hemicontinuous (Theorem 4 in \cite{degenne2019pure}). This implies that there exists $\epsilon_{\bm\mu} > 0$ such that, for all $\bm\mu': \|\bm\mu - \bm\mu' \|_{\infty} \le \epsilon_{\bm\mu}$, it holds that $i_F(\bm\mu') \subseteq i_F(\bm\mu) \cup \left( \Ical \setminus i^{\star}(\bm\mu)\right)$.

    Now, consider $\bar{T}$ defined as follows:
    \begin{align*}
        \bar{T} = \inf \left\{n \in \Naturals: \sqrt{ \sqrt{n} + K^2 } - 2K > 0 \right\},
    \end{align*}
    that is $\bar{T} =  10K^4 $.
    Then, for all $t \ge \bar{T}$ and all $s \ge \sqrt{t}$, it holds that $\sqrt{ s + K^2 } - 2K > 0$.

    Consider $t \ge \bar{T}$, and let us introduce, for all $\bm\mu, \bm\mu' \in \Mcal$, $\textup{ch}(\bm\mu, \bm\mu') = \inf_{\bm\lambda \in \Reals^K} \sum_{k \in [K]} (d(\lambda_k, \mu_k) + d(\lambda_k, \mu_k'))$. Now, on $\Ecal_t$ and for $s \ge \sqrt{t}$, we have that:
    \begin{align*}
        \sum_{k \in [K]} N_k(s) d(\hat{\mu}_k(s), \mu_k)) \le 8K \log(s).
    \end{align*}
    Furthermore, by definition, for all $\bm\mu' \in C_s$, we also have that:
        \begin{align*}
        \sum_{k \in [K]} N_k(s) d(\hat{\mu}_k(s), \mu_k')) \le 8K \log(s).
    \end{align*}
    As a consequence, by applying \Cref{lemma:tracking}, it holds that:
    \begin{align*}
        \textup{ch}(\bm\mu, \bm\mu') \left( \sqrt{s+K^2} - 2K \right) \le \sum_{k \in } N_k(s) \left( d(\hat{\mu}_k(s), \mu_k) + d(\hat{\mu}_k(s), \mu'_k \right) \le 16K\log(s). 
    \end{align*}
    For $t \ge \bar{T}$, and using the definition of $\textup{ch}$, this leads to:\footnote{The lower bound on $\textup{ch}(\cdot, \cdot)$ follows from using the sub-gaussianity of the arms to lower bound the divergences $d$ with the difference in means and solving the resulting $\inf$ problem over $\Reals^K$.}
    \begin{align*}
        \frac{\|\bm\mu - \bm\mu' \|_{\infty}^2}{8\sigma^2} \le \textup{ch}(\bm\mu, \bm\mu') \le \frac{16K\log(s)}{\sqrt{s+K^2}-2K}.
    \end{align*}
    which leads to:
    \begin{align*}
        \|\bm\mu - \bm\mu' \|_{\infty} \le \sqrt{\frac{32K \sigma^2 \log(s)}{\sqrt{s+K^2}-2K}}, \quad \textup{ on } \Ecal_t~ \forall s \ge \sqrt{t}, \bm\mu' \in C_s.
    \end{align*}
    
    Thus, for $t \ge \max \left\{ \bar{T}, \inf \left\{ n \in \Naturals, \sqrt{\frac{64K \sigma^2 \log(n)}{\sqrt{\sqrt{n}+K^2}-2K}} \le \epsilon_{\bm\mu} \right\} \right\}$, it holds that:
    \begin{align*}
        \| \bm\mu - \bm\mu'\|_{\infty} \le \epsilon_{\bm\mu}, \quad \textup{ on } \Ecal_t~\forall s \ge \sqrt{t}, \bm\mu' \in C_s.
    \end{align*}
    Now, since $i_s \in \Ical_s = \bigcup_{\bm\mu' \in C_s}i_F(\bm\mu')$ and $\| \bm\mu - \bm\mu'\|_{\infty} \le \epsilon_{\bm\mu}$ for all $\bm\mu' \in C_s$, it follows (by definition of $\epsilon_{\bm\mu}$) that, on $\Ecal_t$, for $s \ge \sqrt{t}$, $i_s \in i_F(\bm\mu) \cup (\Ical \setminus i^{\star}(\bm\mu))$, thus concluding the proof. 
\end{proof}

Next, the following result is the key lemma that provides a lower bound, under the good event $\Ecal_t$, on the information gathered by \stas.  

\begin{lemma}[Learning the Equilibrium (\stas)]\label{lemma:eq-learning-stas}
    Let $t \ge  10K^4  $ and let $T_{\bm\mu}$ as in \Cref{lemma:good-answers}. Define $\widetilde{T} = \max\{ T_{\bm\mu}, \lceil \sqrt{t} \rceil \}$. Then, for \stas, on $\Ecal_t$, it holds that:
    \begin{align*}
        \frac{\beta_{t,\delta}}{t} \ge \frac{t - \widetilde{T}}{t} T^{\star}(\bm\mu)^{-1} - \sum_{i=1}^5 h_i(t),
    \end{align*}
    where
    \begin{align*}
        & h_1(t) \le \frac{D \sqrt{2\sigma^2Kf(t)t}}{t} \\
        & h_2(t) \le \frac{LK^2 \ln(K) \sqrt{t+K^2}}{t} \\
        & h_3(t) \le \frac{D\sqrt{2\sigma^2f(t)}}{t} \left( K\ln(K) + 4\sqrt{Kt} + K^2\sqrt{t+K^2} \right) \\
        & h_4(t) \le \frac{D\sqrt{2\sigma^2 f(t)}}{t} \sqrt{8t^{3/2}+8Kt\ln(t)} \\
        & h_5(t) \le \frac{2D\sqrt{2\sigma^2 f(t)}}{t} \sqrt{8t^{3/2}+8Kt\ln(t)}.
    \end{align*}
    
\end{lemma}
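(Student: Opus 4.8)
The plan is to mirror the structure of the proof of \Cref{lemma:p-tas-learning} for \tas, but with the crucial modifications dictated by the sampling rule of \stas. The overall strategy is to start from the stopping condition, and successively lower-bound the max-min quantity $\max_{i \in \Ical} \inf_{\bm\lambda \in \neg i} \sum_k N_k(t) d(\hat{\mu}_k(t), \lambda_k)$ by the information \stas accrues through its sampling rule, introducing one error term $h_i(t)$ at each concentration or approximation step. First I would peel off $h_1(t)$ exactly as in the \tas proof, replacing $d(\hat{\mu}_k(t), \cdot)$ by $d(\mu_k, \cdot)$ via \Cref{lemma:canonical-exp-family-diff}, \Cref{ass:bounded}, \Cref{ass:subgauss}, and the good event $\Ecal_t$; this yields the same bound $h_1(t) \le D\sqrt{2\sigma^2 K f(t) t}/t$. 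Then I would convert $N_k(t)$ into $\sum_{s=1}^t \omega_k(s)$ using the tracking guarantee of \Cref{lemma:tracking}, giving $h_2(t) \le L K^2 \ln(K)\sqrt{t+K^2}/t$, identical to \tas.

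The first genuine departure from the \tas argument occurs at the step corresponding to \Cref{eq:stats-proof-1}--\Cref{eq:stats-proof-2}: I would fix $\imath \in i_F(\bm\mu)$ (the answer selected by the total order) and pass from $\inf_{\bm\lambda \in \neg\imath}$ to $\sum_{s \ge \widetilde{T}} \inf_{\bm\lambda \in \neg i_s}$, restricting the sum to $s \ge \widetilde{T} = \max\{T_{\bm\mu}, \lceil\sqrt{t}\rceil\}$. For these indices, \Cref{lemma:good-answers} guarantees $i_s \in i_F(\bm\mu) \cup (\Ical \setminus i^\star(\bm\mu))$, so either $i_s = \imath$ (trivial) or $i_s \notin i^\star(\bm\mu)$, whence $\bm\mu \in \neg i_s$ and the corresponding infimum vanishes. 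This truncation to $s \ge \widetilde{T}$ is precisely why the factor $(t-\widetilde{T})/t$ appears in place of $(t-\sqrt{t}-1)/t$. Next, I would introduce $h_3(t)$ to swap $d(\mu_k, \cdot)$ back to $d(\tilde{\mu}_k(s), \cdot)$ (using \Cref{eq:proj} and \Cref{ass:subgauss} on $\Ecal_t$), obtaining the same bound as in the \tas proof, and then rewrite the per-step infimum as $\max_{\bm\omega \in \Delta_K} \inf_{\bm\lambda \in \neg i_s} \sum_k \omega_k d(\tilde{\mu}_k(s), \lambda_k)$ by definition of $\bm\omega(s)$.

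The main obstacle — and the second genuine departure — is the introduction of $h_5(t)$, which has no counterpart in the \tas proof. In \stas, $i_s$ attains the argmax only when paired with some model $\bm\mu'(s) \in C_s$ satisfying $i_s \in i_F(\bm\mu'(s))$, not with $\tilde{\bm\mu}(s)$ itself. So to lift the expression to $\max_{i \in \Ical}$ (and thereby to $T^\star(\bm\mu)^{-1}$ via $\imath$ and $\bm\omega^\star \in \bm\omega^\star(\bm\mu, \neg\imath)$), I must first replace $d(\tilde{\mu}_k(s), \cdot)$ by $d(\mu'_k(s), \cdot)$. The key inequality is $d(\tilde{\mu}_k(s), \lambda) \le d(\hat{\mu}_k(s), \lambda)$ (\Cref{eq:proj}), followed by $d(\hat{\mu}_k(s), \lambda) - d(\mu'_k(s), \lambda) \le D|\hat{\mu}_k(s) - \mu'_k(s)|$; since $\bm\mu'(s) \in C_s$, the good event and \Cref{lemma:tracking} control $\sum_k N_k(s) d(\hat{\mu}_k(s), \mu'_k(s)) \le D_K\log(s)$, so $|\hat{\mu}_k(s) - \mu'_k(s)|$ is bounded by the same $\sqrt{2\sigma^2 f(s)/N_k(s)}$-type quantity that governs $h_4$. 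Summing over $s \ge \widetilde{T}$ gives the extra factor of $2$ relative to $h_4$, hence $h_5(t) \le 2D\sqrt{2\sigma^2 f(t)}\sqrt{8t^{3/2}+8Kt\ln(t)}/t$. Once $\bm\mu'(s)$ replaces $\tilde{\bm\mu}(s)$, the chain $\max_{\bm\omega}\inf_{\bm\lambda \in \neg i_s} = \max_{\bm\omega}\max_{i \in \Ical}\inf_{\bm\lambda \in \neg i} \ge \inf_{\bm\lambda \in \neg\imath}\sum_k \omega^\star_k d(\mu'_k(s), \lambda_k)$ proceeds as in the sketch, and a final application of $h_4(t)$ (swapping $\bm\mu'(s)$ back to $\bm\mu$, again justified by $\bm\mu'(s) \in C_s$ and concentration) produces $\sum_{s \ge \widetilde{T}} \inf_{\bm\lambda \in \neg\imath}\sum_k \omega^\star_k d(\mu_k, \lambda_k) = (t - \widetilde{T})T^\star(\bm\mu)^{-1}$ by definition of $T^\star(\bm\mu)^{-1}$ via $\bm\omega^\star(\bm\mu, \neg\imath)$. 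Collecting the five error terms completes the bound.
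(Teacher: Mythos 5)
Your proposal follows the same skeleton as the paper's proof (the same five-term decomposition, truncation to $s \ge \widetilde{T}$ via \Cref{lemma:good-answers}, and the pivot through a model $\bm\mu'(s) \in C_s$ with $i_s \in i_F(\bm\mu'(s))$ before introducing $\bm\omega^{\star} \in \bm\omega^{\star}(\bm\mu, \neg\imath)$), but two steps are stated in a form that would fail as written. First, in passing from $\inf_{\bm\lambda \in \neg\imath}$ to $\inf_{\bm\lambda \in \neg i_s}$ you claim that \Cref{lemma:good-answers} gives the dichotomy ``either $i_s = \imath$ or $i_s \notin i^{\star}(\bm\mu)$.'' The lemma only gives $i_s \in i_F(\bm\mu) \cup (\Ical \setminus i^{\star}(\bm\mu))$, which leaves open the case $i_s \in i_F(\bm\mu) \setminus \{\imath\}$; since $i_F(\bm\mu) \subseteq i^{\star}(\bm\mu)$, in that case $\bm\mu \notin \neg i_s$, the infimum over $\neg i_s$ does not vanish, and the inequality $\inf_{\neg\imath} \ge \inf_{\neg i_s}$ has no justification. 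Ruling this case out is precisely where the \emph{sticky} mechanism enters: on $\Ecal_t$ one has $\bm\mu \in C_s$ for all $s \ge \sqrt{t}$, hence $i_F(\bm\mu) \subseteq \Ical_s$, and because the algorithm selects the minimal element of $\Ical_s$ in the pre-specified total order, $i_s \in i_F(\bm\mu)$ forces $i_s = \imath$ (this is the paper's footnote, and the only point in the whole proof where the total order is used). Without it your chain breaks.

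Second, your key inequality for the swap $d(\tilde{\mu}_k(s), \cdot) \to d(\mu'_k(s), \cdot)$ points the wrong way: combining $d(\tilde{\mu}_k(s), \lambda) \le d(\hat{\mu}_k(s), \lambda)$ with $d(\hat{\mu}_k(s), \lambda) - d(\mu'_k(s), \lambda) \le D|\hat{\mu}_k(s) - \mu'_k(s)|$ bounds $d(\tilde{\mu}_k(s), \lambda) - d(\mu'_k(s), \lambda)$ from above, whereas the lower-bounding chain needs an upper bound on $d(\mu'_k(s), \lambda) - d(\tilde{\mu}_k(s), \lambda)$. The correct route (the paper's $h_4$) applies \Cref{lemma:canonical-exp-family-diff} in the other order, $d(\mu'_k(s), \lambda) - d(\tilde{\mu}_k(s), \lambda) \le (\nu_{\mu'_k(s)} - \nu_{\lambda}) (\mu'_k(s) - \tilde{\mu}_k(s)) \le D |\mu'_k(s) - \hat{\mu}_k(s)|$, where the last step uses that the projection onto $[\muMin, \muMax]$ is a contraction (or, equivalently, \Cref{eq:proj-2} applied to $d(\tilde{\mu}_k(s), \mu'_k(s)) \le d(\hat{\mu}_k(s), \mu'_k(s))$), and then $|\mu'_k(s) - \hat{\mu}_k(s)| \le \sqrt{2\sigma^2 f(s)/N_k(s)}$ from $\bm\mu'(s) \in C_s$. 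Relatedly, you attach the factor $2$ to the wrong term: this swap needs only the single $C_s$ bound (factor $1$), while it is the final swap $d(\mu'_k(s), \cdot) \to d(\mu_k, \cdot)$ that requires the triangle inequality $|\mu_k - \mu'_k(s)| \le |\mu_k - \hat{\mu}_k(s)| + |\hat{\mu}_k(s) - \mu'_k(s)|$, combining the good event with $C_s$ membership, and hence carries the factor $2$ (the paper's $h_5$). With the labels fixed by the statement, your claimed bound on $h_4$ would then not be met by your argument; the total error budget coincides, but the assignment and its justification must be corrected.
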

\begin{proof}
    Let $h_1(t)$ be defined as follows:
    \begin{align*}
        h_1(t) = \frac{1}{t} \max_{i \in \Ical} \inf_{\bm\lambda \in \neg i} \sum_{k \in [K]} N_k(t) d(\mu_k, \lambda_k) - \frac{1}{t} \max_{i \in \Ical} \inf_{\bm\lambda \in \neg i} \sum_{k \in [K]} N_k(t) d(\hat{\mu}_k(t), \lambda_k)
    \end{align*}

    If Sticky Track-and-Stop has not stopped at $t \in \Naturals$, then, it holds that:
    \begin{align*}
        \frac{\beta_{t,\delta}}{t} & \ge \frac{1}{t} \max_{i \in \Ical} \inf_{\bm\lambda \in \neg i} \sum_{k \in [K]} N_k(t) d(\hat{\mu}_k(t), \lambda_k) \tag{Stopping Rule}\\
        & = \frac{1}{t} \max_{i \in \Ical}\inf_{\bm\lambda \in \neg i} \sum_{k \in [K]} N_k(t) d({\mu}_k(t), \lambda_k) - h_1(t) \tag{Definition of $h_1(t)$}. \\
    \end{align*}
    Next, we upper bound $h_1(t)$ under the good event $\Ecal_t$.
    \begin{align*}
        h_1(t) & \le \frac{1}{t} \sum_{k \in [K]} N_k(t) \sup_{\bm\lambda \in \Mcal} (d(\mu_k, \lambda_k) - d(\hat{\mu}_k(t), \lambda_k)) \\
        & \le \frac{1}{t} \sum_{k \in [K]} N_k(t) \sup_{\bm\lambda \in \Mcal} (d(\mu_k, \lambda_k) - d(\hat{\mu}_k(t), \lambda_k)) \\
        & \le \frac{1}{t} \sum_{k \in [K]} N_k(t) \sup_{\bm\lambda \in \Mcal} (\nu_{\mu_k} - \nu_{\lambda_k}) |\mu_k - \hat{\mu}_k(t)| \tag{\Cref{lemma:canonical-exp-family-diff}} \\
        & \le \frac{D}{t} \sum_{k \in [K]} N_k(t) |\mu_k - \hat{\mu}_k(t)| \tag{\Cref{ass:bounded}} \\
        & \le \frac{D}{t} \sum_{k \in [K]} N_k(t) \sqrt{2\sigma^2d(\hat{\mu}_k(t), \mu_k)} \tag{\Cref{ass:subgauss}} \\
        & \le \frac{D}{t} \sum_{k \in [K]} N_k(t) \sqrt{2\sigma^2 \frac{f(t)}{N_k(t)}} \tag{\Cref{lemma:good-event}} \\
        & \le \frac{D\sqrt{2\sigma^2Kf(t)t}}{t} \tag{Concavity of $\sqrt{\cdot}$}.
    \end{align*}

    Next, we continue by analyzing $\frac{1}{t} \max_{i \in \Ical}\inf_{\bm\lambda \in \neg i} \sum_{k \in [K]} N_k(t) d({\mu}_k(t), \lambda_k)$. Let $\{ \bm\omega(s) \}_s$ be the sequence of empirical oracle weights computed by Sticky Track-and-Stop. To this end, let $h_2(t)$ be defined as follows:
    \begin{align*}
        h_2(t) = \frac{1}{t} \max_{i \in \Ical} \inf_{\bm\lambda \in \neg i} \sum_{k \in [K]} \left(\sum_{s=1}^t \omega_k(s) - N_k(t)\right) d(\mu_k, \lambda_k).
    \end{align*}
    Then, by definition of $h_2(t)$, we have that:
    \begin{align*}
        \frac{1}{t} \max_{i \in \Ical}\inf_{\bm\lambda \in \neg i} \sum_{k \in [K]} N_k(t) d({\mu}_k(t), \lambda_k) & \ge \frac{1}{t} \max_{i \in \Ical} \inf_{\bm\lambda \in \neg i} \sum_{s=1}^t \sum_{k \in [K]} \omega_k(s) d(\mu_k, \lambda_k) - h_2(t)
    \end{align*}
    Next, we upper bound $h_2(t)$.
    \begin{align*}
        h_2(t) & \le \frac{1}{t} \max_{i \in \Ical} \inf_{\bm\lambda \in \neg i} \sum_{k \in [K]} K \ln(K)\sqrt{t+K^2} d(\mu_k, \lambda_k) \tag{\Cref{lemma:tracking}} \\
        &  \le \frac{LK^2 \ln(K) \sqrt{t+K^2}}{t} \tag{\Cref{ass:bounded}}.
    \end{align*}
    Next, we focus on $\frac{1}{t} \max_{i \in \Ical} \inf_{\bm\lambda \in \neg i}\sum_{s=1}^t \sum_{k \in [K]} \omega_k(s) d(\mu_k, \lambda_k)$. Let $\imath \in \Ical$ be such that, given the subset of answers $i_F(\bm\mu)$, then, the pre-specified total order over $\Ical$ selects $\imath$. Furthermore, let ${T}_{\bm\mu}$ be as in \Cref{lemma:good-answers} and let $\widetilde{T} = \max\{ {T}_{\bm\mu}, \lceil \sqrt{t} \rceil \}$ Then, it holds that:
    \begin{align*}
        \frac{1}{t} \max_{i \in \Ical} \inf_{\bm\lambda \in \neg i} \sum_{s=1}^t \sum_{k \in [K]} \omega_k(s) d(\mu_k, \lambda_k) & \ge \frac{1}{t}  \sum_{s=1}^t \inf_{\bm\lambda \in \neg \imath} \sum_{k \in [K]} \omega_k(s) d(\mu_k, \lambda_k)  \\
        & \ge \frac{1}{t} \sum_{s=\widetilde{T}}^t \inf_{\bm\lambda \in \neg \imath} \sum_{k \in [K]} \omega_k(s) d(\mu_k, \lambda_k) \\
        & \ge \frac{1}{t} \sum_{s=\widetilde{T}}^t \inf_{\bm\lambda \in \neg i_s} \omega_k(s) d(\mu_k, \lambda_k),
    \end{align*}
    where in the last step, we have used the fact that, (i) if $i_s \in i_F(\bm\mu)$, then $i_s = \imath$ on the good event $\Ecal_t$\footnote{Indeed, in that case, $\bm\mu \in C_s$, and, consequently, $i_F(\bm\mu) \in \Ical_s$. Since the algorithm selects answers according to a total order, it cannot select any answer in $i_F(\bm\mu)$ which is not $\imath$.} and (ii) if $i_s \notin i_F(\bm\mu)$, then $i_s \notin i^{\star}(\bm\mu)$ due to the definition of $\bar{T}$. Then, in this case, we have that $\bm\mu \in \neg i_s$ and $\inf_{\bm\lambda \in \neg i_s} \omega_k(s) d(\mu_k, \lambda_k) = 0$. Next, we have that:
    \begin{align*}
        \frac{1}{t} \sum_{s=\widetilde{T}}^t \inf_{\bm\lambda \in \neg i_s} \sum_{k \in [K]} \omega_k(s) d(\mu_k, \lambda_k) & \ge \frac{1}{t} \sum_{s=\widetilde{T}}^t \inf_{\bm\lambda \in \neg i_s} \sum_{k \in [K]} \omega_k(s) d(\tilde{\mu}_k(s), \lambda_k) - h_3(t),
    \end{align*}
    where $h_3(t)$ is given by:
    \begin{align*}
        h_3(t) & \coloneqq \frac{1}{t} \sum_{s=\widetilde{T}}^t \inf_{\bm\lambda \in \neg i_s} \sum_{k \in [K]} \omega_k(s)  d(\tilde{\mu}_k(s), \lambda_k) - \frac{1}{t} \sum_{s=\widetilde{T}}^t \inf_{\bm\lambda \in \neg i_s} \sum_{k \in [K]} \omega_k(s) d(\mu_k, \lambda_k) 
    \end{align*}
    Now, we have that:
    \begin{align*}
        h_3(t) & \le \frac{1}{t} \sum_{s = \widetilde{T}}^t \sum_{k \in [K]} \omega_k(s) \sup_{\bm\lambda \in \Mcal} \left( d(\tilde{\mu}_k(s), \lambda_k) - d(\mu_k, \lambda_k) \right) \\
        & \le \frac{D}{t} \sum_{s = \widetilde{T}} \sum_{k \in [K]} \omega_k(s)  |\hat{\mu}_k(s) - {\mu}_k|  \tag{\Cref{lemma:canonical-exp-family-diff}, \Cref{ass:bounded}, def. of $\tilde{\mu}_k(s)$} \\
        & \le \frac{D}{t} \sum_{s = \widetilde{T}}^t \sum_{k \in [K]} \omega_k(s) \sqrt{2\sigma^2 d(\hat{\mu}_k(s), \mu_k)} \tag{\Cref{ass:subgauss}} \\
        & \le \frac{D \sqrt{2 \sigma^2 f(t)}}{t} \sum_{s = 1}^t \sum_{k \in [K]} \omega_k(s) \sqrt{\frac{1}{N_k(s)}} \tag{\Cref{lemma:good-event}} \\
        & \le \frac{D \sqrt{2 \sigma^2 f(t)}}{t} \left( K \ln(K) + 4 \sqrt{Kt}+ K^2\sqrt{t+K^2} \right) \tag{\Cref{lemma:tracking}}.
    \end{align*}

    Now, we continue by analyzing 
    \begin{align*}
    \frac{1}{t} \sum_{s=\widetilde{T}}^t \inf_{\bm\lambda \in \neg i_s} \sum_{k \in [K]} \omega_k(s) d(\tilde{\mu}_k(s), \lambda_k).    
    \end{align*}
    Specifically, by definition of $i_s$ and $\bm\omega(s)$, we have that:
    \begin{align*}
        \frac{1}{t} \sum_{s=\widetilde{T}}^t \inf_{\bm\lambda \in \neg i_s} \sum_{k \in [K]} \omega_k(s) d(\tilde{\mu}_k(s), \lambda_k) & = \frac{1}{t} \sum_{s = \widetilde{T}}^t \max_{\bm\omega \in \Delta_K} \inf_{\bm\lambda \in \neg i_s} \sum_{k \in [K]} \omega_k d(\tilde{\mu}_k(s), \lambda_k) \\ 
        & = \frac{1}{t} \sum_{s = \widetilde{T}}^t \max_{\bm\omega \in \Delta_K} \inf_{\bm\lambda \in \neg i_s} \sum_{k \in [K]} \omega_k d(\mu'_k(s), \lambda_k) - h_4(t),
    \end{align*}
    where $\bm\mu'(s) \in \Mcal$ is such that $i_s \in i_F(\bm\mu'(s))$ and $h_4(t)$ is given by:
    \begin{align*}
        h_4(t) & \coloneqq \frac{1}{t} \sum_{s = \widetilde{T}}^t \max_{\bm\omega \in \Delta_K} \inf_{\bm\lambda \in \neg i_s} \sum_{k \in [K]} \omega_k d(\mu'_k(s), \lambda_k)  - \frac{1}{t} \sum_{s = \widetilde{T}}^t \max_{\bm\omega \in \Delta_K} \inf_{\bm\lambda \in \neg i_s} \sum_{k \in [K]} \omega_k d(\tilde{\mu}_k(s), \lambda_k).
    \end{align*}
    Now, we upper bound $h_4(t)$.
    \begin{align*}
        h_4(t) &
        \le \frac{1}{t} \sum_{s = \widetilde{T}}^t \max_{\bm\omega \in \Delta_k} \left( \inf_{\bm\lambda \in \neg i_s} \sum_{k \in [K]} \omega_k d(\mu'_k(s), \lambda_k) - \inf_{\bm\lambda \in \neg i_s} \sum_{k \in [K]} \omega_k d(\tilde{\mu}_k(s), \lambda_k) \right) \\
        & \le \frac{1}{t} \sum_{s = \widetilde{T}}^t \max_{\bm\omega \in \Delta_k} \left( \sum_{k \in [K]} \omega_k \sup_{\bm\lambda \in \Mcal} \left( d(\mu'_k(s), \lambda_k) - d(\tilde{\mu}_k(s), \lambda_k)) \right) \right) \\
        & \le \frac{1}{t} \sum_{s = \widetilde{T}}^t \max_{\bm\omega \in \Delta_K} \sum_{k \in [K]} \omega_k \sup_{\bm\lambda \in \Mcal} (\nu_{\mu'_k(s)} - \nu_{\lambda_k}) |\mu'_k(s) - \tilde{\mu}_k(s)| \tag{\Cref{lemma:canonical-exp-family-diff}} \\
        & \le  \frac{D}{t} \sum_{s = \widetilde{T}}^t \max_{\bm\omega \in \Delta_K} \sum_{k \in [K]} \omega_k |\mu'_k(s) - \hat{\mu}_k(s)| \tag{\Cref{ass:bounded}} \\
        & \le \frac{D}{t} \sum_{s = \widetilde{T}}^t \max_{\bm\omega \in \Delta_K} \sum_{k \in [K]} \omega_k \sqrt{2 \sigma^2 d(\hat{\mu}_k(s), \mu'_k(s)) } \tag{\Cref{ass:subgauss}} \\
        & \le \frac{D \sqrt{2 \sigma^2 f(t)}}{t} \sum_{s = \widetilde{T}}^t \max_{\bm\omega \in \Delta_K} \sum_{k \in [K]} \omega_k \sqrt{\frac{1}{N_k(s)}} \tag{$\mu'_k(s) \in C_s$} \\
        & \le \frac{D \sqrt{2 \sigma^2 f(t)}}{t} \sum_{s = \widetilde{T}}^t  \sqrt{\frac{1}{\sqrt{s+K^2} - 2K}} \tag{\Cref{lemma:tracking}} \\
        & \le \frac{D \sqrt{2 \sigma^2 f(t)}}{t} \sqrt{ t \sum_{s = \widetilde{T}}^t  {\frac{1}{\sqrt{s+K^2} - 2K}} } \tag{Concavity of $\sqrt{\cdot}$} \\
        & \le \frac{D \sqrt{2 \sigma^2 f(t)}}{t} \sqrt{8t^{3/2} + 8Kt\ln(t)} \tag{Integral test and algebraic manipulations}
    \end{align*}

    Next, it remains to analyze $\frac{1}{t} \sum_{s = \widetilde{T}}^t \max_{\bm\omega \in \Delta_K} \inf_{\bm\lambda \in \neg i_s} \sum_{k \in [K]} \omega_k d(\mu'_k(s), \lambda_k)$. Let $\bm\omega^{\star} \in \bm\omega^{\star}(\bm\mu, \neg \imath)$. Then, we have that:
    \begin{align*}
        \frac{1}{t} \sum_{s = \widetilde{T}}^t \max_{\bm\omega \in \Delta_K} \inf_{\bm\lambda \in \neg i_s} \sum_{k \in [K]} \omega_k d(\mu'_k(s), \lambda_k) & = \frac{1}{t} \sum_{s = \widetilde{T}}^{t} \max_{i \in \Ical} \max_{\bm\omega \in \Delta_K} \inf_{\bm\lambda \in \neg i} \sum_{k \in [K]} \omega_k d(\mu'_k(s), \lambda_k) \\
        & \ge \frac{1}{t} \sum_{s = \widetilde{T}}^t \inf_{\bm\lambda \in \neg \imath} \sum_{k \in [K]} \omega_k^{\star} d(\mu'_k(s), \lambda_k) \\
        & \ge \frac{1}{t} \sum_{s = \widetilde{T}}^t \inf_{\bm\lambda \in \neg \imath} \sum_{k \in [K]} \omega_k^{\star} d(\mu_k, \lambda_k) - h_5(t) \\
        & = \frac{t - \widetilde{T}}{t} T^{\star}(\bm\mu)^{-1} -h_5(t),
    \end{align*}
    where the first step follows from the fact that $i_s \in i_F(\bm\mu'(s))$ and the last third one from the definition of $h_5(t)$, that is:
    \begin{align*}
        h_5(t) & \coloneqq \frac{1}{t} \sum_{s = \widetilde{T}}^t \inf_{\bm\lambda \in \neg \imath} \sum_{k \in [K]} \omega_k^{\star} d(\mu_k, \lambda_k)  - \frac{1}{t} \sum_{s = \widetilde{T}}^t \inf_{\bm\lambda \in \neg \imath} \sum_{k \in [K]} \omega_k^{\star} d(\mu'_k(s), \lambda_k) \\ 
        & \le \frac{1}{t } \sum_{s= \widetilde{T}}^t \sum_{k \in [K]} \omega_k^{\star} \sup_{\bm\lambda \in \Mcal} ( d(\mu_k, \lambda_k)- d(\mu'_k(s), \lambda_k)) \\
        & \le \frac{1}{t } \sum_{s= \widetilde{T}}^t \sum_{k \in [K]} \omega_k^{\star} \sup_{\bm\lambda \in \Mcal} (\nu_{\mu_k} - \nu_{\lambda_k}) |\mu_k - \mu'_k(s)|  \tag{\Cref{lemma:canonical-exp-family-diff}} \\
        & \le \frac{D}{t } \sum_{s= \widetilde{T}}^t \sum_{k \in [K]} \omega_k^{\star} |\mu_k - \mu'_k(s)| \tag{\Cref{ass:bounded}} \\
        & \le \frac{D}{t} \sum_{s = \widetilde{T}}^t \| \bm\mu - \bm{\mu'}(s) \|_{\infty} \\
        & \le \frac{D}{t} \sum_{s = \widetilde{T}}^t \| \bm\mu - \bmMuHat(s) \|_{\infty} + \| \bmMuHat(s) - \bm{\mu'}(s) \|_{\infty} \\
        & \le \frac{2D \sqrt{2\sigma^2 f(t)}}{t} \sum_{s = \widetilde{T}}^t \max_{k \in [K]} \frac{1}{\sqrt{N_k(s)}} \tag{\Cref{lemma:good-event}, \Cref{ass:subgauss}, $\bm\mu'(s) \in C_s$} \\
        & \le \frac{2D \sqrt{2\sigma^2 f(t)}}{t} \sum_{s = \widetilde{T}}^t \frac{1}{\sqrt{\sqrt{s+K^2}-2K}} \tag{\Cref{lemma:tracking}} \\
        & \le  \frac{2D \sqrt{2\sigma^2 f(t)}}{t} \sqrt{t \sum_{s=\widetilde{T}}^t \frac{1}{\sqrt{s+K^2}-2K}} \tag{Concavity of $\sqrt{\cdot}$} \\
        & \le  \frac{2D \sqrt{2\sigma^2 f(t)}}{t} \sqrt{8t^{3/2}+8Kt\ln(t)}, \tag{Integral test and algebraic manipulations}
    \end{align*}
    which concludes the proof.
\end{proof}

We are now ready to prove \Cref{theo:stas-finite-confidence}.

\begin{proof}[Proof of \Cref{theo:stas-finite-confidence}]
        Let $t \ge K^2$. Then, for $t \ge 10K^4 + T_0(\delta)$, by \Cref{lemma:eq-learning-stas}, we have $\Ecal_t \subseteq \{ \tau_\delta \le t \}$. \Cref{lemma:expectation} and \Cref{lemma:good-event} then conclude the proof.
        To this end, it is sufficient to note that $T^{\star}(\bm\mu)\sum_{i=1}^5 h(t) t + \sqrt{t} + 1\le g(t)$.\footnote{In the relevant regime where $D, L, \sigma, T^{\star}(\bm\mu) \ge 1$.} Here, we followed the same algebraic steps that we presented for the proof of \Cref{theo:proj-tas-finite-confidence}.
\end{proof}

\section{An Explicit Bound}\label{app:explicit-bound}

One concern with the implicit definition of $T_0(\delta)$ in \Cref{theo:proj-tas-finite-confidence,theo:stas-finite-confidence} is whether they yield practically useful rates. In this appendix, we demonstrate that they do. By carefully bounding $T_0(\delta)$, we show that it can be upper-bounded by clean and explicit expression of the form $$\mathcal{O}(T^{\star}(\bm\mu)\log(1/\delta)+T^{\star}(\bm\mu) K \log \log(1/\delta)).$$
These bounds are nearly tight: they match the asymptotic lower bound up to polylogarithmic factors and additive problem-dependent constants. Furthermore, they are also useful in understanding the theoretical guarantees of the algorithms in the regime where $T^{\star}(\bm\mu) \to +\infty$ (\ie in a BAI problem, these are the hard instances where the mean of the second best arm approaches the one of the optimal arm).

This section is structured as follows. First, in \Cref{app:exp-tas} we present the result of \tas, then in \Cref{app:exp-stas} we present the result for \stas.

\subsection{An explicit bound for \tas}\label{app:exp-tas}
For \tas, we show the following upper-bound on $T_0(\delta)$.

\begin{proposition}\label{proposition:explicit}
    Consider any $\eta_1 \in (0, 1/2)$, $\eta_2 \in (0, 1/4)$ and let ${A}_1(\eta_1)$ and ${A}_2(\eta_2)$ be defined as follows:
    \begin{align*}
        A_1(\eta_1) \coloneqq \frac{66 \sigma D L K^2 \log(K)T^{\star}(\bm\mu)}{\eta_1} \quad \quad A_2(\eta_2) \coloneqq \frac{16 \sigma D \sqrt{K} T^{\star}(\bm\mu)}{\eta_2}.
    \end{align*}
    Furthermore, consider any $\alpha, \gamma \in (0,1)$ such that $\alpha + \gamma < 1$ and let $\tilde{A}_1(\eta, \alpha)$ and $\tilde{A}_2(\eta, \gamma)$ be defined as follows:
    \begin{align*}
        \tilde{A}_1(\eta_1, \alpha) \coloneqq A_1(\eta_1) \left( \frac{(0.5+\eta_1)A_1(\eta_1)}{\alpha} \right)^{\frac{0.5+\eta_1}{0.5-\eta_1}} \quad \tilde{A}_2(\eta_2,\gamma) \coloneqq A_2(\eta_2) \left(   \frac{(0.75+\eta_2) A_2(\eta_2)}{\gamma} \right)^{\frac{0.75+\eta_2}{0.25-\eta_2}}
    \end{align*}
    Then, it holds that:
    \begin{align}\label{eq:t0-ub}
        T_0(\delta) \le \frac{T^{\star}(\bm\mu) \log(1/\delta)+T^{\star}(\bm\mu)K\log(\log(1/\delta)+1)+\tilde{A}_1(\eta_1, \alpha)+\tilde{A}_2(\eta_2, \gamma)}{1-\alpha-\gamma}.
    \end{align}
\end{proposition}

Before proving the proposition, we comment on the result.

\Cref{proposition:explicit} provides an upper bound on $T_0(\delta)$ that holds for all $\eta_1, \eta_2,\alpha, \gamma > 0$ such that $\eta_1 < 1/2$, $\eta_2 < 1/4$ and $\alpha+\gamma < 1$. Hence, the tightest bound is achieved while minimizing \Cref{eq:t0-ub} over this domain. Some comments are in order.

\paragraph{Asymptotic Regime of $\delta \to 0$}First, whenever $\delta \to 0$, one can pick any valid $\eta_1, \eta_2$ together with $\alpha$ and $\gamma$ that goes progressively to $0$, yielding to $T_0(\delta) \approx T^{\star}(\bm\mu) \log(1/\delta) + T^{\star}(\bm\mu)K\log(\log(1/\delta)+1)$. This shows how our result retrieves asymptotic optimality together with the dependency on minor order terms of $\delta$. 

\paragraph{Asymptotic Regime of $T^{\star}(\bm\mu) \to +\infty$} Second, by picking $\alpha=\gamma=\frac{1}{4}$, we can easily evaluate the moderate regime of $\delta$ in difficult instances where $T^{\star}(\bm\mu) \to \infty$ (e.g., the case of best-arm identification where the gap between optimal arm and the second best one tends to 0). In this case, one obtains a rate of the form $\mathcal{O}\left( A_1(\eta_1)^{1+\frac{0.5+\eta_1}{0.5-\eta_1}} + A_2(\eta_2)^{1+\frac{0.75+\eta_2}{0.25-\eta_2}} \right)$. To further understand the dependencies in relevant quantities, let us first analyze the first term, that is:
\begin{align*}
    \inf_{\eta \in (0, \frac{1}{2})}A_1(\eta)^{1+\frac{0.5+\eta}{0.5-\eta}} & = \inf_{\eta \in (0, \frac{1}{2})} A_1(\eta)^{\frac{1}{0.5-\eta}} \\ & = \inf_{\eta \in (0, \frac{1}{2})} \left( \frac{66\sigma DLK^2 \log(K) T^{\star}(\bm\mu)}{\eta} \right)^{\frac{1}{0.5-\eta}} \\ & \coloneqq \inf_{\eta \in (0, \frac{1}{2})} \left( \frac{A_1}{\eta} \right)^{\frac{1}{0.5-\eta}},
\end{align*}
where in the last step we introduced $A_1 \coloneqq 66\sigma DLK^2 \log(K) T^{\star}(\bm\mu)$. Let $L_1 = \log(4A_1)$. Then, for sufficiently large $T^{\star}(\bm\mu)$, we have that $\frac{1}{4L_1} \in (0, \frac{1}{2})$. Hence, we obtain that:
\begin{align*}
    \inf_{\eta \in (0, \frac{1}{2})} \left( \frac{A_1}{\eta} \right)^{\frac{1}{0.5-\eta}} & \le \left( 4 A_1 L_1 \right)^{\frac{1}{0.5- \frac{1}{4L_1}}} \\ & = \left( 4 A_1 L_1 \right)^{\frac{4 L_1}{2L_1 - 1}} \\ & = (4A_1 L_1)^{2} (4A_1L_1)^{\frac{2}{2L_1-1}} \\&  = (4A_1 L_1)^{2} \exp \left( \frac{2}{2L_1 -1}\log(4A_1L_1) \right) \\ & = (4A_1 L_1)^2 \exp \left( \frac{4 L_1 - 2 }{2L_1 -1} \right) \\ & \in  \mathcal{O}\left( A_1^2 \log(A_1)^2 \right),
\end{align*}
where, in the forth step, we have used that for $T^{\star}(\bm\mu)$ sufficiently large $4 L_1 - 2\ge 2L_1 + 2 \log(L_1)$. 

Following similar reasoning for the term with $A_2$, we have that:
\begin{align*}
\inf_{\eta \in (0, \frac{1}{4})}A_2(\eta)^{1+\frac{0.75+\eta}{0.25-\eta}} \le \mathcal{O} \left( A_2^4 \log(A_2)^4\right),
\end{align*}
where $A_2 \coloneqq 16 \sigma D \sqrt{K} T^{\star}(\bm\mu)$.

Hence, we have that:
\begin{align*}
    \inf_{\eta_1 \in (0, 1/2), \eta_2 \in (0, 1/4)} A_1(\eta_1)^{1+\frac{0.5+\eta_1}{0.5-\eta_1}} + A_2(\eta_2)^{1+\frac{0.75+\eta_2}{0.25-\eta_2}} \le 2 \max\left\{ A_1^2 \log(A_1)^2, A_2^4 \log(A_2)^4  \right\}
\end{align*}
For $T^{\star}(\bm\mu) \to \infty$, $A_2^4 \log(A_2)^4$ is the dominant term, thus providing the relevant dependencies in term of $D, \sigma, L$, $K$ and $T^{\star}(\bm\mu)$.

We now conclude this section with a proof of \Cref{proposition:explicit}.

\begin{proof}[Proof of \Cref{proposition:explicit}]
    From \Cref{theo:proj-tas-finite-confidence}, we have that:
    \begin{align*}
        T_0(\delta) = \inf\left\{t \in \Naturals: \beta_{t,\delta} \le (t-\sqrt{t} - 1)T^{\star}(\bm\mu)^{-1} - g(t)  \right\},
    \end{align*}
    where $g(t)$ is given by:
    \begin{align*}
        g(t) = 64\sigma DLK^2\log(K)\sqrt{t \log^2(t)} + 16D\sqrt{K t^{3/2} \log(t)}.
    \end{align*}
    
    Using $\log(t) \le \frac{t^{\eta}}{\eta}$ together with the definition of $\beta_{t,\delta}$, \ie \Cref{eq:beta-t-delta}, it follows that $T_0(\delta)$ can be upper-bounded by:\footnote{Here, for simplicity, we incorporated the $\sqrt{t}+1$ term and the $K \log\log(t)$ component of $\beta_{t,\delta}$ within the $\sqrt{t \log(t)}$ term.}
    
    \begin{align*}
        \inf\left\{t \in \Naturals: A_0(\delta) + A_1(\eta_1) t^{0.5+\eta_1}+ A_2(\eta_2)t^{0.75+\eta_2} \le t   \right\},
    \end{align*}
    where, for brevity, we have shortened $T^{\star}(\bm\mu)\log(1/\delta) + T^{\star}(\bm\mu)K\log(\log(1/\delta)+1)$ with $A_0(\delta)$. Note that since $\eta_1 < 1/2$ and $\eta_2 < 1/4$, the upper bound is still well-defined and finite.

    Next, by applying Young's inequality (\Cref{lemma:youg}), we can further upper-bound this expression as follows:\footnote{For the $A_1$ term, apply \Cref{lemma:youg} with $a=t^{0.5+\eta}, b=A_1(\eta), p=0.5+\eta, q=1-0.5-\eta, \epsilon=\alpha/(0.5+\eta)$. Similarly, for the $A_2$ term, use $a=t^{0.75+\eta}, b = A_2(\eta), p=0.75+\eta, q=1-0.75-\eta, \epsilon=\gamma/(0.75+\eta)$.}
    \begin{align*}
        \inf\left\{t \in \Naturals: A_0(\delta) + \alpha t + \tilde{A}_1(\eta_1, \alpha) + \gamma t + \tilde{A}_2(\eta_2, \gamma) \le t   \right\},
    \end{align*}
    Solving for $t$ yields the desired result.    
\end{proof}

\subsection{An explicit bound for \stas}\label{app:exp-stas}

Following the same reasoning that we presented above, it is possible to derive the following explicit expression of $T_0(\delta)$ for \stas.

\begin{proposition}\label{prop:explicit-stas}
    Consider any $\eta_1 \in (0, 1/2)$, $\eta_2 \in (0, 1/4)$ and let ${A}_1(\eta_1)$ and ${A}_2(\eta_2)$ be defined as follows:
    \begin{align*}
        A_1(\eta_1) \coloneqq \frac{80\sigma DLK^2\log(K)\sqrt{t \log^2(t)}}{\eta_1} \quad \quad A_2(\eta_2) \coloneqq \frac{32 \sigma D \sqrt{K} T^{\star}(\bm\mu)}{\eta_2}.
    \end{align*}
    Furthermore, consider any $\alpha, \gamma \in (0,1)$ such that $\alpha + \gamma < 1$ and let $\tilde{A}_1(\eta, \alpha)$ and $\tilde{A}_2(\eta, \gamma)$ be defined as follows:
    \begin{align*}
        \tilde{A}_1(\eta_1, \alpha) \coloneqq A_1(\eta_1) \left( \frac{(0.5+\eta_1)A_1(\eta_1)}{\alpha} \right)^{\frac{0.5+\eta_1}{0.5-\eta_1}} \quad \tilde{A}_2(\eta_2,\gamma) \coloneqq A_2(\eta_2) \left(   \frac{(0.75+\eta_2) A_2(\eta_2)}{\gamma} \right)^{\frac{0.75+\eta_2}{0.25-\eta_2}}
    \end{align*}
    Then, it holds that:
    \begin{align}\label{eq:t0-ub-2}
        T_0(\delta) \le \frac{T^{\star}(\bm\mu)T_{\bm\mu} + T^{\star}(\bm\mu) \log(1/\delta)+T^{\star}(\bm\mu)K\log(\log(1/\delta)+1)+\tilde{A}_1(\eta_1, \alpha)+\tilde{A}_2(\eta_2, \gamma)}{1-\alpha-\gamma}.
    \end{align}
\end{proposition}
\begin{proof}
    The proof is identical to the one of \Cref{proposition:explicit}. The main difference is only in the presence of $T^{\star}(\bm\mu)T_{\bm\mu}$ that is due to the additional complexity that affects \stas.
\end{proof}

\Cref{prop:explicit-stas} provides the explicit expression of $T_0(\delta)$ compared to its implicit version that is presented in \Cref{theo:stas-finite-confidence}. Here, comments that are analogous to those that we presented for \Cref{proposition:explicit} hold.

\section{Auxiliary Tools}\label{app:tech}

\subsection{Technical Tools}

\begin{lemma}[Young's Inequality]\label{lemma:youg}
    Let $a \ge 0$, $b \ge 0$ and consider integers $p, q > 1$ such that $\frac{1}{p}+\frac{1}{q} = 1$. Furthermore, let $\epsilon > 0$. Then, it holds that:
    \begin{align*}
        ab \le \frac{\epsilon a^p}{p} + \frac{b^q}{q \epsilon^{q/p}}.
    \end{align*}
\end{lemma}

\subsection{Cumulative Tracking}
The following lemma summarizes the main properties of the C-Tracking procedure. 

\begin{lemma}[C-Tracking]\label{lemma:tracking}
    Let $\{ \bm\omega(t) \}$ be an arbitrary sequence of elements that belongs to a $K$-dimensional simplex. Consider the C-Tracking applied on a sequence $\{ {\bm\omega}(t) \}_{t}$ and let us denote by $\tilde{\bm\omega}(t)$  the $l_{\infty}$ projection of $\bm\omega(t)$ onto $\Delta_K^{\epsilon_t} = \Delta_K \cap [\epsilon_t, 1]$ and $\epsilon_t = (K^2 +t)^{-1/2}/2$.
    For all $t \in \mathbb{N}$, it holds that:
    \begin{align}
        & N_k(t) \ge \sqrt{t+K^2}-2K \label{eq:track-one}\\ 
        & - K \ln(K) \sqrt{t+K^2}\le N_k(t) - \sum_{s=1}^t \omega_k(s) \le  K \sqrt{t+K^2} \label{eq:track-two} \\
        & \sum_{s=1}^t \sum_{k \in [K]} \frac{\omega_k(s)}{\sqrt{N_k(t)}} \le K \ln(K) + 4 \sqrt{Kt} + K^2\sqrt{t+K^2} \label{eq:track-three}
    \end{align} 
\end{lemma}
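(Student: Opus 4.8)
The plan is to derive all three claims from a single discrepancy guarantee for the cumulative (C-tracking) rule, following the blueprint of \citet{garivier2016optimal}, and then to bolt on elementary summation estimates. I would first record two per-round facts about the $\ell_\infty$ projection of $\bm\omega(s)$ onto $\Delta_K^{\epsilon_s}$ with $\epsilon_s=(K^2+s)^{-1/2}/2$: it enforces $\tilde{\omega}_k(s)\ge\epsilon_s$ for every $k$ (forced exploration), and it perturbs each coordinate only mildly, $|\tilde{\omega}_k(s)-\omega_k(s)|\le (K-1)\epsilon_s$, where the asymmetry matters — clipping can only raise a small coordinate, while renormalization lowers the large ones by the total clipped mass. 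Summing the second fact and comparing $\sum_{s=1}^t\epsilon_s$ with $\tfrac12\int_0^t(K^2+u)^{-1/2}\,du=\sqrt{K^2+t}-K$ through the integral test gives $\sum_{s=1}^t|\tilde{\omega}_k(s)-\omega_k(s)|=\widetilde{\mathcal{O}}(K\sqrt{t+K^2})$, which will furnish the projection-error contribution to \Cref{eq:track-two}.

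The core step is the discrepancy bound for the rule $A_{t+1}\in\argmax_k\big(\sum_{s=1}^t\tilde{\omega}_k(s)-N_k(t)\big)$. Setting $V_k(t)=\sum_{s=1}^t\tilde{\omega}_k(s)-N_k(t)$ and using $\sum_k V_k(t)=0$, a potential/induction argument (precisely Lemma~7 of \citet{garivier2016optimal}) shows that $\big|N_k(t)-\sum_{s=1}^t\tilde{\omega}_k(s)\big|$ remains bounded by a quantity that is constant in $t$ and at most logarithmic in $K$. From the lower half of this bound together with $\tilde{\omega}_k(s)\ge\epsilon_s$ and the integral estimate above I obtain \Cref{eq:track-one}:
\[
N_k(t)\;\ge\;\sum_{s=1}^t\epsilon_s-O(K)\;\ge\;\sqrt{t+K^2}-K-O(K)\;\ge\;\sqrt{t+K^2}-2K.
\]
Claim \Cref{eq:track-two} then follows from the split $N_k(t)-\sum_s\omega_k(s)=\big(N_k(t)-\sum_s\tilde{\omega}_k(s)\big)+\big(\sum_s\tilde{\omega}_k(s)-\sum_s\omega_k(s)\big)$, controlling the first (tracking) term by the constant/logarithmic discrepancy and the second (projection) term by the $K\sqrt{t+K^2}$ estimate, with the extra $\ln K$ appearing only on the lower side because of the renormalization direction of the projection.

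For \Cref{eq:track-three} I would relate the empirical weights back to the realized counts. Using $\sum_{s=1}^t\omega_k(s)\le N_k(t)+K\ln(K)\sqrt{t+K^2}$ from \Cref{eq:track-two}, the sum factorizes as
\[
\sum_{k\in[K]}\frac{1}{\sqrt{N_k(t)}}\sum_{s=1}^t\omega_k(s)\;\le\;\sum_{k\in[K]}\sqrt{N_k(t)}\;+\;K\ln(K)\sqrt{t+K^2}\sum_{k\in[K]}\frac{1}{\sqrt{N_k(t)}},
\]
where Cauchy--Schwarz together with $\sum_k N_k(t)=t$ bounds $\sum_k\sqrt{N_k(t)}\le\sqrt{Kt}$, and the forced-exploration bound \Cref{eq:track-one} controls $\sum_k N_k(t)^{-1/2}$; collecting the resulting lower-order terms yields the (deliberately loose) estimate $K\ln K+4\sqrt{Kt}+K^2\sqrt{t+K^2}$.

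I expect the genuine obstacle to be the core discrepancy bound and the constant bookkeeping rather than any single inequality: one must pin down the asymmetric $\ln K$ that separates the two sides of \Cref{eq:track-two}, and then propagate every constant so that the clean thresholds $\sqrt{t+K^2}-2K$ and the Part-3 bound hold for \emph{all} $t\in\Naturals$ (not merely asymptotically), since the projection error, the tracking discrepancy, and the integral comparisons each contribute overlapping residual terms that have to be absorbed into the stated constants.
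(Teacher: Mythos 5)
Your treatment of \Cref{eq:track-one} and \Cref{eq:track-two} follows essentially the same route as the paper: forced exploration from the projection, a tracking-discrepancy bound for the $\argmax$ rule, and the projection error $\sum_{s\le t}|\omega_k(s)-\tilde\omega_k(s)|\le K\sqrt{t+K^2}$, combined by a triangle inequality. One attribution note: the asymmetric bound $-\ln K\le N_k(t)-\sum_{s=1}^t\tilde\omega_k(s)\le 1$ you rely on is the tracking theorem of \citet{degenne2020structure} (Theorem 6 there, which is what the paper cites), not Lemma 7 of \citet{garivier2016optimal}; the latter supplies the forced-exploration bound and the $K\sqrt{t+K^2}$ projection estimate. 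Relatedly, the $\ln K$ asymmetry in \Cref{eq:track-two} originates from that tracking theorem, not from the ``renormalization direction'' of the projection.

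The genuine gap is in \Cref{eq:track-three}. The paper proves it by invoking a self-normalized tracking inequality (Lemma 6 of \citet{degenne2020structure}), namely $\sum_{s=1}^t\sum_{k}\tilde\omega_k(s)/\sqrt{N_k(t)}\le K\ln(K)+4\sqrt{Kt}$, and then adds the projection error to get the extra $K^2\sqrt{t+K^2}$; in this decomposition the $\ln K$ discrepancy is never multiplied by $1/\sqrt{N_k(t)}$. Your route instead feeds the lower half of \Cref{eq:track-two} into the sum, producing the term $K\ln(K)\sqrt{t+K^2}\,\sum_k N_k(t)^{-1/2}$. To absorb this into the claimed $K^2\sqrt{t+K^2}$ you need $\ln(K)\le\sqrt{N_k(t)}$ for every $k$, and the only control you have on $N_k(t)$ is \Cref{eq:track-one}; this requires $\sqrt{t+K^2}-2K\ge\ln^2(K)$, and for $t\le 3K^2$ the right-hand side of \Cref{eq:track-one} is nonpositive, so $\sum_k N_k(t)^{-1/2}$ is completely uncontrolled. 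Hence your argument establishes \Cref{eq:track-three} only for $t\gtrsim (2K+\ln^2 K)^2-K^2$, not ``for all $t\in\Naturals$'' as stated. Since the paper only ever invokes the lemma at times $t\ge 10K^4$, your bound would suffice for the downstream analysis, but it does not prove the lemma as written. A secondary point: where the lemma is actually used (the bounds on $h_3$ in the proofs of the main theorems), the quantity controlled is $\sum_s\sum_k\omega_k(s)/\sqrt{N_k(s)}$ with the \emph{running} counts $N_k(s)$; your trick of first summing $\omega_k(s)$ over $s$ and only then dividing by the terminal count $\sqrt{N_k(t)}$ is specific to the $N_k(t)$ form and would not extend to that self-normalized version, whereas the paper's cited lemma is built exactly for it.
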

\begin{proof}
    \Cref{eq:track-one} is due to Lemma 7 in \cite{garivier2016optimal} and it is due to the forced exploration of the C-Tracking procedure (\ie the projection onto $\Delta_{K}^{\epsilon_t}$).

    Now, from Theorem 6 in \cite{degenne2020structure}, we have that:
    \begin{align}\label{eq:track-proof}
        -\ln(K)\le N_k(t) - \sum_{s=1}^t \tilde{\omega}_k(s) \le 1.
    \end{align}
    However, from Lemma 7 in \cite{garivier2016optimal}, we also have that:
    \begin{align}\label{eq:track-proof-2}
        \max_{k \in [K]} \Big|\sum_{s=1}^t \omega_k(s) - \tilde{\omega}_k(s) \Big| \le K \sqrt{t+K^2} .
    \end{align}
    Combining this result with \Cref{eq:track-proof} leads to \Cref{eq:track-two}.

    Finally, from Lemma 6 in \cite{degenne2020structure}, we have that:
    \begin{align*}
        & \sum_{s=1}^t \sum_{k \in [K]} \frac{\tilde{\omega}_k(s)}{\sqrt{N_k(t)}} \le K \ln(K) + 4 \sqrt{Kt}
    \end{align*}
    Combining these results with \Cref{eq:track-proof-2} leads to \Cref{eq:track-three}, thus concluding the proof.
\end{proof}

\subsection{Properties of Canonical Exponential Families}

The following lemma reports standard properties of one-dimensional canonical exponential families.  

\begin{lemma}[KL Difference in Exponentialy Families]\label{lemma:canonical-exp-family-diff}
    For three distributions in a canonical exponential family with means $a,b,c$, it holds that:
    \begin{align}
        & d(a,b) = d(a,c) + d(c,b) + (\nu_b - \nu_c) (c-a) \label{eq:canonical-1}\\
        & d(c,b) - d(a,b) \le (\nu_c - \nu_b)(c-a) \label{eq:canonical-2}
    \end{align}
    where $\nu_{(\cdot)}$ denotes the natural parameter of the distribution with mean $(\cdot)$.  
\end{lemma}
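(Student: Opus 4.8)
The plan is to work directly with the canonical representation of the family. Writing the densities as $p_\nu(x) = h(x)\exp\!\big(\nu x - A(\nu)\big)$ with log-partition function $A$, the mean attached to natural parameter $\nu$ is $A'(\nu)$, and the KL divergence between the laws of means $a$ and $b$ has the standard closed form $d(a,b) = (\nu_a - \nu_b)\,a - A(\nu_a) + A(\nu_b)$ (see \cite{cappe2013kullback}). The first step is simply to record this formula for each of the three pairs, using the convention $d(a,b)=\mathrm{KL}(p_a\,\|\,p_b)$, so that each divergence is affine in the \emph{first} mean with slope $\nu_a-\nu_b$.

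For \eqref{eq:canonical-1} I would substitute this closed form into $d(a,c)+d(c,b)$ and expand. The log-partition contributions telescope: the two occurrences of $A(\nu_c)$ cancel, leaving exactly $-A(\nu_a)+A(\nu_b)$, which already matches $d(a,b)$. It then remains to check the linear-in-mean part, namely that $(\nu_a-\nu_c)a + (\nu_c-\nu_b)c + (\nu_b-\nu_c)(c-a)$ equals $(\nu_a-\nu_b)a$; grouping coefficients, the $c$-coefficient is $(\nu_c-\nu_b)+(\nu_b-\nu_c)=0$ and the $a$-coefficient is $(\nu_a-\nu_c)-(\nu_b-\nu_c)=\nu_a-\nu_b$, giving the claim. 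Equivalently, one may observe that $d(\cdot,\cdot)$ is the Bregman divergence generated by the convex conjugate $A^{*}$ (for which $\nabla A^{*}(m)=\nu_m$), and read \eqref{eq:canonical-1} off as the classical three-point Bregman identity applied with $x=a$, $y=c$, $z=b$.

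Inequality \eqref{eq:canonical-2} then follows immediately from \eqref{eq:canonical-1} with no further computation: rearranging the identity to isolate $d(c,b)-d(a,b)$ yields $d(c,b)-d(a,b) = (\nu_c-\nu_b)(c-a) - d(a,c)$, and since the KL divergence is non-negative we have $d(a,c)\ge 0$, so dropping this term gives the stated upper bound. I do not anticipate a genuine obstacle here, as both parts are purely algebraic once the closed form is in hand; the only point requiring care is the sign and ordering bookkeeping, in particular pinning down that $d(a,b)$ is linear in the first argument with slope $\nu_a-\nu_b$, since the opposite convention would flip the sign of the cross term $(\nu_b-\nu_c)(c-a)$ in \eqref{eq:canonical-1} and correspondingly alter \eqref{eq:canonical-2}.
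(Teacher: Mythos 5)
Your proposal is correct, and it is worth noting that the paper itself does not actually prove this lemma: its ``proof'' is a one-line citation to Lemma E.6 of \cite{poiani2024best}. Your derivation therefore supplies the content the paper outsources. The route you take --- writing $d(a,b) = (\nu_a-\nu_b)\,a - A(\nu_a) + A(\nu_b)$ from the canonical form, expanding $d(a,c)+d(c,b)$ so the log-partition terms telescope and the mean coefficients regroup to $(\nu_a-\nu_b)a$, then obtaining \eqref{eq:canonical-2} by dropping $d(a,c)\ge 0$ --- is the standard argument (equivalently, the three-point Bregman identity for $A^{*}$, as you observe), and it checks out: the coefficient of $c$ vanishes and the coefficient of $a$ is exactly $\nu_a-\nu_b$, so \eqref{eq:canonical-1} holds, and \eqref{eq:canonical-2} is its immediate corollary. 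Your caution about the ordering convention is also well placed and consistent with how the paper uses the lemma: throughout the analysis (e.g., in bounding $d(\tilde{\mu}_k(s),\lambda_k)-d(\mu_k,\lambda_k)$ by $(\nu_{\tilde{\mu}_k(s)}-\nu_{\lambda_k})|\tilde{\mu}_k(s)-\mu_k|$) the first argument of $d(\cdot,\cdot)$ plays the role of the sampling distribution, which is precisely the convention under which your closed form is affine in the first mean with slope $\nu_a-\nu_b$; with the reversed convention the identity \eqref{eq:canonical-1} would indeed fail. In short: correct, self-contained, and a strict improvement in completeness over the paper's proof-by-reference.
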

\begin{proof}
    For a proof, see, \eg Lemma E.6 in \cite{poiani2024best}.
\end{proof}

\subsection{Concentration Results}

\begin{lemma}[Good Event]\label{lemma:good-event}
    Consider $\{ \Ecal_t \}_t$ such that
    \begin{align*}
        \Ecal_t = \left\{ \forall s \in \left[\lceil \sqrt{t} \rceil, t\right], \sum_{k \in[K]} N_k(s) d(\hat{\mu}_k(s), \mu_k) \le 8K \log(s) \right\}
    \end{align*}
    It holds that $\sum_{t=3}^{+\infty} \mathbb{P}_{\bm\mu}(\Ecal_t^c) \le 2eK$.
\end{lemma}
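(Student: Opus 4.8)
The plan is to control each tail probability $\mathbb{P}_{\bm\mu}(\Ecal_t^c)$ by a union bound over the time indices appearing in the definition of $\Ecal_t$, and then to sum over $t$ by exchanging the order of summation. Write $W(s) = \sum_{k\in[K]} N_k(s)\, d(\hat{\mu}_k(s),\mu_k)$ and $B_s = \{W(s) > D_K\log s\}$, so that $\Ecal_t^c = \bigcup_{s=\lceil\sqrt{t}\rceil}^{t} B_s$ and a union bound gives $\mathbb{P}_{\bm\mu}(\Ecal_t^c)\le \sum_{s=\lceil\sqrt{t}\rceil}^{t}\mathbb{P}_{\bm\mu}(B_s)$. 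A fixed index $s$ contributes to $\Ecal_t^c$ exactly for those $t$ with $s\le t\le s^2$ (since $\lceil\sqrt{t}\rceil\le s \iff t\le s^2$), i.e.\ for at most $s^2$ values of $t$. Hence
\begin{align*}
    \sum_{t=1}^{+\infty}\mathbb{P}_{\bm\mu}(\Ecal_t^c) \le \sum_{s=1}^{+\infty} s^2\, \mathbb{P}_{\bm\mu}(B_s),
\end{align*}
so the whole problem reduces to a per-time deviation bound on $W(s)$ that decays fast enough in $s$.

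The core step, and the \emph{main obstacle}, is to show $\mathbb{P}_{\bm\mu}(B_s)\le C_K\,(\log s)^{K-1}\, s^{\,K-D_K}$ for a constant $C_K$ depending only on $K$. The difficulty is that $N_k(s)$ is a data-dependent number of pulls, so $\hat{\mu}_k(s)$ is \emph{not} an average of a fixed number of i.i.d.\ samples. I would handle this in the standard way: for each arm $k$ introduce its i.i.d.\ reward stream and the empirical mean $\hat{\mu}_{k,n}$ after exactly $n$ pulls, so that $\hat{\mu}_k(s)=\hat{\mu}_{k,N_k(s)}$, and union-bound over the count vector $\bm n=(n_1,\dots,n_K)$ with $\sum_k n_k=s$. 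There are at most $(s+1)^{K}$ such vectors; for each fixed $\bm n$ the arms are independent, so the single-sample Chernoff bound for canonical exponential families, $\mathbb{P}_{\bm\mu}(n\, d(\hat{\mu}_{k,n},\mu_k)\ge y)\le 2e^{-y}$, combined with independence across the $K$ arms, yields a Gamma/$\chi^2$-type tail $\mathbb{P}_{\bm\mu}\big(\sum_k n_k\, d(\hat{\mu}_{k,n_k},\mu_k)\ge x\big)\le C_K\, x^{K-1}e^{-x}$. Taking $x=D_K\log s$ (so that $e^{-x}=s^{-D_K}$) and paying the $(s+1)^{K}$ factor from the union over $\bm n$ gives the claimed per-time bound. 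Alternatively, one may invoke a time-uniform deviation inequality for the aggregated empirical KL (the mixture-martingale bounds of Kaufmann--Koolen / Garivier--Kaufmann), which removes the explicit configuration count at the price of extra logarithmic factors; either route produces a bound of the form $\mathrm{poly}(\log s)\cdot s^{K}\cdot s^{-D_K}$.

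Finally I would plug this into the exchanged sum and calibrate $D_K$. With $\mathbb{P}_{\bm\mu}(B_s)\le C_K(\log s)^{K-1}s^{\,K-D_K}$, the bound becomes $C_K\sum_{s}(\log s)^{K-1}\, s^{\,K+2-D_K}$. Choosing $D_K$ of order $K$ plus an absolute constant (large enough that $K+2-D_K\le -2$ and that the $(\log s)^{K-1}$ factor is absorbed into the decay) makes the series convergent and dominated by $\tfrac14\sum_{s\ge1}s^{-2}=\tfrac14\cdot\tfrac{\pi^2}{6}=\tfrac{\pi^2}{24}$, which is exactly the claimed constant and fixes the meaning of the ``appropriate $D_K$'' (in particular $D_K\ge1$). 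The forced exploration of the tracking rule (guaranteeing $N_k(s)\gtrsim\sqrt{s}$, cf.\ \Cref{lemma:tracking}) is what makes each $W(s)$ genuinely concentrated in the relevant regime, so that the deviation bound of the second paragraph applies; once that inequality is in hand, the summation and the calibration of $D_K$ are routine.
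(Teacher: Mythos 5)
Your proposal is correct in outline and reaches the right conclusion, but it takes a genuinely different (and more self-contained) route than the paper on the key step. The paper's proof is essentially a citation: it invokes Lemma 14 of \cite{degenne2019pure}, whose peeling-style concentration argument handles the randomness of the counts $N_k(s)$ and produces the implicit fixed-point condition on $D_K$ displayed in the proof, after which only the arithmetic $\sum_{t}\sum_{s=\lceil\sqrt{t}\rceil}^{t}s^{-5}\le\pi^2/24$ is left. You instead prove the per-time deviation bound from first principles: a union bound over count vectors (at most $(s+1)^{K}$ configurations), the per-arm Chernoff bound $\Prob_{\bm\mu}(n\,d(\hat{\mu}_{k,n},\mu_k)\ge y)\le 2e^{-y}$, and a Gamma-type tail for a sum of $K$ independent exponentially-tailed variables, followed by the exchange of summation (each $s$ serves at most $s^2$ values of $t$) and a calibration of $D_K$. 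All of these steps are sound, and this is the standard way to deal with the main obstacle you correctly identify (data-dependent pull counts). What each route buys: yours is elementary and makes $D_K$ explicit, at the price of the polynomial factor $s^{K}$ from the configuration count, which forces $D_K \gtrsim K$ (in fact, once you also absorb the $(D_K\log s)^{K-1}$ factor, $D_K$ of order $K\log K$ rather than literally ``$K$ plus an absolute constant''); the cited peeling argument pays only polylogarithmic factors in place of $s^K$ but defines $D_K$ only implicitly. Since $D_K$ enters $g(t)$ in \Cref{theo:proj-tas-finite-confidence} and \Cref{theo:stas-finite-confidence}, the size of $D_K$ is not purely cosmetic. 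Two minor points: (i) your closing sentence misattributes the concentration to forced exploration --- the configuration union bound is valid for arbitrary counts, and indeed neither your argument nor the paper's invokes \Cref{lemma:tracking} here (forced exploration is needed in \Cref{lemma:p-tas-learning}, not in \Cref{lemma:good-event}); (ii) both your domination by $\tfrac14\sum_{s\ge1}s^{-2}$ and the paper's own closing inequality silently ignore the degenerate small indices (at $s=1$ the threshold $D_K\log s$ vanishes, so the deviation event is certain); this is a sloppiness shared with the paper, resolved by starting the events after initialization, consistent with the restriction $t\ge3$ in \Cref{lemma:expectation}, so it does not count against your proof specifically.
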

\begin{proof}
    The statement is a direct corollary of standard concentration arguments (see Lemma 6 in \citet{degenne2019non}).
\end{proof}

\subsection{$\delta$-Correctness}

For simplicity of exposition, we consider the following choice of the threshold $\beta_{t,\delta}$:
\begin{align}\label{eq:beta-t-delta}
    \beta_{t,\delta} = \log\left( \frac{1}{\delta} \right) + K \log \left( 4 \log\left( \frac{1}{\delta}\right) +1 \right) + 6K \log(\log(t)+3).
\end{align}
This threshold has been shown to yield $\delta$-correct algorithms for Gaussian distributions \cite{menard2019gradient}. At a cost of more involved expression, one can adopt the threshold proposed in \cite{kaufmann2021mixture} to analyze the stopping time in generic canonical exponential families. 

We now prove for completeness that this choice of $\beta_{t,\delta}$ combined with the stopping and recommendation rules leads to a $\delta$-correct algorithm for any sampling rule. Note that the above result holds for both the cases when $i^{\star}(\bm\mu)$ is both single and multiple-valued.

\begin{lemma}[Correctness]\label{lemma:correctness}
    For any sampling rule and $\bm\mu \in \Mcal$, it holds that $\Prob_{\bm\mu}(\hat{\imath}_{\tau_\delta} \notin i^{\star}(\bm\mu) ) \le \delta$.
\end{lemma}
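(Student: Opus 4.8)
The plan is to reduce the event of an incorrect recommendation to a single time-uniform deviation event for the empirical model evaluated at the true mean vector $\bm\mu$, and then to invoke the concentration guarantee for which the threshold $\beta_{t,\delta}$ was calibrated. I would first note that $\hat{\imath}_{\tau_\delta}$ is only defined on $\{\tau_\delta < \infty\}$, so the error event $\{\hat{\imath}_{\tau_\delta} \notin i^\star(\bm\mu)\}$ is automatically contained in the event that the algorithm stops and a recommendation is issued.

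The deterministic core of the argument is as follows. Suppose $\hat{\imath}_{\tau_\delta} \notin i^\star(\bm\mu)$. By the recommendation rule, $\hat{\imath}_{\tau_\delta}$ attains the $\argmax$ in the stopping statistic, and by the stopping rule this statistic has reached the threshold, so
\begin{align*}
    \beta_{\tau_\delta,\delta} \le \max_{i \in \Ical} \inf_{\bm\lambda \in \neg i} \sum_{k \in [K]} N_k(\tau_\delta) d(\hat{\mu}_k(\tau_\delta), \lambda_k) = \inf_{\bm\lambda \in \neg \hat{\imath}_{\tau_\delta}} \sum_{k \in [K]} N_k(\tau_\delta) d(\hat{\mu}_k(\tau_\delta), \lambda_k).
\end{align*}
Since the recommended answer is incorrect for $\bm\mu$, the definition $\neg i = \{\bm\lambda \in \Mcal : i \notin i^\star(\bm\lambda)\}$ gives $\bm\mu \in \neg\hat{\imath}_{\tau_\delta}$, so $\bm\mu$ is feasible for the infimum above. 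Plugging $\bm\lambda = \bm\mu$ upper-bounds the infimum and yields
\begin{align*}
    \beta_{\tau_\delta,\delta} \le \sum_{k \in [K]} N_k(\tau_\delta) d(\hat{\mu}_k(\tau_\delta), \mu_k).
\end{align*}
Crucially, this reasoning uses nothing about the sampling rule, and it works identically whether $i^\star$ is single- or multi-valued: I never require the recommended answer to be the unique wrong one, only that $\bm\mu$ itself be a valid confounding alternative for it.

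Consequently, the error event is contained in $\{\exists t \in \Naturals : \sum_{k \in [K]} N_k(t) d(\hat{\mu}_k(t), \mu_k) \ge \beta_{t,\delta}\}$, which is a pure deviation statement about how far the empirical model can stray from the truth, independent of the stopping and recommendation rules. It then remains to establish
\begin{align*}
    \Prob_{\bm\mu}\left( \exists t \in \Naturals : \sum_{k \in [K]} N_k(t) d(\hat{\mu}_k(t), \mu_k) \ge \beta_{t,\delta} \right) \le \delta,
\end{align*}
which is precisely the property the threshold is engineered to satisfy. For the Gaussian threshold stated above, this is the time-uniform deviation bound obtained via the method of mixtures in \cite{menard2019gradient}; for generic canonical exponential families one instead invokes the deviation inequality of \cite{kaufmann2021mixture}. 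I expect this concentration step, rather than the short deterministic reduction, to be the main obstacle: it must control the supremum over all times $t$ of a sum of $K$ coupled log-likelihood-ratio martingales simultaneously, and the additive $K$-dependent terms appearing in $\beta_{t,\delta}$ are exactly what the associated mixture/peeling argument demands.
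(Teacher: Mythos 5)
Your proposal is correct and follows essentially the same route as the paper: reduce the error event to the stopping statistic reaching $\beta_{\tau_\delta,\delta}$ at a wrong answer, use $\bm\mu \in \neg\hat{\imath}_{\tau_\delta}$ to upper-bound the infimum by plugging in $\bm\lambda = \bm\mu$, and then invoke the time-uniform deviation bound of \cite{menard2019gradient} (or \cite{kaufmann2021mixture} for general exponential families) for which $\beta_{t,\delta}$ is calibrated. Your additional observation that the argument is agnostic to whether $i^{\star}$ is single- or multi-valued matches the remark the paper makes just before its proof.
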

\begin{proof}
    With probabilistic arguments, we have that:
    \begin{align*}
        \Prob_{\bm\mu}(\hat{\imath}_{\tau_\delta} \notin i^{\star}(\bm\mu)) & \le \Prob_{\bm\mu}\left( \exists t \in \Naturals, j \notin i^{\star}(\bm\mu): \inf_{\bm\lambda \in \lnot j} \sum_{k \in [K]} N_k(t) d(\hat{\mu}_k(t), \lambda_k) \ge \beta_{t,\delta}\right) \\ 
        & \le \Prob_{\bm\mu}\left( \exists t \in \Naturals: \sum_{k \in [K]} N_k(t) d(\hat{\mu}_k(t), \mu_k) \ge \beta_{t,\delta} \right) \\
        & \le \delta,
    \end{align*}
    where the first step follows from the definition of the stopping and recommendation rules, the second one from the fact that $\bm\mu \in \lnot j$ for all $j \notin i^{\star}(\bm\mu)$, and the third one from Proposition 1 in \cite{menard2019gradient}.
\end{proof}

\end{document}